\def\eqref#1{equation~\ref{#1}}
\def\1{\bm{1}}
\def\vtheta{{\bm{\theta}}}
\def\vr{{\bm{r}}}
\def\vu{{\bm{u}}}
\def\vv{{\bm{v}}}
\def\vw{{\bm{w}}}
\def\vx{{\bm{x}}}
\def\vz{{\bm{z}}}
\def\mW{{\bm{W}}}
\DeclareMathAlphabet{\mathsfit}{\encodingdefault}{\sfdefault}{m}{sl}
\SetMathAlphabet{\mathsfit}{bold}{\encodingdefault}{\sfdefault}{bx}{n}
\newcommand{\E}{\mathbb{E}}
\newcommand{\Ls}{\mathcal{L}}
\newcommand{\R}{\mathbb{R}}
\newtheorem{theorem}{Theorem}[section]
\newtheorem{corollary}[theorem]{Corollary}
\newtheorem{lemma}[theorem]{Lemma}
\newtheorem{definition}[theorem]{Definition}
\newtheorem{proposition}[theorem]{Proposition}
\newtheorem{assumption}[theorem]{Assumption}
\newcommand{\dw}{\Delta^{W}}
\newcommand{\dH}{\Delta^{H}}
\newcommand{\wass}{\mathcal{W}}
\newcommand{\dist}{\mathcal{D}}
\newcommand{\wshb}{w^{\textit{SHB}}}
\newcommand{\vwshb}{\vw^{\textit{SHB}}}
\newcommand{\mwshb}{\mW^{\textit{SHB}}}
\newcommand{\whb}{w^{\textit{HB}}}
\newcommand{\vwhb}{\vw^{\textit{HB}}}
\newcommand{\mwhb}{\mW^{\textit{HB}}}
\newcommand{\wpd}{w^{\textit{PD}}}
\newcommand{\vwpd}{\vw^{\textit{PD}}}
\newcommand{\mwpd}{\mW^{\textit{PD}}}
\newcommand{\wrd}{w^{\textit{RD}}}
\newcommand{\vwrd}{\vw^{\textit{RD}}}
\newcommand{\mwrd}{\mW^{\textit{RD}}}
\newcommand{\vthetashb}{\vtheta^{\textit{SHB}}}
\newcommand{\rhoshb}{\rho_{\textit{SHB}}}
\newcommand{\vthetahb}{\vtheta^{\textit{HB}}}
\newcommand{\rhohb}{\rho_{\textit{HB}}}
\newcommand{\vthetapd}{\vtheta^{\textit{PD}}}
\newcommand{\vrpd}{\vr^{\textit{PD}}}
\newcommand{\rhopd}{\rho_{\textit{PD}}}
\newcommand{\hPsi}{\widehat{\Psi}}
\DeclareMathOperator*{\esssup}{ess\,sup}
\title{Mean-field analysis for heavy ball methods: Dropout-stability, connectivity, and global convergence}
\author[1]{Diyuan Wu\thanks{diyuan.wu@ist.ac.at}}
\author[2]{Vyacheslav Kungurtsev\thanks{kunguvya@fel.cvut.cz}}
\author[1]{Marco Mondelli\thanks{marco.mondelli@ist.ac.at}}
\affil[1]{Institute of Science and Technology Austria (ISTA)}
\affil[2]{Czech Technical University}
\begin{document}

\maketitle

\begin{abstract}
The stochastic heavy ball method (SHB), also known as stochastic gradient descent (SGD) with Polyak's momentum, is widely used in training neural networks. However, despite the remarkable success of such algorithm in practice, its theoretical characterization remains limited. In this paper, we focus on neural networks with two and three layers and provide a rigorous understanding of the properties of the solutions found by SHB: \emph{(i)} stability after dropping out part of the neurons, \emph{(ii)} connectivity along a low-loss path, and \emph{(iii)} convergence to the global optimum.
To achieve this goal, we take a mean-field view and relate the SHB dynamics to a certain partial differential equation in the limit of large network widths. This mean-field perspective has inspired a recent line of work focusing on SGD while, in contrast, our paper considers an algorithm with momentum. More specifically, after proving existence and uniqueness of the limit differential equations, we show convergence to the global optimum and give a quantitative bound between the mean-field limit and the SHB dynamics of a finite-width network. Armed with this last bound, we are able to establish the dropout-stability and connectivity of SHB solutions.

\end{abstract}




\section{Introduction}
Neural networks are one of the most popular modeling tools in machine learning tasks. In practice, they can be effectively trained by gradient-based methods, and are usually overparameterized. However, despite the empirical success of various training algorithms, it is still not well understood why such algorithms have good convergence properties, given that the optimization landscape is known to be highly non-convex and to contain spurious local minima \citep{Auer96, SafranShamir2018,Yun2019}.
A popular line of work starting from \citep{mei2018mean,chizat2018global,rotskoff2018neural,sirignano2020mean} has proposed a new regime to analyze the behavior of stochastic gradient descent (SGD), namely, the \emph{mean-field} regime. The idea is that, as the number of neurons of the network grows, the SGD training dynamics converges to the solution of a certain Wasserstein gradient flow. This perspective has facilitated the study of architectures with multiple layers \citep{araujo2019mean,lu2020mean,nguyen2020rigorous,fang2021modeling}, and it has provided a path to rigorously understand a number of properties, including convergence towards a global optimum \citep{mei2018mean,chizat2018global,javanmard2020analysis,pham2021global}, dropout-stability and connectivity \citep{shevchenko2020landscape}, and implicit bias \citep{williams2019gradient,chizat2020implicit,shevchenko2022mean}. 


Optimization with momentum, e.g., the heavy ball method \citep{polyak1964some} or Adam \citep{kingma2014adam}, is widely used in practice \citep{sutskever2013importance}. However, all the aforementioned works consider the vanilla SGD algorithm and, in general, the theoretical understanding of algorithms with momentum has lagged behind. To address this gap, the recent paper by \cite{krichene2020global} defines a mean-field limit for the stochastic heavy ball (SHB) method -- also known as SGD with Polyak's momentum -- in a two-layer setup. In particular, the convergence to the mean-field limit is proved, as well as that the solution of the mean-field equation approaches a global optimum in the large time limit. However, \cite{krichene2020global} leave as an open problem finding a \emph{quantitative} bound between the infinite-width limit and the finite-width neural network, and the analysis is restricted to two-layer networks. 

In this paper, we define a mean-field limit for the heavy ball method applied to two-layer and three-layer networks. We show global convergence in the three-layer setting, and give quantitative bounds for networks with finite widths. This last result opens the way to providing a rigorous understanding of effects commonly observed in practice, such as the connectivity of solutions via low-loss paths \citep{DBLP:journals/corr/GoodfellowV14,garipov2018loss,draxler2018essentially,entezari2021role}. 
More specifically, our main contributions can be summarized as follows:

\begin{enumerate}
    \item We show existence and uniqueness of the mean-field differential equations capturing the SHB training for two-layer and three-layer networks (Theorem \ref{thm:exist}). 
    
    \item We give \textit{non-asymptotic} convergence results of the SHB dynamics of a finite-width neural network to the corresponding mean-field limit (Theorem \ref{thm:chaos2l} for two layers, and Theorem \ref{thm:chaos3L} for three layers). Our bounds are \emph{dimension-free} in the sense that the layer widths are not required to scale with the input dimension. 
    
    \item We discuss how SHB solutions can be connected via a simple piece-wise linear path, along which the increase in loss vanishes as the width of the network grows (Section \ref{section:drop}). This is a consequence of the stability against dropout displayed by the parameters found by SHB, which in turn follows from Theorems \ref{thm:chaos2l}-\ref{thm:chaos3L}.
    
    \item Finally, by exploiting a universal approximation property enjoyed by the activation function, we prove a global convergence result for three-layer networks, under certain assumptions on the mode of convergence of the dynamics (Theorem \ref{thm:global convergence,3L}).

\end{enumerate}

\textbf{Organization of the paper.} After discussing the related work in Section \ref{sec:related}, the details concerning the network architecture and SHB training are presented in Section \ref{section:setup}. In Section \ref{section:mflimit}, we define the mean-field limit for both two-layer and three-layer networks, and we show that such limits exist and are unique. Next, in Section \ref{section:chaos}, we prove our quantitative convergence bounds between the discrete SHB dynamics and the mean-field limit. As an application of these bounds, we discuss the dropout-stability and connectivity displayed by the SHB solutions in Section \ref{section:drop}. In Section \ref{section:gloconv}, we prove global convergence for the three-layer mean-field differential equation. In Section \ref{numerical}, we show the results of some numerical experiments that illustrate the dropout stability of the SHB solution. Finally, we discuss future directions in Section \ref{section:discussion}.

\section{Related work}\label{sec:related}


\textbf{Mean-field analysis for two-layer networks.} A quantitative convergence result of the SGD dynamics towards a mean-field limit was presented in \citep{mei2018mean}. This bound was refined to be independent of the input dimension in \citep{mei2019mean}, which also considers a setting where both layers are trained. Our approach extends this line of work to the heavy-ball method. Because of the presence of momentum, in this case the mean-field limit is described by a second-order differential equation (instead of the first-order one capturing the SGD dynamics). The mean-field limit for heavy-ball methods was first considered in \citep{krichene2020global}, which deals with a setting regularized by noise and does not provide quantitative bounds. The recent work by \cite{schuh2022global} gives non-asymptotic guarantees, but the argument crucially relies on the presence of additive Gaussian noise. 
Global optimality of SGD was proved by 
\cite{chizat2018global} in the noiseless setting, using the homogeneity of the activation and under an additional convergence assumption. This assumption is not needed in the noisy case, as the mean-field dynamics is a Wasserstein gradient flow for a strongly convex free-energy functional \citep{mei2019mean}. Convergence rates have been obtained by exploiting displacement convexity \citep{javanmard2020analysis} or via the log-Sobolev constant of the stationary measure \citep{chizat2022mean,nitanda2022convex}. For heavy ball methods, although the limiting dynamics does not yield a gradient flow structure, the rate of convergence has been studied in the noisy case \citep{hu2019mean,kazeykina2020ergodicity,schuh2022global}. For a noiseless dynamics, the explicit characterization of the convergence rate is an open problem.

\textbf{Mean-field analysis for multi-layers networks.} The case of neural networks with more than two layers presents additional challenges in regards to defining and characterizing a mean-field limit. Here, we follow the ``neuronal embedding'' framework \citep{nguyen2020rigorous,pham2021global} to define a mean-field limit for the heavy ball method in a three-layer setting. The key idea is to introduce a certain \textit{fixed} product probability space at initialization, and regard the weights as \textit{deterministic} functions which evolve during training over this fixed probability space. Other approaches have been proposed as well: \cite{sirignano2020mean} give asymptotic results for three-layer networks, which require taking the large-width limits in a specific sequential order; \cite{fang2021modeling} focus on the dynamics of the features, rather than on that of the weights; \cite{araujo2019mean} consider neural networks with more than four layers, with un-trained first and last layer, and their result depends on the degeneracy phenomenon of middle layers under i.i.d. initialization. This degeneracy consists in the fact that all the weights in the middle layers (except the second and second-to-last) remain i.i.d., hence the corresponding neurons compute the same function.  
\cite{pham2021global} show global optimality of the mean-field dynamics in the noiseless case in the large time limit, under a convergence assumption in the same spirit of \cite{chizat2018global}. While the global convergence in \cite{chizat2018global} crucially relies on the homogeneity of the activation, \cite{pham2021global} exploit a universal approximation property, similarly to \cite{lu2020mean}. For deeper networks, a key technical hurdle is due to the degeneracy phenomenon mentioned above, which prevents universal approximation. To address this issue, \cite{nguyen2020rigorous} employ a different initialization, and \cite{lu2020mean,fang2021modeling} use skip connections.

For multi-layer networks, it is also possible to define a mean-field limit beyond the parameterization considered in the aforementioned works, see \cite{hajjar2021training,chen2022functional}. In particular, \cite{hajjar2021training} consider the regime of so called  \textit{integrable parameterizations}, which is a modification of the \textit{maximum-update parameterization} proposed by \cite{yang2021tensor}. Furthermore, \cite{chen2022functional} provide a convergence result for three-layer networks, where the first layer is random and fixed.  

\textbf{Additional related work on optimization with momentum.}
A recent line of work has considered training neural networks in the neural tangent kernel (NTK) regime (or lazy regime) \citep{jacot2018neural,chizat2019lazy}. Here, the weights of the neural networks stay close to their initialization, so that the network is well approximated by its linearization and the convergence is related to the smallest eigenvalue of the NTK, see e.g.  \citep{du2018gradient,allen2019convergence,montanari2020interpolation,bombari2022memorization} and references therein. This type of analysis has been adapted to the heavy ball method by \cite{wang2021modular} and to other adaptive methods by \cite{wu2019global}. However, we remark that, unlike in the mean-field regime, neural networks are unable to perform feature learning in the NTK regime \citep{yang2021tensor}.
Beyond the training of neural networks, momentum-based stochastic gradient descent algorithms and their continuous variants have been widely studied in optimization: the continuous limit of these methods is studied in \citep{su2014differential,wibisono2016variational,shi2021understanding}, and such dynamics are known to be closely related to sampling methods such as MCMC \citep{ma2021there}.

\section{Problem setup}
\label{section:setup}

\subsection{Network architecture}
We consider neural networks with two and three layers. In the two-layer case, the network has $n$ neurons and input $\vx\in \mathbb R^D$:
\begin{equation}
    \begin{split}
\label{def:FCN2l}
    &H_1(\vx,j ; \mW) = \vw_1(j)^T \vx , \quad j \in [n], \\ 
    &f(\vx;\mW) = \frac{1}{n} \sum_{j=1}^n w_2(j) \sigma (H_1(\vx,j; \mW)).
        \end{split}
\end{equation}
Here, we use the short-hand $[n]:=\{1, \ldots, n\}$ and, for $j\in [n]$, the parameters of the $j$-th neuron are denoted by $\vtheta(j)=(\vw_1(j), w_2(j))$, with $\vw_1(j) \in \R^D$ and $w_2(j) \in \R$. In the three-layer case, the network has $n_1$ and $n_2$ neurons in the first and second hidden layer, respectively:
\begin{equation}
\begin{split}
\label{def:FCn3l}
    & H_1(\vx,j_1; \mW) = \vw_1(j_1)^T \vx , \quad  j_1 \in [n_1], \\ 
    & H_2(\vx,j_2; \mW) = \frac{1}{n_1} \sum_{j_1 =1}^{n_1} w_2(j_1, j_2) \sigma_1(H_1(\vx,j_1; \mW)) , \quad  j_1 \in [n_1],\, j_2 \in [n_2],\\
    & f(\vx;\mW) = \frac{1}{n_2} \sum_{j_2=1}^{n_2} w_3(j_2) \sigma_2 (H_2(\vx,j_2; \mW)). 
\end{split}
\end{equation}
Here, $\vx \in \R^D, \vw_1(j_1) \in \R^D, w_2(j_1,j_2) \in \R$ and $w_3(j_2) \in \R$. In both cases, we use $\mW$ to denote the collection of all parameters: in two-layer case $\mW \in \R^{n(D +1)}$, and in three-layer case $\mW \in \R^{n_1 D + n_2 n_1 + n_2}$.

\subsection{Training algorithm}

Our training data $\vz = (\vx,y)$ is generated i.i.d. from a distribution $\mathcal{D}$. 
The neural network is trained to minimize the population risk function $R(\mW) = \E_{\vz}[R(y, f(\vx;\mW))]$ via the following one-pass stochastic heavy ball (SHB) method:
\begin{align}
    \label{settings:disc SHB}
    & \mW(k+1) = \mW(k) + \beta  (\mW(k) - \mW(k-1)) - \eta \widehat{\nabla}_{\mW} R(y(k), f(\vx(k);\mW(k))),
\end{align}
where we use $\widehat{\nabla}_{\mW} R(y(k), f(\vx(k);\mW))$ to denote the scaled gradient, and the scaling factors for each parameter are specified below. This is a \emph{one-pass} method in the sense that, at each step, we sample a new data point $\vz(k)$ independent from the previous ones.The requirements on the loss function $R(y, \hat{y})$ are contained in \ref{Assump 2:1bound,cont} (see Assumption \ref{Assum:2L}) and \ref{Assump 3:1bound,cont} (see Assumption \ref{Assum:3L}) for networks with two and three layers, respectively. In particular, we require $R(y, \hat{y})$
to be differentiable with respect to its second argument and to have a bounded derivative. We also remark that, for the convergence to the mean field limit (Theorem \ref{thm:chaos2l} and \ref{thm:chaos3L}), the convexity of the loss is not required. In contrast, to obtain the global convergence result of Theorem \ref{thm:global convergence,3L}, we need to additionally assume that 
$R(y, \hat{y})$ is convex in the second argument, as mentioned in the statement of the theorem.

Let $\gamma$ be a constant which does not depend on the network width or on the step size of gradient descent. Then,
in order to define  a continuous-time ODE for the heavy ball method, we pick $\beta = (1-\gamma \varepsilon) $ and $\eta = \varepsilon^2$, so the one-pass SHB method can be equivalently written as follows:
\begin{align}\label{SHBEul}
     \mW(k+1) &= \mW(k) +  \vr(k), \notag \\
     \vr(k) &= (1-\gamma \varepsilon) (\mW(k) - \mW(k-1)) - \varepsilon^2 \widehat{\nabla}_{\mW} R(y(k), f(\vx(k);\mW(k))).
\end{align}
A similar formulation is common in the literature, 
see for example \cite[Eq. 1.2]{shi2021understanding}\footnote{Note that in \cite[Eq. 1.2]{shi2021understanding}, $\beta = \frac{1 - \gamma \epsilon}{1 + \gamma \epsilon}$, while here we let $\beta = 1-\gamma \varepsilon$. The two choices are basically the same when $\varepsilon$ is small.}. The corresponding continuous ODE, also studied in  \cite[equation 6]{krichene2020global}, is given by 
\begin{align}
    \label{settings:cont ODE}
    &\partial_t \mW(t) = \vr(t), \qquad \partial_t \vr(t) = - \gamma \vr(t) - \widehat{\nabla}_{\mW} R(\mW(t)),
\end{align} where we recall that $\widehat{\nabla}_{\mW} R(\mW(t))$ denotes the scaled gradient.
We remark that there are different ways to derive a continuous dynamics from (\ref{settings:disc SHB}), and (\ref{SHBEul}) is obtained by applying the Euler scheme based on the second-order Taylor expansion. The corresponding ODE (\ref{settings:cont ODE}) is denoted as the low-resolution ODE  by \cite{shi2021understanding}. It is an interesting and challenging task to analyze other types of ODEs associated to the SHB method, for example the high-resolution ODE proposed by \cite{shi2021understanding}. We leave this to future works. We also remark that a similar formulation of the continuous counterpart of heavy ball methods with fixed momentum is considered in \citep{kovachki2021continuous,kunin2021neural}.

We conclude this part by discussing the scaling factors for the gradient in (\ref{SHBEul}). In the two-layer case, we have
\begin{equation}
    \widehat{\nabla}_{\mW} R(y, f(\vx;\mW)) = \big((\dw_1(\vx,j; \mW))_{j\in [n]}, (\dw_2(\vx,j; \mW))_{j\in [n]}\big), 
\end{equation}
where
\begin{equation}
    \begin{split}
    \dw_2(\vx,j; \mW) & := n \partial_{w_2(j)} R(y, f(\vx;\mW))  = \partial_2 R(y, f(\vx;\mW)) \sigma(H_1(\vx,j; \mW)), \\
    \dw_1(\vx,j; \mW) & :=  n \nabla_{\vw_1(j)} R(y, f(\vx;\mW)) = \partial_2 R(y, f(\vx;\mW)) w_2(j) \sigma'(H_1(\vx,j;\mW)) \vx.
    \end{split}
\end{equation}
In the three-layer case, we have
\begin{equation}
    \widehat{\nabla}_{\mW} R(y, f(\vx;\mW)) = \big((\dw_1(\vx,j_1;\mW))_{j_1\in [n_1]}, (\dw_2(\vx,j_1,j_2;\mW))_{j_1\in [n_1], j_2\in [n_2]}, (\dw_3(\vx,j_2;\mW))_{j_2\in [n_2]}\big), 
\end{equation}
where
\begin{equation}\label{eq:bward}
    \begin{split}
    \dw_3(\vx,j_2;\mW) & :=  n_2 \partial_{w_3(j_2)} R(y, f(\vx;\mW))  = \partial_2 R(y,f(\vx;\mW)) \sigma_2 (H_2(\vx,j_2;\mW)),\\
    \dH_2(\vx,j_2;\mW) & :=n_2 \partial_{H_2(\vx,j_2;\mW)} R(y, f(\vx;\mW))  = \partial_2 R(y,f(\vx;\mW)) w_3(j_2) \sigma'_2 (H_2(\vx,j_2;\mW)),\\
    \dw_2(\vx,j_1,j_2;\mW) & := n_1 n_2 \partial_{w_2(j_1,j_2)} R(y, f(\vx;\mW))  = \dH_2(\vx,j_2;\mW)  \sigma_1(H_1(\vx,j_1;\mW)),\\
    \dH_1(\vx,j_1;\mW) & := n_1  \partial_{H_1(\vx,j_1; \mW)} R(y, f(\vx;\mW))  =  \frac{1}{n_2} \sum_{j_2=1}^{n_2} \dH_2(\vx,j_2;\mW) w_2(j_1, j_2) \sigma'_1 (H_1(\vx,j_1; \mW)),\\
   \dw_1(\vx,j_1;\mW) & :=  n_1 \nabla_{\vw_1(j_1)} R(y, f(\vx;\mW))  =  \dH_1(\vx,j_1;\mW) \vx.
    \end{split}
\end{equation}
One can interpret this as indicating that in the two-layer case, the scaling factor is $n$. In the three-layer case, the scaling factor is $n_2$ for the third layer, $n_1 \times n_2 $ for the second layer, and $n_1$ for the first layer. This choice of the scaling factors ensures that each component of the gradients is of order $1$ (i.e., independent of the layer widths $n, n_1, n_2$). In the following sections, we will use $\vw_1(t,j)$ or $\vw_1(k,j)$ to represent the weights at time $t$ or time step $k$. The same notation also applies to $w_2, w_3$.

\subsection{Assumptions}

Our assumptions are rather standard in the mean-field literature and appear e.g. in  \citep{mei2018mean,mei2019mean,nguyen2020rigorous,pham2021global}. We write such assumptions separately for networks with two and three layers.

\begin{assumption}
\label{Assum:2L}
We make the following assumptions for the training of a two-layer network:
\begin{enumerate}[label=({A}{\arabic*})]
    \item \label{Assump 2:1bound,cont} (Boundedness) There exists a universal constant $K>0$ such that $\|\sigma\|_{\infty},\|\sigma'\|_{\infty},\|\sigma''\|_{\infty} \leq K$. The data distribution $\mathcal{D}$ is such that, almost surely, $|y|, \|\vx\|_2 \leq K$. Furthermore, $ |\partial_2 R(y, f(\vx;\mW))| $ is $K$-Lipschitz continuous in $f(\vx;\mW)$ and $K$-bounded for any $\mW$.
    \item \label{Assump 2:2init} (Initialization of weights) At initialization, $\vw_1(0,j), w_2(0,j) \overset{i.i.d.}{\sim} \rho_0$, where $\rho_0$ is such that $\vw_1(0,j)$ is $K^2$-sub-Gaussian, and $|w_2(0,j)| \leq K$ almost surely. 
    
    \item \label{Assump 2:3init momen} (Initialization of momentum) We assume that $\vr(0) = \mathbf{0}$.
\end{enumerate}
\end{assumption}

In order to state the assumption for a three-layer neural network, we first define the notion of i.i.d. initialization.

\begin{definition}\label{def:iidinit}
We say that the three-layer neural network (\ref{def:FCn3l}) has  i.i.d. initialization from $\rho_0^1 \times \rho_0^2 \times \rho_0^3$, if:
\begin{align}
    w_1(0,j_1) \overset{i.i.d.}{\sim} \rho_0^1, \quad w_2(0,j_1,j_2) \overset{i.i.d.}{\sim} \rho_0^2, \quad w_3(0,j_2) \overset{i.i.d.}{\sim} \rho_0^3, \quad  j_1 \in [n_1], \quad  j_2 \in [n_2].
\end{align}
\end{definition}

In short, i.i.d. initialization means both cross-layer and in-layer independence. 

\begin{assumption}
\label{Assum:3L}
We make the following assumptions for the training of a three-layer network:
\begin{enumerate}[label=({B}{\arabic*})]
    \item \label{Assump 3:1bound,cont} (Boundedness) There exists a universal constant $K>0$ such that $\|\sigma_1\|_{\infty}$, $\|\sigma'_1\|_{\infty}$, $\|\sigma''_1\|_{\infty}$, $\|\sigma_2\|_{\infty}$, $\|\sigma'_2\|_{\infty}$, $\|\sigma''_2\|_{\infty} \leq K$. The data distribution $\mathcal{D}$ is such that $|y|, \|\vx\|_2 \leq K$ almost surely. Furthermore, $\sigma'_2(x) \neq 0$ for all $x$, $ |\partial_2 R(y, f(\vx;\mW))| $ is $K$-Lipschitz continuous with respect to the second argument and $K$-bounded for any $\mW$.
    \item \label{Assump 3:2init} (Initialization of weights) $\vw_1(0,j_1), w_2(0,j_1,j_2), w_3(0, j_2)$ have an i.i.d. initialization from $\rho_0^1 \times \rho_0^2 \times \rho_0^3$. Furthermore, $\vw_1(0,j_1)$ is $K^2$-sub-Gaussian, and $|w_2(0,j_1,j_2)|,|w_3(0,j_2)| \leq K$ almost surely. 
    \item \label{Assump 3:3init momen} (Initialization of momentum) We assume that $\vr(0) = \mathbf{0}$.
\end{enumerate}
\end{assumption}

In the initialization of two-layer networks, $\rho_0$ is not required to be a product measure, that is, $\vw_1(0,j), w_2(0,j)$ are not required to be independent from each other. In other words, we do not assume cross-layer independence, but only in-layer independence. However, in the three-layer case, we need to assume both cross-layer independence and  in-layer independence, and it turns out later that the cross-layer independence is critical for proving our global convergence result.

The requirements above 
hold e.g. for $\tanh$ or sigmoid activation function, and logistic or Huber loss.  The assumption that $\partial_2 R(y, f(\vx;\mW))$ is $K$-bounded does not hold for the square loss. However, we expect the same results proved in this paper to hold also for the square loss, provided that the assumptions are modified as in \cite{mei2019mean} (for the two-layer case) and in \cite{nguyen2020rigorous} (for the three-layer case). In fact, it suffices that $\partial_2 R(y, f(\vx;\mW))$ is bounded with high probability, and then the arguments are similar. 

Finally, we remark that the boundedness of the initialization $w_2(0,j)$ and $w_2(0,j_1,j_2),w_3(0,j_2)$ is purely to simplify the proof, which can be generalized to a $K^2$-sub-Gaussian initialization. In particular, one can bound the sub-Gaussian norm of the weights, and then the absolute value of the weights will also be bounded with high probability. 

\section{Derivation of the mean-field limit}
\label{section:mflimit}


\textbf{Two-layer networks.} The idea is that the output of the network can be viewed as an expectation over the empirical distribution of the weights, that is:
\begin{align*}
    f(\vx;\mW) & = \frac{1}{n} \sum_{j=1}^n w_2(j) \sigma_1 (\vw_1(j)^T \vx)  = \E_{\vtheta \sim \hat{\rho}_\vtheta} \sigma^{\star} (\vx;\vtheta),
\end{align*}
where  $\boldsymbol{W} = \{\boldsymbol{ \theta}(j) : j \in [n] \}$ and $\boldsymbol{ \theta}(j) = (\boldsymbol{w}_1(j), w_2(j))$.  Furthermore, we define $\sigma^{\star} (\vx;\vtheta(j)) =  w_2(j) \sigma (\vw_1(j)^T \vx)$ and  $\widehat{\rho}_\vtheta = \frac{1}{n} \sum_{j=1}^n \delta_{\vtheta(j)}$. Thus, the evolution of the parameters $\vtheta(t)$ according to (\ref{settings:cont ODE}) can be viewed as the evolution of $\widehat{\rho}_\vtheta(t)$ according to a certain distributional dynamics induced  by (\ref{settings:cont ODE}).

Since we assume i.i.d. initialization, as the number of neurons $n \xrightarrow{} \infty$, we expect that  $\widehat{\rho}_\vtheta(0) \xrightarrow[]{} \rho_0$. In this limit, the distributional dynamics induced  by (\ref{settings:cont ODE}), can be described by a certain ODE, with initial condition $\rho_0$. Let
\begin{equation}\label{equ:defMF,2L}
\begin{split}
    & f(\vx;\rho) := \E_{\vtheta \sim \rho} \sigma^{\star}(\vx;\vtheta),\qquad  R(\vz;\rho) := R(y,f(\vx;\rho)), \qquad 
    R(\rho) := \E_{\vz} R(y,f(\vx;\rho)), \\
    &\widehat{\Psi}(\vz,\vtheta;\rho) := \frac{\delta R(\vz,\rho)}{\delta \rho} (\vtheta), \qquad 
    \Psi(\vtheta;\rho) := \frac{\delta R(\rho)}{\delta \rho} (\vtheta)=\E_{\vz} \hPsi(\vz,\vtheta;\rho). 
\end{split}    
\end{equation}
Here,   $f( \cdot; \rho) : \mathbb{R}^{D} \rightarrow \mathbb{R}$; $R(\cdot ; \rho) : \mathbb{R}^{D+1} \rightarrow \mathbb{R}$, $R( \cdot) : \mathcal{P}_2(\mathbb{R}^{D+1}) \rightarrow \mathbb{R}$, where $\mathcal{P}_2(\mathbb{R}^{D+1}) $ denotes the space of probability measures on $\mathbb{R}^{D+1}$ with finite second moment; $\hat{\Psi}(\cdot,\cdot;\rho):  \mathbb{R}^{D+1} \times \mathbb{R}^{D+1} \rightarrow \mathbb{R}$; $\Psi(\cdot;\rho):  \mathbb{R}^{D+1} \rightarrow \mathbb{R}$.

Then, we define the mean-field ODE associated to the heavy ball method as 
\begin{equation}
    \label{def:MFODE,2,noiseless}
    d \vtheta(t) = \vr(t) dt, \qquad
    d \vr(t) = \left( - \gamma \vr(t) - \nabla_{\vtheta}  \Psi(\vtheta(t);\rho^{\vtheta}(t)) \right) \, dt.
\end{equation}



\textbf{Three-layer networks.} Among the various approaches to define a mean-field limit for multi-layer networks, we follow the ``neuronal embedding'' framework proposed in \citep{nguyen2020rigorous,pham2021global} to capture the dynamics of SGD training. We briefly summarize the key ideas, and then discuss how to obtain the mean-field limit for the stochastic heavy ball method.

 Consider a three-layer neural network of the form (\ref{def:FCn3l}) with weights $\vw_1(0,j_1), w_2(0,j_1,j_2), w_3(0, j_2)$ obtained via an i.i.d. initialization from $\rho_1 \times \rho_2 \times \rho_3$, according to Definition \ref{def:iidinit}. Then, in \cite[Proposition 7]{pham2021global}, it is proved that there exists a product probability space  $(\Omega_1 \times \Omega_2, \mathcal{F}_1 \times \mathcal{F}_2, \mathbb{P}_1 \times \mathbb{P}_2 )$ and functions $\vw_1(0, \cdot) : \Omega_1 \xrightarrow{} \R^{D}$, $w_2(0, \cdot, \cdot) : \Omega_1 \times \Omega_2 \xrightarrow{} \R$, $w_3(0, \cdot) : \Omega_2 \xrightarrow{} \R$ such that, for any $n_1,n_2$,
\begin{align*}
    &\{ \vw_1(0,C_1(j_1)), w_2(0,C_1(j_1),C_2(j_2)), w_3(0, C_2(j_2)), \mbox{ for } j_1 \in [n_1],  j_2 \in [n_2]\} \\   & \hspace{10em} \overset{d}{=} \{ \vw_1(0,j_1), w_2(0,j_1,j_2), w_3(0, j_2) ,\mbox{ for } j_1 \in [n_1],  j_2 \in [n_2] \},
\end{align*}
where $\overset{d}{=}$ denotes equality in distribution, $C_1(j_1)\overset{i.i.d.}{\sim} \mathbb{P}_1$, $C_2(j_2) \overset{i.i.d.}{\sim}   \mathbb{P}_2$, for $j_1 \in [n_1], j_2 \in [n_2]$ . Here, the tuple $\left\{(\Omega_1 \times \Omega_2, \mathcal{F}_1 \times \mathcal{F}_2, \mathbb{P}_1 \times \mathbb{P}_2 ), w_1(0, \cdot), w_2(0, \cdot, \cdot),w_3(0, \cdot)  \right\}$ is called a \textit{neuronal embedding}.
With a slight abuse of notation, we use $\vw_1, w_2, w_3$ to denote also the functions $\vw_1(0, \cdot)$, $w_2(0, \cdot, \cdot)$, $w_3(0, \cdot)$. In later sections, we refer to the functions when we write $\vw_1(0, c_1)$, $w_2(0, c_1,c_2)$, $w_3(0, c_2)$, while we refer to the weights of the neural network when we write $\vw_1(0, j_1)$, $w_2(0, j_1,j_2)$, $w_3(0, j_2) $.

At this point, we are ready to define the  mean-field 
ODE for the heavy ball method. This ODE tracks the functions $\vw_1(0, \cdot), w_2(0, \cdot, \cdot),w_3(0, \cdot)$:
\begin{equation}
    \begin{split}
\label{def:MFODE,3}
    d w_3(t,c_2) &= r_3(t,c_2) \, dt,  \qquad \hspace{2.7em}
    d r_3(t,c_2) = (-\gamma r_3(t, c_2) - \E_{\vz}\dw_3(\vz,c_2;\mW(t))) \, dt, \\
    d w_2(t,c_1, c_2) &= r_2(t,c_1,c_2) \, dt , \qquad
    d r_2(t,c_1,c_2) = (-\gamma r_2(t,c_1, c_2) - \E_{\vz}\dw_2(\vz,c_1,c_2;\mW(t))) dt ,\\
    d \vw_1(t,c_1) &= \vr_1(t,c_1) \, dt ,\qquad \hspace{2.7em}
    d \vr_1(t,c_1) = (-\gamma \vr_1(t, c_1) - \E_{\vz}\dw_1(\vz,c_1;\mW(t))) \, dt ,
    \end{split}
\end{equation}
where $c_1 \in \Omega_1,c_2 \in \Omega_2$ are dummy variables and $\mW(t)$ refers to the collection of weights $(\vw_1(t), w_2(t), w_3(t))$. The output of the neural network under the mean-field limit (\ref{def:MFODE,3}) is described via the following forward pass:
\begin{equation}\label{equ:forward,MF,3L}
\begin{split}
    & H_1(\vx,c_1;\mW(t)) = \vw_1(t,c_1)^T \vx , \quad c_1\in \Omega_1, \\
    & H_2(\vx,c_2;\mW(t)) = \E_{C_1 \sim \mathbb{P}_1} w_2(t,C_1, c_2) \sigma_1(H_1(\vx,C_1;\mW(t) )) , \quad  c_2 \in \Omega_2,\\
    & f(\vx;\mW(t)) = \E_{C_2 \sim \mathbb{P}_2} w_3(t,C_2) \sigma_2 (H_2(\vx,C_2;\mW(t))).
\end{split}
\end{equation}
Furthermore, the quantities $\dw_3, \dw_2, \dw_1$ appearing in (\ref{def:MFODE,3}) are described via the backward pass:
\begin{equation}\label{equ:backward,MF,3L}
    \begin{split}
        \dw_3(\vz,c_2; \mW(t)) & :=  \partial_2 R(y;f(\vx;\mW(t))) \sigma_2 (H_2(\vx,c_2; \mW(t))), \\
    \dH_2(\vz,c_2; \mW(t)) & : = \partial_2 R(y;f(\vx;\mW(t))) w_3(t,c_2) \sigma'_2 (H_2(\vx,c_2; \mW(t))),\\
    \dw_2(\vz,c_1,c_2; \mW(t)) & := \dH_2(\vz,c_2; \mW(t))  \sigma_1(H_1(\vx,c_1; \mW(t))),\\
    \dH_1(\vz,c_1; \mW(t)) &  :=   \E_{C_2}\dH_2(\vx,C_2; \mW(t)) w_2(t,c_1, C_2) \sigma'_1 (H_1(\vx,c_1; \mW(t))),\\
   \dw_1(\vz,c_1; \mW(t)) & : =  \dH_1(\vx,c_1; \mW(t)) \vx.
    \end{split}
\end{equation}
For convenience, we will also use the lighter notation $H_1(t,\vx,c_1)$, $H_2(t,\vx,c_2)$, $\dw_3(t,\vz,c_2)$, $\dH_2(t,\vz,c_2)$, $\dw_2(t,\vz,c_1, c_2)$, $\dH_1(t,\vz,c_1)$, $\dw_1(t,\vz,c_1)$ to denote the quantities $H_1(\vx,c_1;\mW(t))$, $H_2(\vx,c_1;\mW(t))$, $\dw_3(\vz,c_2; \mW(t))$, $\dH_2(\vz,c_2; \mW(t))$,  $\dw_2(\vz,c_1, c_2; \mW(t))$, $\dH_1(\vz,c_1; \mW(t))$, $\dw_1(\vz,c_1; \mW(t))$, respectively.

We note that the neuronal embedding framework can recover the distributional dynamics for two-layer networks as a special case (see \cite[Corollary 22]{nguyen2020rigorous} for more details). It is also possible to define the mean-field limit for three-layer neural networks directly as a distributional dynamics \citep{araujo2019mean,sirignano2020mean}, although this approach may require additional assumptions  (namely, first and last layer not trained, see \cite{araujo2019mean}).


At this point, we are ready to prove the existence and uniqueness of the mean-field differential equations.

\begin{theorem}
\label{thm:exist}
For any $t < \infty$, we have:
\it{(i)} Under Assumption \ref{Assum:2L},  there exists a unique solution of the mean-field PDE  (\ref{def:MFODE,2,noiseless}).\\ \it{(ii)} Under Assumption \ref{Assum:3L}, there exists a unique solution of the mean-field ODE (\ref{def:MFODE,3}).
\end{theorem}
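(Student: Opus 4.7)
The plan is to apply a Banach fixed-point (Picard) argument on the space of trajectories, as is standard for McKean--Vlasov-type equations. For both (\ref{def:MFODE,2,noiseless}) and (\ref{def:MFODE,3}), the mean-field system is nonlocal in that the drift on the state $(\vtheta(t), \vr(t))$ depends on the current law of the process (respectively, on the $c$-parametrized functions on the neuronal embedding space). I would first rewrite each system as an equivalent integral equation, then define a self-map $\mathcal{T}$ on a Banach space of continuous trajectories over a short interval $[0,\tau]$ valued in a suitable $L^2$ space (over the initial data in the two-layer case, over the embedding product space in the three-layer case), show that $\mathcal{T}$ is a contraction for $\tau$ small enough, and finally bootstrap the local solution up to any finite $t$ using an a~priori bound showing that the trajectories grow at most polynomially in time.

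\textbf{Two-layer case.} I would fix a probability space $(\Omega, \gF, \mathbb{P})$ carrying $\vtheta(0) \sim \rho_0$ and consider pathwise trajectories $(\vtheta(\cdot), \vr(\cdot)) \in C([0,\tau]; L^2(\Omega))$ started from $(\vtheta(0), \mathbf{0})$ by assumption \ref{Assump 2:3init momen}. Given a candidate trajectory, its law at time $t$ defines $\rho^\vtheta(t)$, from which the drift $-\gamma \vr(t) - \nabla_\vtheta \Psi(\vtheta(t); \rho^\vtheta(t))$ is computed; integrating the linear part then produces $\mathcal{T}(\vtheta,\vr)$. By \ref{Assump 2:1bound,cont}--\ref{Assump 2:2init}, the kernel $\sigma^\star(\vx;\vtheta) = w_2 \sigma(\vw_1^T\vx)$ and its $\vtheta$-derivatives, together with $\partial_2 R(y,\hat y)$, are bounded and Lipschitz on any bounded set of weights; since the drift in (\ref{def:MFODE,2,noiseless}) is itself bounded, a Grönwall estimate shows that $\|(\vtheta(t),\vr(t))\|_{L^2}$ grows at most polynomially in $t$. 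Combining these ingredients yields an inequality of the form $\|\mathcal{T}(\vtheta_1,\vr_1) - \mathcal{T}(\vtheta_2,\vr_2)\|_{C([0,\tau];L^2)} \le C(\tau) \|(\vtheta_1,\vr_1) - (\vtheta_2,\vr_2)\|_{C([0,\tau];L^2)}$ with $C(\tau) \to 0$ as $\tau \to 0$. Contraction mapping then gives a unique local solution, and the polynomial a~priori bound allows iteration up to any finite $t$.

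\textbf{Three-layer case.} The argument is conceptually identical but technically more delicate, because (\ref{def:MFODE,3}) is a coupled system of three families of ODEs indexed by $c_1\in\Omega_1$, $(c_1,c_2)\in\Omega_1\times\Omega_2$, and $c_2\in\Omega_2$, with drifts defined by nested forward (\ref{equ:forward,MF,3L}) and backward (\ref{equ:backward,MF,3L}) passes over the embedding space. I would work in the product space of continuous trajectories valued in $L^2(\Omega_1) \times L^2(\Omega_1\times\Omega_2) \times L^2(\Omega_2)$ for the weights, together with their momenta, and define $\mathcal{T}$ by computing, in order, $H_1, H_2, f$, then $\dw_3, \dH_2, \dw_2, \dH_1, \dw_1$, and finally integrating the linear momentum equations. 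The main obstacle, and the step on which most of the effort will go, is the Lipschitz estimate of these forward/backward quantities in terms of the weight trajectories in the appropriate $L^2$ norms: since $H_2$ depends on $w_2 \sigma_1(H_1)$ and $\dH_1$ on $\dH_2\, w_2\, \sigma_1'(H_1)$, the Lipschitz constants compose multiplicatively through the layers, but the boundedness of $\sigma_i, \sigma_i', \sigma_i''$, of $|\partial_2 R|$, and of $|w_2(t)|, |w_3(t)|$ (propagated from initialization via a Grönwall argument on the momentum equations) keeps all constants finite on $[0,t]$. Once this estimate is available, the same contraction-plus-bootstrap argument as in the two-layer case delivers existence and uniqueness of the solution on every $[0,t]$ with $t<\infty$.
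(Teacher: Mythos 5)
Your proposal follows essentially the same route as the paper's Appendix on existence and uniqueness: rewrite the dynamics in integral form, set up a Picard/Banach fixed-point map on a complete space of trajectories, prove a short-time contraction using the a priori boundedness and Lipschitz estimates on $\Psi$ (resp.\ the forward/backward quantities $H_2, f, \dw_i, \dH_i$) together with a Gronwall-type bound, and then extend to any finite horizon by iterating over subintervals. The only differences are cosmetic --- you work with $L^2$ trajectory norms over a coupled probability space where the paper uses the $\sup$-Wasserstein metric on $\mathcal{C}([0,T],\mathcal{P}_2)$ for two layers and an $\esssup$-in-$(c_1,c_2)$ metric on the neuronal embedding for three layers, and the paper's a priori weight bounds are exponential rather than polynomial in $T$ --- neither of which changes the argument.
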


The proof of Theorem \ref{thm:exist} follows from the analysis of a Picard type of iteration \citep{10.1007/BFb0085169}, and it is deferred to Appendix \ref{section:app,exist}.
We note that \cite{krichene2020global} consider a mean-field limit for two-layer networks in which an additional Brownian motion is applied to the momentum term. This extra noise term -- together with the additional assumption (A5) (see p. 8 of \cite{krichene2020global}) -- allows them to prove the existence and uniqueness of the mean-field limit for any $t \in [0, \infty]$. This includes the existence and uniqueness of the limiting point (for $t=\infty$). In contrast, Theorem \ref{thm:exist} proves the existence and uniqueness of the solution for any finite $t$, and we do not have guarantees on the limiting point.


\section{Convergence to the mean-field limit}
\label{section:chaos}

\subsection{Two-layer networks}

We recall that the mean-field ODE is defined in (\ref{def:MFODE,2,noiseless}), and the SHB dynamics can be expressed as
\begin{align}\label{eq:SHBdef}
    \vthetashb(k+1,j) = \vthetashb(k,j) + (1- \gamma \varepsilon) (\vthetashb(k,j) - \vthetashb(k-1,j)) - \varepsilon^2 \nabla_{\vtheta} \hPsi(\textcolor{violet}{\vz(k)}, \vthetashb(k,j) ; \rhoshb^{\vtheta}(k)),
\end{align}
where $\vthetashb(k,j)$ denotes the parameter associated to the $j$-th neuron at step $k$, $\hPsi$ is defined in (\ref{equ:defMF,2L}), and $\rhoshb^{\vtheta}(k) = \frac{1}{n}\sum_{j=1}^n \delta_{\vthetashb(k,j)} $ denotes the empirical distribution of the parameters $\{\vthetashb(k,j)\}_{j\in [n]}$. We couple the mean-field ODE (\ref{def:MFODE,2,noiseless}) and the SHB dynamics (\ref{eq:SHBdef}), in the sense that they share the same initialization: $\vtheta(0) \sim \rho^{\vtheta}(0)$ and $\vthetashb(0,j) \overset{i.i.d.}{\sim} \rho^{\vtheta}(0)$.
Let us define the following distance metric that measures the difference between the mean-field dynamics and the SHB dynamics: 
\begin{align}
\label{def:dist,3L}
    \dist_T(\vtheta,\vthetashb) = \max_{j \in [n]} \sup_{t \in [0,T]}  \|\vthetashb(\lfloor t/\varepsilon \rfloor, j) - \vtheta(t)\|_2 .
\end{align}


\begin{theorem}
\label{thm:chaos2l}
Let Assumption \ref{Assum:2L} hold. Consider the mean-field ODE (\ref{def:MFODE,2,noiseless}), the SHB dynamics (\ref{eq:SHBdef}) and the distance metric (\ref{def:dist,3L}). Then, with probability at least $1 - \exp(-\delta^2)$,
\begin{align}\label{eq:MFconv2}
     \dist_T(\vtheta, \vthetashb) \leq  K(\gamma,T) \left(\frac{\left( \sqrt{\log n }+ \delta\right)}{\sqrt{n}}  + \sqrt{\varepsilon} (\sqrt{D + \log n} + \delta) \right),
\end{align}
where $K(\gamma,T)$ is a constant depending only on $\gamma,T$.
\end{theorem}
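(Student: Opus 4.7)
The plan is to introduce, for each $j \in [n]$, an auxiliary process $(\tilde{\vtheta}(\cdot, j), \tilde{\vr}(\cdot, j))$ that solves the mean-field ODE (\ref{def:MFODE,2,noiseless}) with initial condition $\tilde{\vtheta}(0, j) = \vthetashb(0, j)$ and $\tilde{\vr}(0, j) = \vzero$. By \ref{Assump 2:2init}--\ref{Assump 2:3init momen} and the uniqueness from Theorem~\ref{thm:exist}, the trajectories $\{(\tilde{\vtheta}(\cdot, j), \tilde{\vr}(\cdot, j))\}_{j \in [n]}$ are i.i.d.\ with time-$t$ marginal equal to the law of $(\vtheta(t), \vr(t))$, so the coupling convention in the theorem lets me identify $\vtheta(t)$ with $\tilde{\vtheta}(t, j)$ particle-wise. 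I write $\tilde{\rho}^\vtheta(t) = \frac{1}{n}\sum_j \delta_{\tilde{\vtheta}(t, j)}$ for the empirical measure of the auxiliary particles, which concentrates on $\rho^\vtheta(t)$.

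\textbf{A priori bounds and Lipschitz estimates.} Under \ref{Assump 2:1bound,cont}, the scaled gradient satisfies $\|\nabla_\vtheta \hPsi(\vz, \vtheta; \rho)\|_2 \leq C(1 + \|\vtheta\|_2)$. Applied to the damped second-order recursion (\ref{SHBEul}), a discrete Gronwall argument that uses the damping $\gamma > 0$ yields uniform bounds $\|\vthetashb(k, j)\|_2 \leq K(\gamma, T)(1 + \|\vw_1(0, j)\|_2)$ on $k \leq T/\varepsilon$, together with analogous bounds on the rescaled momentum, and similarly for $\tilde{\vtheta}(t, j)$ from (\ref{def:MFODE,2,noiseless}). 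Sub-Gaussianity of $\vw_1(0, j)$ from \ref{Assump 2:2init} and a union bound over $j \in [n]$ then give a common uniform radius of order $K(\gamma, T)(\sqrt{D + \log n} + \delta)$ with probability at least $1 - \exp(-\delta^2)$. On this bounded region, the explicit formulas for the components of the scaled gradient show that $\nabla_\vtheta \hPsi(\vz, \vtheta; \rho)$ is Lipschitz in $\vtheta$ and in $\rho$ (for the $1$-Wasserstein distance), with constants controlled by $K(\gamma, T)$.

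\textbf{Error decomposition and concentration.} Plugging the Euler scheme (\ref{SHBEul}) into the integrated ODE for $\tilde{\vtheta}$ and summing, I write for each $j$
\begin{equation*}
    \vthetashb(k, j) - \tilde{\vtheta}(k\varepsilon, j) \;=\; E^A_k + E^B_k(j) + E^C_k(j) + E^D_k(j),
\end{equation*}
where $E^A_k$ is the deterministic Euler/Taylor discretization error for the low-resolution second-order ODE, of size $O(\varepsilon T)$; $E^B_k(j)$ is the stochastic-gradient martingale built from the increments $\nabla_\vtheta\hPsi(\vz(k'), \tilde{\vtheta}(k'\varepsilon, j); \rho^\vtheta(k'\varepsilon)) - \nabla_\vtheta\Psi(\tilde{\vtheta}(k'\varepsilon, j); \rho^\vtheta(k'\varepsilon))$ summed with the momentum kernel $\varepsilon^2 \sum_{k''\geq k'}(1-\gamma\varepsilon)^{k''-k'}$; $E^C_k(j)$ is the self-referential Lipschitz term coupling $\vthetashb$ to $\tilde{\vtheta}$ through $\rhoshb$ vs.\ $\tilde{\rho}^\vtheta$; and $E^D_k(j)$ is the Monte Carlo gap $\nabla_\vtheta\Psi(\tilde{\vtheta};\tilde{\rho}^\vtheta) - \nabla_\vtheta\Psi(\tilde{\vtheta}; \rho^\vtheta)$. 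An analogous decomposition applies to the momentum increment. A vector Azuma--Hoeffding inequality applied to $E^B$, after a union bound over $j \leq n$ and $k \leq T/\varepsilon$ and collapsing the momentum kernel (which contributes a factor $1/(\gamma\varepsilon)$), gives $\|E^B_k(j)\|_2 \leq K(\gamma, T)\sqrt{\varepsilon}(\sqrt{D + \log n} + \delta)$; a uniform empirical-process bound yields $\|E^D_k(j)\|_2 \leq K(\gamma, T)(\sqrt{\log n} + \delta)/\sqrt{n}$.

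\textbf{Gronwall closing and main obstacle.} Term $E^C_k(j)$ is bounded via the Lipschitz estimate by a constant times $\max_{j'} \sup_{k' \leq k}\|\vthetashb(k', j') - \tilde{\vtheta}(k'\varepsilon, j')\|_2$, using the elementary inequality $W_1(\rhoshb(k), \tilde{\rho}^\vtheta(k\varepsilon)) \leq \max_{j'} \|\vthetashb(k, j') - \tilde{\vtheta}(k\varepsilon, j')\|_2$. Feeding the decomposition into the coupled position/momentum recursion and running a discrete Gronwall inequality on this two-component system then yields exactly the bound (\ref{eq:MFconv2}). The hardest part of the argument is obtaining the dimension-free $(\sqrt{\log n}+\delta)/\sqrt{n}$: a naive covering of the $(D+1)$-dimensional parameter region would inject an extra $\sqrt{D}$. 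To avoid this, one exploits the low-rank form
\begin{equation*}
\nabla_\vtheta\hPsi(\vz, \vtheta; \rho) \;=\; \partial_2 R(y, f(\vx; \rho))\,\bigl(\,w_2\,\sigma'(\vw_1^\top \vx)\,\vx,\ \sigma(\vw_1^\top\vx)\,\bigr),
\end{equation*}
so that $\tilde{\rho}^\vtheta$ enters $E^D$ only through the scalar quantity $f(\vx; \tilde{\rho}^\vtheta)$, whose one-dimensional $O(1/\sqrt{n})$ concentration around $f(\vx; \rho^\vtheta)$ can be controlled uniformly in the bounded, sub-Gaussian $\vx$ without paying a $\sqrt{D}$ factor. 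This is the second-order analogue of the dimension-free SGD refinement of \cite{mei2019mean}, and porting it to the heavy-ball momentum kernel — together with bookkeeping the damping $\gamma$ in the Gronwall constants — is the main technical work.
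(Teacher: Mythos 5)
Your proposal is correct and follows essentially the same route as the paper: your four error terms $E^A, E^B, E^C, E^D$ correspond exactly to the paper's three intermediate comparisons (mean-field ODE vs.\ particle dynamics via scalar concentration of $f(\vx;\hat\rho)$ around $f(\vx;\rho)$, particle dynamics vs.\ heavy ball via Taylor/Euler discretization with the momentum kernel $c_l^{(k)}\le \varepsilon/\gamma$, and heavy ball vs.\ SHB via Azuma--Hoeffding) plus the second-order Gronwall closing, merely collapsed into a single decomposition instead of a triangle inequality through two auxiliary dynamics. The only cosmetic difference is that your a-priori step localizes to a radius of order $\sqrt{D+\log n}$, whereas the paper observes that the gradient and its Lipschitz constants depend on $\vw_1$ only through the bounded quantities $\sigma(\vw_1^\top\vx),\sigma'(\vw_1^\top\vx)$ and on $|w_2|$, which is uniformly bounded by $K_2(\gamma,T)$, so no localization is needed; your conclusion that the constants are $K(\gamma,T)$ is nonetheless correct.
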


We remark that the RHS of (\ref{eq:MFconv2}) is also an upper bound on $\sup_{t \in [0,T]} \wass_2(\rho^{\vtheta}(t),\rhoshb^{\vtheta}(\lfloor t/\varepsilon \rfloor))$, which follows directly from the definition of the Wasserstein $\wass_2$ distance.




Theorem \ref{thm:chaos2l} gives a quantitative characterization of the approximation error between the mean-field limit and the stochastic heavy ball dynamics. In particular, it shows that this approximation error scales roughly as $\sqrt{\log n/n}+\sqrt{\varepsilon(D+\log n)}$, i.e., it vanishes as the number of neurons $n$ grows large and the step size  $\varepsilon = o(1/\sqrt{D+\log n})$. We remark that the bound in (\ref{eq:MFconv2}) is \emph{dimension-free} in the sense that $n$ does not need to scale with the input dimension $D$. This is aligned with the bound provided for SGD by \cite{mei2019mean} which is also dimension-free.Note that the order of the upper bound $O(\frac{1}{\sqrt{n}})$ is tight due to large deviation theory. This implies that it is not possible to improve the corresponding dropout stability and connectivity guarantees (see Section \ref{section:drop}), in terms of the number of neurons/input dimension. This means that the behavior of SGD and SHB is similar, as both algorithms are optimal in this regard. The constant $K(\gamma,T)$ scales rather poorly in $T$, i.e., $K(\gamma,T) = O(e^{e^T})$. This is a common shortcoming for ``propagation of chaos''-type arguments: for example, in \cite{mei2018mean,mei2019mean}, the scaling of the bound in $T$ is $O(e^T)$. An interesting open problem is to improve such dependence, e.g., by using ideas from  \cite{schuh2022global}.

To the best of our knowledge, Theorem \ref{thm:chaos2l} provides the first consistency guarantee of the mean-field limit (\ref{def:MFODE,2,noiseless}). In the noisy case, a similar (although non-quantitative) guarantee is proved by \cite{krichene2020global}. The injection of noise in the dynamics often simplifies the analysis and it allows to prove stronger results (e.g., existence and uniqueness of the limit). However, the noiseless dynamics is particularly interesting, since Brownian noise is typically \emph{not} added in practice while training. 

At the technical level, in order to deal with the second order dynamics arising from the heavy ball method, we exploit a second order Gronwall's lemma (cf. Lemma \ref{lemma:Pachatte})  and use the Euler scheme to discretize the continuous dynamics.  The detailed proof is provided in Appendix \ref{sec:conv2}.


\subsection{Three-layer networks}

We recall that the mean-field ODE is defined in (\ref{def:MFODE,3}), 
and the SHB dynamics can be expressed as
\begin{equation}\label{eq:SHB3defnew}
    \begin{split}
   \wshb_3(k+1,j_2) &= \wshb_3(0,j_2) + (1- \gamma \varepsilon) (\wshb_3(k,j_2)- \wshb_3(k-1,j_2)) - \varepsilon^2 \dw_3(\vz(k),j_2;\mwshb(k)),\\
   \wshb_2(k+1,j_1,j_2) &= \wshb_2(0,j_1,j_2) + (1- \gamma \varepsilon) (\wshb_2(k,j_1,j_2)- \wshb_2(k-1,j_1,j_2)) \\
   &\hspace{18em}- \varepsilon^2 \dw_2(\vz(k),j_1,j_2;\mwshb(k)), \\
    \vwshb_1(k+1,j_1) &= \vwshb_1(0,j_1) + (1- \gamma \varepsilon) (\vwshb_1(k,j_1)- \vwshb_1(k-1,j_1)) - \varepsilon^2 \dw_1(\vz(k),j_1;\mwshb(k)),
    \end{split}
\end{equation}
where $\mwshb(k)=\big((\vwshb_1(k, j_1))_{j_1\in [n_1]}, (\wshb_2(k, j_1, j_2))_{j_1\in [n_1], j_2\in [n_2]}, (\wshb_3(k, j_2))_{j_2\in [n_2]}\big)$, $\vz(k)$ is the data point sampled at time step $k$, and $\dw_1, \dw_2, \dw_3$ are defined in (\ref{eq:bward}).

Before stating our result, let us discuss how to couple the SHB dynamics (\ref{eq:SHB3defnew}) and the mean-field ODE (\ref{def:MFODE,3}). First, 
sample a finite neural network w.r.t. the neuronal embedding, i.e., $C_1(j_1)\overset{i.i.d.}{\sim} \mathbb{P}_1$, $C_2(j_2) \overset{i.i.d.}{\sim}   \mathbb{P}_2$, for $j_1 \in [n_1],  j_2 \in [n_2]$ and $\vw_1(0, \cdot), w_2(0, \cdot, \cdot), w_3(0,\cdot)$.
Given $\vw_1(0, \cdot), w_2(0, \cdot, \cdot), w_3(0,\cdot)$, let the mean-field ODE (\ref{def:MFODE,3}) evolve, thus obtaining $\vw_1(t, \cdot), w_2(t, \cdot, \cdot), w_3(t,\cdot)$. Next, initialize the weights corresponding to the SHB evolution according to the initialization of the mean-field ODE, i.e., $\vwshb_1(0,j_1) = \vw_1(0, C_1(j_1))$, $\wshb_2(0,j_1,j_2) = w_2(0, C_1(j_1),C_2(j_2))$ and $\wshb_3(0,j_2) = w_3(0, C_2(j_2))$, and let them evolve according to SHB dynamics (\ref{eq:SHB3defnew}), thus obtaining $\vwshb_1(k, j_1), \wshb_2(k, j_1, j_2), \wshb_3(k,j_2)$. Finally, we define the following distance metric that measures the difference between the mean-field and the SHB dynamics: 
\begin{equation}
    \begin{split}\label{eq:d3}
    \dist_T(\mW, \mwshb) = \max_{j_1\in [n_1], j_2\in [n_2]} \sup_{t \in [0,T]}\max \{ &\|\vw_1(t,C_1(j_1)) -  \vwshb_1(\lfloor t/\varepsilon \rfloor,j_1 )\|_2,  \\
    & |w_2(t,C_1(j_1),C_2(j_2)) - \wshb_2(\lfloor t/\varepsilon \rfloor,j_1,j_2)|, \\
     & |w_3(t,C_2(j_2)) - \wshb_3(\lfloor t/\varepsilon \rfloor,j_2)| \}.
    \end{split}
\end{equation}


\begin{theorem}\label{thm:chaos3L}
Let Assumption \ref{Assum:3L} hold. Consider the coupled the SHB dynamics (\ref{eq:SHB3defnew}) and mean-field ODE (\ref{def:MFODE,3}), and the distance metric (\ref{eq:d3}). Then, with probability at least $1-\exp(-\delta^2)$,
\begin{align}\label{eq:MFconv3}
     \dist_T(\mW, \mwshb) \leq  K(\gamma,T) \left(\frac{\left( \sqrt{\log n_{\max} }+ \delta\right)}{\sqrt{n_{\min}}}  + \sqrt{\varepsilon} (\sqrt{D+ \log n_1 n_2} + \delta) \right),
\end{align}
where $n_{\max}=\max\{n_1, n_2\}$, $n_{\min}=\min\{n_1, n_2\}$, and $K(\gamma,T)$ is a constant depending only on $\gamma,T$.
\end{theorem}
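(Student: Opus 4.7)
The plan is to mirror the two-layer argument of Theorem~\ref{thm:chaos2l} but to carry out the bookkeeping layer by layer, handling the extra cross-layer averages introduced by the neuronal embedding. Start from the coupling described just before the theorem: initialize $\mwshb(0)$ by sampling $C_1(j_1)\sim \mathbb{P}_1$, $C_2(j_2)\sim \mathbb{P}_2$ and setting $\vwshb_1(0,j_1)=\vw_1(0,C_1(j_1))$, $\wshb_2(0,j_1,j_2)=w_2(0,C_1(j_1),C_2(j_2))$, $\wshb_3(0,j_2)=w_3(0,C_2(j_2))$, and define per-neuron error quantities $\Delta_1^{(k)}(j_1)=\|\vwshb_1(k,j_1)-\vw_1(k\varepsilon,C_1(j_1))\|_2$, $\Delta_2^{(k)}(j_1,j_2)=|\wshb_2(k,j_1,j_2)-w_2(k\varepsilon,C_1(j_1),C_2(j_2))|$, $\Delta_3^{(k)}(j_2)=|\wshb_3(k,j_2)-w_3(k\varepsilon,C_2(j_2))|$, together with analogous errors for the momentum variables $\vrshb_1,\rshb_2,\rshb_3$. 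The target is to bound $E_k:=\max_{j_1,j_2}(\Delta_1^{(k)}+\Delta_2^{(k)}+\Delta_3^{(k)}+\text{momentum})$ uniformly over $k\leq T/\varepsilon$.

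The next step is an a priori $L^\infty$ bound on the SHB weights and momenta on $[0,T]$, proved by induction in $k$ using \ref{Assump 3:1bound,cont}: since every $\sigma_i,\sigma_i',\partial_2 R$ and the inputs are bounded by $K$, each $\dw_i$ in \eqref{eq:bward} is bounded, so \eqref{eq:SHB3defnew} gives $\|\vwshb_1(k,j_1)\|,|\wshb_2(k,j_1,j_2)|,|\wshb_3(k,j_2)|\le M(T)$ for some $M(T)$ independent of $n_1,n_2,\varepsilon$. The same holds for the mean-field trajectory from \eqref{def:MFODE,3}. With this, the maps $\mW\mapsto \dw_i$ are Lipschitz-in-weights uniformly on $[0,T]$.

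Now I would expand the one-step error $E_{k+1}-E_k$ into three pieces. First, a \emph{discretization piece}: for a fixed trajectory, replacing the continuous ODE \eqref{def:MFODE,3} over $[k\varepsilon,(k+1)\varepsilon]$ by its Euler approximation using a second-order Taylor expansion contributes an error of order $\varepsilon^3$ per step, so $O(T\varepsilon^2)$ over all steps, which is dominated by the other terms. Second, a \emph{stochastic-gradient piece}: at each step the data point $\vz(k)$ is fresh, so $\widehat\nabla_{\mW}R(\vz(k);\mwshb(k))-\E_{\vz}\widehat\nabla_{\mW}R(\vz;\mwshb(k))$ is a martingale difference with bounded $\psi_2$-norm (bounded for $w_2,w_3$ by \ref{Assump 3:1bound,cont} and sub-Gaussian for $\vw_1$ because of the $\vx$ factor and the $K^2$-sub-Gaussian initialization propagated through the a priori bound); summing in $k$, Azuma–Hoeffding in a vector-valued form plus a union bound over the $n_1 D + n_1 n_2 + n_2$ coordinates gives the $\sqrt{\varepsilon}(\sqrt{D+\log n_1n_2}+\delta)$ contribution. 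Third, a \emph{particle approximation piece}: whenever the mean-field backward pass \eqref{equ:backward,MF,3L} uses an expectation $\E_{C_1}$ or $\E_{C_2}$, the SHB backward pass uses the empirical mean over $j_1\in[n_1]$ or $j_2\in[n_2]$ of the coupled initial variables. At initialization the coupled variables are i.i.d., so Hoeffding plus a union bound over $(j_1,j_2,k)$ yields an $O((\sqrt{\log n_{\max}}+\delta)/\sqrt{n_{\min}})$ concentration. Collecting, one arrives at a second-order recursion $E_{k+1}\leq E_k + C\varepsilon\sum_{\ell\le k}E_\ell + \xi_k$, where $\xi_k$ aggregates the noise and particle terms; applying the second-order discrete Gronwall inequality (Lemma~\ref{lemma:Pachatte}) closes the loop and produces the constant $K(\gamma,T)=O(e^{e^T})$.

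The main obstacle is the cross-layer coupling inside $\dH_1$: in the SHB dynamics, $\dH_1(\vz,j_1;\mwshb(k))$ contains $\tfrac{1}{n_2}\sum_{j_2}\dH_2(\vz,j_2;\mwshb(k))\wshb_2(k,j_1,j_2)\sigma_1'(\cdot)$, and for $k>0$ the weights $\wshb_2(k,j_1,j_2)$ are \emph{not} independent across $j_2$ for fixed $j_1$, because the shared data stream couples them. A naive Hoeffding argument therefore fails. The standard way out, adapted from the SGD analyses of \cite{nguyen2020rigorous,pham2021global}, is a bootstrap: write $\wshb_2(k,j_1,j_2)=w_2(k\varepsilon,C_1(j_1),C_2(j_2))+\Delta_2^{(k)}(j_1,j_2)$, substitute, use concentration over the i.i.d.\ variables $C_2(j_2)$ for the first term (which is possible because $c_2\mapsto w_2(k\varepsilon,c_1,c_2)$ is deterministic at fixed $c_1$), and absorb the contribution of $\Delta_2^{(k)}$ into the Gronwall right-hand side. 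Making this bootstrap coexist with the second-order momentum recursion—so that both the weight and momentum errors self-consistently satisfy one joint Gronwall inequality—is the delicate step, and it is exactly what forces us to include $D+\log n_1 n_2$ (rather than just $D$) in the martingale bound, because we must control all $n_1n_2$ second-layer coordinates uniformly.
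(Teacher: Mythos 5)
Your proposal is correct and follows essentially the same route as the paper: the same coupling through the neuronal embedding, the same three error sources (concentration over the i.i.d.\ embedded initializations for the particle approximation, a second-order Taylor/Euler discretization error of order $\varepsilon$, and Azuma--Hoeffding with a union bound over all $n_1 D + n_1 n_2 + n_2$ coordinates for the stochastic-gradient martingale), closed by Pachpatte's second-order Gronwall inequality; your ``bootstrap'' through the mean-field trajectory to handle the non-independence of $\wshb_2(k,j_1,j_2)$ across $j_2$ is exactly how the paper's Proposition \ref{prop:MFODE-PD,3L} proceeds. The only difference is organizational: the paper makes the intermediate particle and heavy-ball dynamics explicit and bounds three pairwise distances via the triangle inequality, whereas you fold all three contributions into a single per-step recursion.
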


For three layers, Theorem \ref{thm:chaos3L} gives that the approximation error scales roughly as $\sqrt{\log n_{\max}/n_{\min}}+\sqrt{\varepsilon(D+\log n_1n_2)}$, i.e., it vanishes as long as $n_1,n_2$ both grow large,  with  $n_{\max} = o(e^{n_{\min}})$ and $\varepsilon = o(1/\sqrt{D+\log n_1 n_2})$. As in the two-layer case, the bound is dimension-free (in the sense that $n_1, n_2$ do not need to scale with $D$), and $K(\gamma, T)=O(e^{e^T})$. We remark that the scaling of the bound in $T$ is also a shortcoming of the existing analysis for SGD in \citep{pham2021global}. The detailed proof is provided in Appendix \ref{sec:conv3}.


\section{Consequences of the mean-field analysis: Dropout stability and connectivity}
\label{section:drop}
The mean-field perspective put forward in this paper leads to a precise characterization of the SHB training dynamics. In particular, Theorems \ref{thm:chaos2l}-\ref{thm:chaos3L} offer a provable justification to two remarkable properties exhibited by solutions obtained via gradient-based methods, namely, \emph{dropout-stability} and \emph{connectivity}. The fact that solutions (often resulting from algorithms that use momentum) can be connected via simple paths with low loss was empirically observed in \citep{garipov2018loss,draxler2018essentially}, and this property was related to dropout-stability by \cite{kuditipudi2019explaining}. In \citep{shevchenko2020landscape}, it was shown that SGD solutions enjoy dropout-stability and connectivity and, by combining this analysis with Theorems \ref{thm:chaos2l}-\ref{thm:chaos3L}, similarly strong guarantees can be obtained for heavy ball methods. We will keep the discussion at an informal level, as the details are similar to those in \citep{shevchenko2020landscape}.

Let's start with the two-layer case. Given a non-empty set $A \subset [n]$, we say that $\mW$ is $\epsilon_D$-dropout stable if
\begin{align}
\label{eq:dp_errr}
    |R(\mW) - R_{\rm drop}(\mW; A)| \leq \epsilon_D,
\end{align}
where $R_{\rm drop}(\mW; A)$ is obtained by replacing the two-layer network $f(\vx; \mW)$ defined in
(\ref{def:FCN2l}) with the dropout network
\begin{align}
    f(\vx;\mW_A) =  \frac{1}{|A|} \sum_{j \in A} w_2(j) \sigma(\vw_1(j)^T \vx),
\end{align}
where $\mW_A = (\vw_1(j), w_2(j))_{j\in A}$ and $|A|$ denotes the cardinality of the set $A$.
Similarly, in the three-layer case, 
given two non-empty sets $A_1 \subset [n_1],A_2 \subset [n_2]$, the dropout network is given by
\begin{equation}
    \begin{split}
    & H_2( \vx, j_2; \mW_{A_1,A_2}) = \frac{1}{|A_1|} \sum_{j_1 \in A_1} w_2(j_1,j_2) \sigma_{1}(\vw_1(j_1)^T \vx), \\
    & f(\vx;\mW_{A_1,A_2}) =  \frac{1}{|A_2|} \sum_{j_2 \in A_2} w_3(j_2) \sigma_2(H_2( \vx, j_2; \mW_{A_1,A_2})),
    \end{split}
\end{equation}
and $\epsilon_D$-dropout stability is defined analogously.
Furthermore, 
  we say that two solutions $\mW$ and $\mW'$ are $\epsilon_C$-connected if there exists a continuous path in parameter space that starts at $\mW$, ends at $\mW'$ and along which the population risk is upper bounded by $\max \{R(\mW),R(\mW') \} + \epsilon_C.$
  


At this point, the quantitative convergence result to the mean-field limit provided by Theorem \ref{thm:chaos2l} and \ref{thm:chaos3L} leads to a quantitative bound on the dropout stability and connectivity of the solutions found by the stochastic heavy ball method. We remark that proving only the consistency of the mean-field limit, as done in  \cite[Theorem 1]{krichene2020global}, does not suffice to obtain such guarantees on the structure of the SHB solution. In particular, for two-layer networks, after $k \le \lfloor T/\varepsilon \rfloor$ steps of the iteration (\ref{eq:SHBdef}), the resulting parameters are $\epsilon_D$-dropout stable and 
$\epsilon_C$-connected, where
\begin{equation}\label{eq:ds}
    \begin{split}
    \epsilon_D &\le  K(\gamma,T) \left(\frac{\left( \sqrt{\log |A| }+ \delta\right)}{\sqrt{|A|}}  + \sqrt{\varepsilon} (\sqrt{D + \log n} + \delta) \right),\\
    \epsilon_C &\le  K(\gamma,T) \left(\frac{\left( \sqrt{\log n }+ \delta\right)}{\sqrt{n}}  + \sqrt{\varepsilon} (\sqrt{D + \log n} + \delta) \right),
    \end{split}
\end{equation}
with probability at least $1- \exp(-\delta^2)$. Here $K(\gamma, T)$ is a universal constant depending only on $\gamma, T$ as before. Similarly, for three-layer networks, after $k \le \lfloor T/\varepsilon \rfloor$ steps of the iteration (\ref{eq:SHB3defnew}), the resulting parameters are $\epsilon_D$-dropout stable and 
$\epsilon_C$-connected, where
\begin{equation}\label{eq:conn}
    \begin{split}
    \epsilon_D &=  K(\gamma,T) \left(\frac{\left( \sqrt{\log A_{\max}} + \delta\right)}{\sqrt{A_{\min}}}  + \sqrt{\varepsilon} (\sqrt{D + \log n_1 n_2} + \delta) \right),\\
    \epsilon_C &=  K(\gamma,T) \left(\frac{\left( \sqrt{\log n_{\max} }+ \delta\right)}{\sqrt{n_{\min}}}  + \sqrt{\varepsilon} (\sqrt{D + \log n_1 n_2} + \delta) \right),
\end{split}
\end{equation}
with probability at least $1- \exp(-\delta^2)$. Here, $A_{\max} = \max \{|A_1|,|A_2| \}, A_{\min} = \min \{|A_1|,|A_2| \}$, $n_{\max}=\max\{n_1, n_2\}$ and $n_{\min}= \min\{n_1, n_2\}$.
We remark that the path connecting the two solutions can be explicitly constructed as in \citep{kuditipudi2019explaining,shevchenko2020landscape}. More specifically, this path is piece-wise linear, and the number of linear segments is a fixed constant. We also note that the bounds for multi-layer networks in \citep{shevchenko2020landscape} exhibit a linear dependence on the input dimension $D$. In contrast, our bounds (\ref{eq:ds})-(\ref{eq:conn}) are dimension-free. 


Finally, let us highlight that our mean-field viewpoint can shed light on the thought-provoking conjecture in  \citep{entezari2021role}, where it is empirically observed that, after a suitable permutation, the solutions of the optimization algorithm enjoy \emph{linear} connectivity. In fact, Theorems \ref{thm:chaos2l} and \ref{thm:chaos3L} show that, by running the corresponding SHB training algorithm multiple times, all the resulting solutions satisfy (\ref{eq:MFconv2}) and  (\ref{eq:MFconv3}), respectively. This implies that, after a permutation of the neurons, the distance between such solutions can also be upper bounded by the RHS of (\ref{eq:MFconv2}) and  (\ref{eq:MFconv3}), hence the linear connectivity is an immediate consequence of this closeness among the solutions.

\section{Global convergence of the mean-field ODE for three-layer networks}
\label{section:gloconv}

In order to show the global convergence result, we first need to make some extra assumptions. 

\begin{assumption}
\label{Assum:global convg,3L}
We make the following additional assumptions for the training of a three-layer neural network:
\begin{enumerate}[label=({C}{\arabic*})]
    \item (Universal approximation property of the activation) $\sigma_1$ exhibits a universal approximation property, that is: $\{\sigma_1(\langle \vw, \cdot \rangle) : \vw \in \R^D\}$ has dense span  in $\Ls^2(\mathcal{D}_{\vx})$, where $\mathcal{D}_{\vx}$ denotes the $\vx$-marginal of the data distribution $\mathcal D$.\label{ass:C1}
    
    \item (Full support at initialization) $\rho_0^1$ has full support.\label{ass:C2}
    
    \item (Mode of convergence) The mean-field ODE (\ref{def:MFODE,3}) converges to the limit $\big(\vw_1(\infty,c_1), w_2(\infty,c_1,c_2), w_3(\infty,c_2)\big)$. Formally, we have that
    \begin{align*}
        &\E_{C_1, C_2}[(1+ |w_3(\infty,C_2) |)\cdot |w_3(\infty,C_2)|\cdot |w_2(\infty,C_1,C_2)|\cdot \|\vw_1(t,C_1) - \vw_1(\infty,C_1)\|_2] \xrightarrow{t \xrightarrow{} \infty} 0,\\
        &\E_{C_1, C_2}[(1+ |w_3(\infty,C_2)| ) \cdot|w_3(\infty,C_2)|\cdot |w_2(t,C_1,C_2) - w_2(\infty,C_1,C_2)|]  \xrightarrow{t \xrightarrow{} \infty} 0,\\
        &\E_{C_2}[(1+|w_3(\infty,C_2)|) \cdot|w_3(t,C_2) - w_3(\infty,C_2) |]   \xrightarrow{t \xrightarrow{}\infty} 0,\\
        &\esssup_{C_1} \E_{C_2}[| \E_{\vz}\dw_2(t,\vz,C_1,C_2) | ]\xrightarrow{t \xrightarrow{} \infty} 0, \\
        & \Pr\left[ w_3(\infty,C_2) \neq 0 \right] > 0.
    \end{align*}
    \label{Assum:global convg,3l_last}
\end{enumerate}
\end{assumption}

The universal approximation property is the key assumption to obtain a global convergence result. This requirement is mild, since most activation functions used in practice are universal approximators. The assumption on full support is also mild, since widely used initialized schemes (e.g., He's or LeCun's initialization) employ a Gaussian distribution, which indeed has full support. The assumption on the mode of convergence is purely technical, and it is an open question whether it can be relaxed.More specifically, part \ref{Assum:global convg,3l_last} is needed because of the lack of entropic and moment regularization, which makes it difficult to characterize the limiting points of the noiseless mean-field dynamics. We note that the uniform convergence of the gradient (the fourth assumption in \ref{Assum:global convg,3l_last}) could be replaced by Morse-Sard type of regularity assumptions (see \cite[Section 8]{nguyen2020rigorous}).
We remark that these requirements also appear in \cite{pham2021global}, with the exception of  $\Pr\left[ w_3(\infty,C_2) \neq 0 \right] > 0$, which is needed to handle the heavy ball dynamics. 

\begin{theorem}
\label{thm:global convergence,3L}
Let Assumptions \ref{Assum:3L} and \ref{Assum:global convg,3L} hold, and assume further that $R(y,f(\vx;\mW))$ is convex in $f(\vx;\mW)$. Let $\mW(t)$ be the solution of the mean-field ODE (\ref{def:MFODE,3}). Then, we have that
    \begin{align}
        \lim_{t \xrightarrow{} \infty}  \E_{\vz} R(y,f(\vx;\mW(t))) = \inf_{\hat{y}:\R^D \xrightarrow{} \R} \E_{\vz} R(y,\widehat{y}(\vx)).
    \end{align}
\end{theorem}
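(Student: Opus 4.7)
My strategy follows the global-convergence template of \cite{pham2021global} for three-layer mean-field dynamics, adapted to the second-order heavy-ball flow. The plan has two phases: (i) identify the asymptotic configuration $\mW_\infty := (\vw_1(\infty,\cdot), w_2(\infty,\cdot,\cdot), w_3(\infty,\cdot))$ as a stationary point with respect to the $w_2$-gradient, then use universal approximation together with convexity to conclude that $f(\cdot;\mW_\infty)$ attains the Bayes-optimal predictor; (ii) transfer this to $\mW(t)$ using the mode-of-convergence estimates in \ref{Assum:global convg,3l_last}.

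For phase (i), the fourth part of \ref{Assum:global convg,3l_last} yields, upon passing $t \to \infty$ and unpacking the definition of $\dw_2$ in \eqref{equ:backward,MF,3L}, that for $\mathbb{P}_1 \times \mathbb{P}_2$-almost every $(c_1, c_2)$,
\begin{equation*}
\E_\vz\!\left[\partial_2 R(y, f(\vx;\mW_\infty))\, w_3(\infty,c_2)\, \sigma_2'(H_2(\vx,c_2;\mW_\infty))\, \sigma_1(\vw_1(\infty,c_1)^{\top} \vx)\right] = 0.
\end{equation*}
I next argue that $\{\sigma_1(\vw_1(\infty,c_1)^{\top} \cdot) : c_1 \in \Omega_1\}$ has dense linear span in $\Ls^2(\mathcal{D}_\vx)$: the full-support assumption \ref{ass:C2} makes the pushforward $(\vw_1(0,\cdot))_\# \mathbb{P}_1$ have full support on $\R^D$; continuity in $c_1$ of the ODE flow, together with the first bound of \ref{Assum:global convg,3l_last}, propagates this to $(\vw_1(\infty,\cdot))_\# \mathbb{P}_1$; then \ref{ass:C1} supplies the density. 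Taking $\Ls^2(\mathcal{D}_\vx)$-approximations of arbitrary test functions in the displayed equation yields, for a.e.\ $c_2$ and a.e.\ $\vx$,
\begin{equation*}
\E_{y \mid \vx}\!\left[\partial_2 R(y, f(\vx;\mW_\infty))\right]\cdot w_3(\infty,c_2)\, \sigma_2'(H_2(\vx,c_2;\mW_\infty)) = 0.
\end{equation*}
Because $\sigma_2' \ne 0$ by \ref{Assump 3:1bound,cont} and because $\Pr[w_3(\infty, C_2) \ne 0] > 0$ by the last part of \ref{Assum:global convg,3l_last}, I conclude $\E_{y\mid \vx}[\partial_2 R(y, f(\vx;\mW_\infty))] = 0$ for a.e.\ $\vx$. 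Convexity of $R(y,\cdot)$ then turns this first-order condition into $f(\vx;\mW_\infty) \in \argmin_{\hat{y}} \E_{y\mid \vx}[R(y,\hat{y})]$ for a.e.\ $\vx$, i.e.\ $\E_\vz R(y, f(\vx;\mW_\infty)) = \inf_{\hat y} \E_\vz R(y, \hat y(\vx))$.

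For phase (ii), I expand $f(\vx;\mW(t)) - f(\vx;\mW_\infty)$ as a telescoping sum across the three layers via the forward pass \eqref{equ:forward,MF,3L}. Each resulting term can be bounded using the uniform boundedness of $\sigma_1, \sigma_2$ and their first derivatives from \ref{Assump 3:1bound,cont}, together with the first three conditions of \ref{Assum:global convg,3l_last}: these are precisely the weighted $L^1$ differences of $\vw_1, w_2, w_3$ needed to absorb the multiplicative factors $w_3(\infty,C_2) w_2(\infty,C_1,C_2)$ that appear when differentiating the composition layer by layer. The Lipschitz property of $R$ in its second argument then gives $\E_\vz R(y, f(\vx;\mW(t))) \to \E_\vz R(y, f(\vx;\mW_\infty))$, which combined with phase (i) yields the theorem.

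The main obstacle I anticipate is the density propagation in phase (i): transporting the full-support property of $\rho_0^1$ to the pushforward $(\vw_1(\infty,\cdot))_\# \mathbb{P}_1$ under the second-order heavy-ball flow. At finite $t$, this is a continuity-of-flow argument in $c_1$; at $t = \infty$ it requires enough uniformity of the convergence $\vw_1(t,c_1)\to \vw_1(\infty,c_1)$ in $c_1$ so that the limiting pushforward still has dense range in $\R^D$. A secondary measure-theoretic subtlety is going from the essential-supremum gradient bound in \ref{Assum:global convg,3l_last} to the pointwise-a.e.\ stationarity displayed above, and then combining it cleanly with the $\Ls^2$-density conclusion for every suitable $c_2$ simultaneously; this is where I expect the bulk of the bookkeeping to live.
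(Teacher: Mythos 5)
Your overall template (stationarity of the limit configuration $\Rightarrow$ universal approximation $\Rightarrow$ first-order condition $\Rightarrow$ convexity, then transfer to $\mW(t)$ via the mode-of-convergence bounds) matches the paper, and your phase (ii) is essentially the paper's final step. However, phase (i) has a genuine gap exactly at the point you flag as "the main obstacle": you assert that the full-support property of $\rho_0^1$ propagates to the pushforward $(\vw_1(\infty,\cdot))_\# \mathbb{P}_1$, and you build the approximating family out of $\{\sigma_1(\vw_1(\infty,c_1)^{\top}\cdot)\}$. Nothing in Assumption \ref{Assum:global convg,3l_last} supports this: the first condition only gives a \emph{weighted $L^1$} convergence of $\vw_1(t,C_1)$ to $\vw_1(\infty,C_1)$, which is perfectly consistent with the limiting map having range equal to a single point (total collapse). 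The paper deliberately avoids any claim about the support at $t=\infty$. Its route is: (a) prove a degeneracy/measurability lemma showing $\vw_1(t,c_1)=\vw_1^{*}(t,\vw_1(0,c_1))$ for a deterministic $\vw_1^{*}$; (b) show $\vw_1^{*}(t,\cdot)$ is Lipschitz and satisfies $\|\vw_1^{*}(t,\vu)-\vu\|_2\le K(\gamma,T)t$, and deduce surjectivity of $\vw_1^{*}(t,\cdot)$ onto $\R^D$ at every \emph{finite} $t$ via Brouwer's fixed point theorem; (c) for a \emph{fixed} $\vu_1$ ranging over a dense subset of $\R^D$ (independent of the limit configuration), pass $t\to\infty$ in the quantity $\E_{C_2}[|\E_{\vz}\dH_2(\vz,C_2;\mW(t))\sigma_1(\vu_1^{\top}\vx)|]$, using the $\esssup$ gradient condition at finite $t$ together with the weighted $L^1$ convergence to replace $\mW(t)$ by $\mW(\infty)$ inside $\dH_2$. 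The universal approximation assumption \ref{ass:C1} is then invoked over $\vw\in\R^D$, not over the limiting weights.

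Two further points. First, even at finite $t$ your justification ("continuity in $c_1$ of the ODE flow") is insufficient: a continuous image of a full-support measure need not have full support, so surjectivity must be argued, and this in turn requires the degeneracy lemma (which you omit entirely, and which is itself nontrivial for the second-order dynamics — the paper proves it via a reduced dynamics and a Gronwall/Pachpatte contraction showing the weights are measurable with respect to the sigma-algebras generated by the relevant initial weights). Second, your derivation of the pointwise stationarity for "a.e.\ $(c_1,c_2)$" at $\mW_\infty$ needs the interchange of the limit $t\to\infty$ with the evaluation of $\dw_2$ at $\mW_\infty$; this is the estimate the paper carries out explicitly using all three weighted convergence conditions, and it is where the factors $(1+|w_3(\infty,C_2)|)|w_3(\infty,C_2)||w_2(\infty,C_1,C_2)|$ in Assumption \ref{Assum:global convg,3l_last} are consumed. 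If you restructure phase (i) along the paper's lines — finite-$t$ surjectivity plus a limit in the stationarity identity for fixed $\vu_1\in\R^D$ — the rest of your argument goes through.
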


The detailed proof is deferred to Appendix \ref{section:app,global convergence}, and we provide here a sketch. First, 
we show a degeneracy property for the mean-field ODE, i.e., there exist deterministic functions $\vw_1^{*}(\cdot, \cdot) : \R^{\geq 0} \times \R^D \xrightarrow[]{} \R^D, w_2^{*}(\cdot, \cdot,\cdot,\cdot) : \R^{\geq 0} \times \R^D \times \R \times \R \xrightarrow[]{} \R, w_3^{*}(\cdot, \cdot) : \R^{\geq 0} \times \R \xrightarrow[]{} \R$ such that
    \begin{align*}
        \vw_1(t,C_1) &= \vw_1^{*} (t, \vw_1(0,C_1)), \\
        w_2(t,C_1,C_2) &= w_2^{*} (t,\vw_1(0,C_1), w_2(0,C_1,C_2), \textcolor{violet}{w_3(0,C_2)}), \\
        w_3(t,C_2) &= w_3^{*} (t, w_3(0,C_2)). 
    \end{align*}
Next, we show that, for any finite $t$,  $\vw_1^{*}(\cdot, \cdot)$ is continuous in both arguments and $\vw_1(t,C_1)$ has full support. Finally, the convergence to the global minimum is obtained by combining the argument that $\vw_1(t,C_1)$ is full support for all finite $t$ with the assumption on the mode of convergence.

Theorem \ref{thm:global convergence,3L} is rather different from the global convergence result for the heavy ball method presented in \citep{krichene2020global}. In fact,  \cite{krichene2020global} consider a \emph{noisy} dynamics, and show the convergence of the mean-field ODE to the global minimum of a certain free energy, which represents an entropic regularization of the loss function. In this setup, the convergence is guaranteed by the noise term in the dynamics and by the regularization term in the free energy functional. 
In contrast, we consider a \emph{noiseless} dynamics and do not prove its convergence. Instead, we show that, when the mean-field ODE converges, it must do so towards the global minimum of the un-regularized loss function. 
At the technical level, our proof strategy is an adaptation to the heavy ball case of the argument for SGD in \citep{pham2021global}, which also crucially relies on the universal approximation property of the activation function. A similar idea was first proposed in \citep{lu2020mean}, and it also appears e.g. in \citep{fang2021modeling}. However, our contribution is the first to tackle the case of optimization with momentum. 


\section{Numerical results}\label{numerical}

\begin{figure}[tb]
    \centering
    \begin{subfigure}[b]{.48\linewidth}
    \includegraphics[width= 0.98 \textwidth]{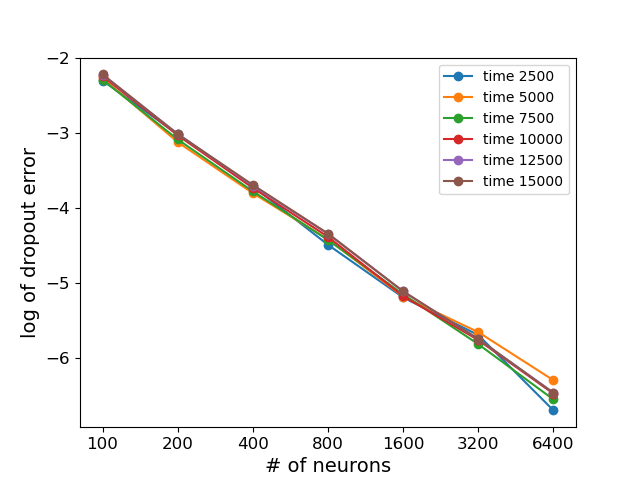}
    \caption{Two-layer networks}
    \label{fig:dp2l}
    \end{subfigure}
    \hfill
    \begin{subfigure}[b]{.48\linewidth}
    \includegraphics[width= 0.98 \textwidth]{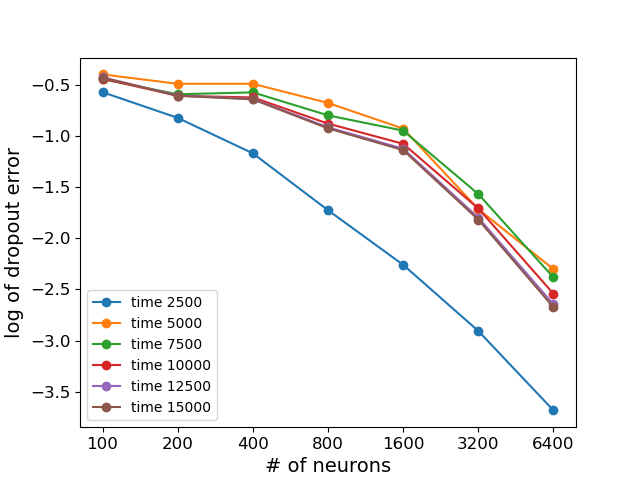}
    \caption{Three-layer networks}
    \label{fig:dp3l}
    \end{subfigure}
    \caption{Dropout error plotted as a function of the network width. The $\log$-$\log$ plot is close to be linear, matching the behavior of our theoretical predictions of Section \ref{section:drop}.}    \label{fig:dp}
\end{figure}

\textbf{ Experimental setup.}
We train a two-layer and a three-layer fully connected neural network in the mean-field regime on the MNIST dataset. The training algorithm is stochastic gradient descent with momentum, and we evaluate the dropout stability of the learnt models.
For the two-layer network, we take the width $n = 100 \times 2^k$, where $k \in \{1,\ldots,7\}$; for the three-layer network, we take $n_1=n_2=n$ and use the same grid for $n$. We use the PyTorch default initialization, pick the learning rate $\varepsilon$ to be 0.05 and the momentum to be 0.9 (which implies that $\gamma = 2$). We rescale the learning rate, so that the scaling of the gradient does not depend on $n$, as required by our theory. The batch size is 100 and we train for 25 epochs, which means that the neural network is trained for $15000$ steps (each epoch contain $600$ steps and there are $25$ epochs). 
For each model, we perform $10$ i.i.d. experiments and report their average. We compute the population loss for both the original network and the dropout network, obtained by randomly dropping out half of the neurons. For each experiment, we take 10 random dropout networks and report the average population loss.

\textbf{ Experimental results.}
In Figure \ref{fig:dp}, we plot the $\log$ of the dropout error defined in (\ref{eq:dp_errr}) as a function of the number of neurons in each layer. Different curves correspond to different numbers of trained epochs. 
Two remarks are in order. First, the dependence of the dropout error on the time of the dynamics is rather mild (and, hence, our bounds on the constant $K(\gamma, T)$ 
appear to be pessimistic). 
Second, the dropout error scales as an inverse polynomial in the width, in agreement with (\ref{eq:ds})-(\ref{eq:conn}).

\section{Discussion and future direction}
\label{section:discussion}
In this paper, we consider neural networks with two and three layers, and analyze the dynamics of stochastic gradient descent with momentum -- also known as the stochastic heavy ball method (SHB) -- from a mean-field viewpoint. After showing the existence and uniqueness of the mean-field limit, we provide a quantitative convergence result of the discrete SHB dynamics of a finite-width neural network to the corresponding limit differential equation, thus solving a problem raised by \cite{krichene2020global}. Then, we exploit the power of our mean-field perspective by \emph{(i)} proving that the solutions found by SHB enjoy desirable properties, such as dropout stability and connectivity, and \emph{(ii)} showing a global convergence result for three-layer networks.

At the technical level, our proof strategies build on the work by \cite{mei2019mean} and \cite{pham2021global} for neural networks with two and three layers, respectively. However, these papers focus on vanilla SGD, and dealing with SHB requires a number of delicate technical results. In particular, \emph{(i)} we establish several boundedness and smoothness properties of the mean-field dynamics, \emph{(ii)} we track various new quantities and exploit a second-order Gronwall lemma (Pachpatte’s inequality) to bound them, \emph{(iii)} we perform a discretization of the particle dynamics which is different from the SGD case, and \emph{(iv)} we prove a key measurability property for the second-order dynamics, which characterizes the dependency between layers during training. Our strategy is tailored to the stochastic heavy ball method, but similar ideas could potentially be applied also to Nesterov’s accelerated method, due to the similarity in the continuous limit. The study of other popular training algorithms (e.g., Adam) most likely requires an entirely different technical analysis, whose investigation is left as an interesting open problem.

Let us conclude by discussing some extensions and future directions. 


\textbf{Beyond three layers and fully connected networks.} While it should be possible to extend our analysis of SGD with momentum to feed-forward networks with more than three layers, it remains unclear how to obtain global convergence guarantees in a more general setup. In fact, this is an open problem even for the case of optimization without momentum, see \cite[Section 5]{nguyen2020rigorous}. We also note that, although we only consider fully connected networks, our analysis could be generalized to other architectures, such as convolutional neural networks (CNNs) or ResNets, by slightly modifying the assumption. In particular, for CNNs, if we replace inner products by convolutions and let $n$ or $n_1,n_2$ be the number of filters in each layer, our analysis can be directly applied. For ResNets, the presence of the skip connection modifies the backward path, and the mean-field limit would need to reflect this modification.


\textbf{``Central limit theorem''-type of results.} Theorems \ref{thm:chaos2l}-\ref{thm:chaos3L} belong to a ``law of large numbers''-type of results, in the sense that they show how close is the SHB dynamics to the mean-field limit. A parallel line of work has studied the distribution of the perturbation of the finite-width neural network around its mean-field limit, see \citep{chen2020dynamical,sirignano2020meanclt} for two-layer networks, and \citep{pham2021limiting} for multi-layer ones. However, all the existing results concern the vanilla SGD algorithm, and understanding how momentum can affect such a distribution is an interesting future direction. 


\textbf{Convergence of noisy dynamics.} In this work, we consider a noiseless dynamics, and the global convergence result for three-layer networks follows from the universality of the activation function. In contrast, in the noisy case, the mean-field limit for a two-layer network is an under-damped mean-field Langevin dynamics, whose convergence follows from the convexity of a related free-energy functional \citep{krichene2020global}. In a multi-layer setup, the free energy is not convex, which makes it challenging to obtain global convergence. 

\textbf{Comparison between SGD and heavy ball methods.} Motivated by the observation that, in practice, adding momentum helps to generalize better \citep{sutskever2013importance}, an exciting avenue for future research is to exploit the structure of the mean-field limit to draw insights concerning the solution found by heavy ball methods. While the recent work by \cite{pmlr-v162-jelassi22a} theoretically proved the improvements in generalization error provided by momentum under certain settings, we are not aware about whether such results hold in the mean-field training regime.  A key difficulty here is that, for the noiseless dynamics, global convergence is not guaranteed in general, let alone an explicit characterization of the stationary solution. To circumvent this issue, one option may be to consider a suitably regularized noisy dynamics, which admits a stationary solution in a Gibbs form, in the limit of vanishingly small noise and regularization. Finally, heavy ball methods are known to enjoy faster rates of convergence in the convex setting. Thus, establishing a convergence rate for the mean-field dynamics is an exciting avenue for future research.  

\section*{Acknowledgements}

D. Wu and M. Mondelli are partially supported by the 2019 Lopez-Loreta Prize. 
V. Kungurtsev  was supported by the OP VVV project CZ.02.1.01/0.0/0.0/16\_019/0000765 "Research Center for Informatics".

\newpage

\bibliography{tmlr}

\newpage
\appendix
\paragraph{Organization of the appendix.}
In Appendix \ref{section:app,apriori}, we provide some a-priori estimates that will be useful in the following arguments.  In Appendix \ref{section:app,exist}, we prove Theorem \ref{thm:exist}, namely, the existence and uniqueness of the mean-field limit for two-layer and three-layer networks, respectively. In Appendix \ref{sec:conv2} and \ref{sec:conv3}, we prove Theorems \ref{thm:chaos2l} and \ref{thm:chaos3L}, which show the convergence of the SHB dynamics to the corresponding mean-field limit for two-layer and three-layer networks. Finally in Appendix \ref{section:app,global convergence}, we prove Theorem \ref{thm:global convergence,3L}, which is the global convergence result in the three-layer setup.

\section{A-priori estimates}
\label{section:app,apriori}

\subsection{Two-layer networks}
\label{app:priori2}
\begin{lemma}
\label{lemma:priori,2L}
Assume that \ref{Assump 2:1bound,cont} - \ref{Assump 2:3init momen} hold, and let $f(\vx;\rho), \Psi(\vtheta;\rho), \nabla_{\vtheta}\Psi(\vtheta;\rho)$ be defined in (\ref{equ:defMF,2L}). Then, for any fixed $T$, there exist universal constants $K, K_2(\gamma,T)$, where the latter depends only on $\gamma, T$, such that the following results hold.
\begin{enumerate}
    \item (Boundedness) We have that, for any $\vtheta, \rho$, 
    \begin{align}
        f(\vx;\rho) \leq K \E_{\rho}|w_2|,\nonumber \\
        |\Psi(\vtheta;\rho)| \leq K |w_2|, \\
        \|\nabla_{\vtheta} \Psi(\vtheta;\rho)\|_2 \leq K (1+|w_2|).\nonumber 
    \end{align}
    
    \item (Boundedness for mean-field ODE) We have that, for any $t \leq T$, $w_2(t)$ as governed by~(\ref{def:MFODE,2,noiseless}) satisfies 
    \begin{align}\label{eq:bdn}
        |w_2(t)| \leq K_2(\gamma, T) .
    \end{align}
    
    \item (Lipschitz continuity): 
    \begin{align}
        |\Psi(\vtheta;\rho) - \Psi(\vtheta';\rho')| \leq K(1+ |w_2|) \left(|w_2-w_2'| + \|\vw_1 - \vw'_1\|_2 + \wass_2(\rho,\rho')\right) \label{eq:l1},\\
        \|\nabla_{\vtheta}\Psi(\vtheta;\rho) - \nabla_{\vtheta}\Psi(\vtheta';\rho')\|_2 \leq K(1+ |w_2|) \left(|w_2-w_2'| + \|\vw_1 - \vw'_1\|_2 + \wass_2(\rho,\rho')\right).
    \end{align}

\end{enumerate}

\end{lemma}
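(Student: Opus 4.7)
Each of the three parts is a direct calculation from the definitions in \eqref{equ:defMF,2L} together with Assumption \ref{Assum:2L}. I sketch the approach below.

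For the boundedness claim, I would substitute the definitions. The bound on $f(\vx;\rho)=\E_\rho[w_2\sigma(\vw_1^T\vx)]$ is immediate from $\|\sigma\|_\infty\le K$. Writing $\Psi(\vtheta;\rho)=\E_\vz[\partial_2 R(y,f(\vx;\rho))\,w_2\sigma(\vw_1^T\vx)]$, the bound $|\Psi|\lesssim|w_2|$ follows from $\|\partial_2 R\|_\infty,\|\sigma\|_\infty\le K$ and $\|\vx\|_2\le K$. Differentiating once with respect to $w_2$ produces a quantity whose absolute value is at most $K^2$ (independently of $|w_2|$), while differentiating with respect to $\vw_1$ yields $\partial_2 R\cdot w_2\,\sigma'(\vw_1^T\vx)\,\vx$, whose Euclidean norm is at most $K^3|w_2|$. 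Combining these two components and absorbing universal constants gives $\|\nabla_\vtheta\Psi\|_2\le K(1+|w_2|)$.

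For the bounded dynamics of $w_2$, the mean-field ODE \eqref{def:MFODE,2,noiseless} splits coordinate-wise, so the component of $\vr(t)$ conjugate to $w_2$, call it $r_{w_2}(t)$, obeys $\dot r_{w_2}=-\gamma r_{w_2}-\nabla_{w_2}\Psi(\vtheta(t);\rho^\vtheta(t))$ with $r_{w_2}(0)=0$ by \ref{Assump 2:3init momen}. Since the forcing term is \emph{uniformly} bounded by $K^2$ (independently of $|w_2(t)|$, from the previous step), the variation-of-constants formula yields $|r_{w_2}(t)|\le K^2(1-e^{-\gamma t})/\gamma\le K^2/\gamma$. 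Integrating $\dot w_2=r_{w_2}$ on $[0,T]$ and using $|w_2(0)|\le K$ from \ref{Assump 2:2init} then gives the claimed estimate with $K_2(\gamma,T)=K+K^2T/\gamma$.

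For the Lipschitz continuity, I would telescope
\[
\Psi(\vtheta;\rho)-\Psi(\vtheta';\rho')=\bigl[\Psi(\vtheta;\rho)-\Psi(\vtheta;\rho')\bigr]+\bigl[\Psi(\vtheta;\rho')-\Psi(\vtheta';\rho')\bigr],
\]
and bound each bracket separately. The second bracket, with $\rho'$ fixed, reduces to controlling the Lipschitz behaviour of the map $\vtheta\mapsto\partial_2 R(y,f(\vx;\rho'))\,w_2\sigma(\vw_1^T\vx)$; the bounds on $\sigma,\sigma',\vx,\partial_2 R$ give a term proportional to $|w_2-w_2'|+\|\vw_1-\vw_1'\|_2$ with prefactor of order $K(1+|w_2|)$. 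The first bracket depends on $\rho$ only through $f(\vx;\rho)$; the $K$-Lipschitzness of $\partial_2 R$ in its second argument reduces matters to bounding $|f(\vx;\rho)-f(\vx;\rho')|$ by a constant multiple of $\wass_2(\rho,\rho')$. The gradient estimate follows from the same telescoping, now additionally invoking $\|\sigma''\|_\infty\le K$ when differentiating the $\vw_1$-block and $\|\sigma'\|_\infty\le K$ for the $w_2$-block.

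The main obstacle is this last Wasserstein step: the integrand $w_2\sigma(\vw_1^T\vx)$ defining $f(\vx;\rho)$ is not globally Lipschitz on parameter space, because its $\vw_1$-gradient scales with $|w_2|$. I would address this by taking an optimal $\wass_2$-coupling $\pi$ of $\rho$ and $\rho'$, using
\[
|w_2\sigma(\vw_1^T\vx)-w_2'\sigma(\vw_1'^T\vx)|\le K|w_2-w_2'|+K^2|w_2'|\,\|\vw_1-\vw_1'\|_2,
\]
and applying Cauchy--Schwarz against $\pi$ to obtain $|f(\vx;\rho)-f(\vx;\rho')|\le K(1+K\sqrt{\E_{\rho'}w_2^2})\,\wass_2(\rho,\rho')$. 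Folding the second-moment factor into the prefactor $K(1+|w_2|)$ of the claimed inequality is where a careful reading of the regime of the lemma is needed: along the trajectories in which this lemma will be used, $\E_{\rho}w_2^2$ is uniformly bounded thanks to the previous step and the sub-Gaussian control on $\vw_1(0)$ from \ref{Assump 2:2init}, so the additional factor collapses into the universal constant $K$.
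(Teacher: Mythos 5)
Your proposal is correct, and for parts 1 and 3 it follows essentially the paper's own route: direct substitution of the definitions for the boundedness claims, then a telescoping decomposition in which the $\rho$-dependence is controlled through $|f(\vx;\rho)-f(\vx;\rho')|$. Two steps genuinely differ. For the bound on $|w_2(t)|$, the paper writes the second-order integral form of (\ref{def:MFODE,2,noiseless}) and applies Gronwall's lemma, arriving at $K_2(\gamma,T)=(K+KT+KT^2)e^{\gamma T}$; you instead solve $\dot r_{w_2}=-\gamma r_{w_2}-\nabla_{w_2}\Psi$ by variation of constants, exploiting that $\nabla_{w_2}\Psi$ is uniformly bounded \emph{independently of the state}, and then integrate $\dot w_2=r_{w_2}$. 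This avoids Gronwall entirely and yields the sharper, non-exponential constant $K+K^2T/\gamma$; it works precisely because the $w_2$-component of the gradient does not feed back into $|w_2|$. For the Wasserstein step in part 3, the paper bounds $|\E_{\vtheta\sim\rho}\sigma^{\star}(\vx;\vtheta)-\E_{\vtheta\sim\rho'}\sigma^{\star}(\vx;\vtheta)|$ by the bounded-Lipschitz divergence and then by $\wass_2(\rho,\rho')$; strictly speaking this requires the test function $w_2\sigma(\vw_1^T\vx)$ to be uniformly bounded and $1$-Lipschitz after rescaling, which fails globally since its $\vw_1$-gradient scales with $|w_2|$. Your optimal-coupling-plus-Cauchy--Schwarz argument makes this issue explicit, produces the factor $(\E_{\rho'}w_2^2)^{1/2}$, and absorbs it via the a priori bound of part 2 on the measures actually encountered along the dynamics. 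Your treatment of that step is thus more careful than the paper's, at the (acknowledged) cost of making the constant implicitly depend on the regime in which the lemma is applied --- a restriction that is equally implicit in the paper's own use of the BL-divergence bound.
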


\begin{proof}

\begin{enumerate}
    \item 
    By the definition and assumption \ref{Assump 2:1bound,cont}, we have that
    \begin{align*}
        & |f(\vx;\rho)| = |\E_{\rho}w_2 \sigma(\vw_1^T \vx)| \leq K \E_{\rho}|w_2|, \\ 
        & |\Psi(\vtheta;\rho)| \leq |\E_{\vz}\left[ \partial_2 R(y,f(\vx;\rho)) \sigma(\vw_1^T \vx) \right] | \cdot |w_2| \leq K |w_2|, \\
        & |\nabla_{w_2} \Psi(\vtheta;\rho)| = |\E_{\vz}\left[ \partial_2 R(y,f(\vx;\rho))  \sigma(\vw_1^T \vx) \right] | \leq K, \\
        & \|\nabla_{\vw_1} \Psi(\vtheta;\rho)\|_2 = \|\E_{\vz}\left[ \partial_2 R(y,f(\vx;\rho))  w_2 \sigma'(\vw_1^T \vx) \vx \right] \|_2 \leq K |w_2|.
    \end{align*}
    
    \item By writing down the integral form of the ODE, we have
    \begin{align*}
        |w_2(t)| &\leq |w_2(0)| + \gamma \int_{0}^{T} (|w_2(s)| + |w_2(0)|) \, ds + \int_{0}^{T} \int_{0}^{s} |\nabla_{w_2} \Psi(\vtheta(u);\rho(u))| \, du \, ds \\ 
        & \leq (K + K T + K T^2) + \gamma \int_{0}^{T} |w_2(s)|  \, ds \\
        & \leq (K + K T + K T^2) e^{\gamma T}. 
    \end{align*}
    By setting $K_2(\gamma, T) := (K + K T + K T^2) e^{\gamma T} $, the proof of (\ref{eq:bdn}) is complete.
    
    \item For the Lipschitz continuity argument, we have 
    \begin{align*}
        \Psi(\vtheta;\rho) = \E_{\vz}\left[ \partial_2 R(y,f(\vx;\rho)) w_2 \sigma(\vw_1^T \vx) \right],\\
        \nabla_{\vtheta} \Psi(\vtheta;\rho) = \begin{pmatrix} 
            \E_{\vz}\left[ \partial_2 R(y,f(\vx;\rho)) w_2 \sigma'(\vw_1^T \vx) \vx \right] \\
            \E_{\vz}\left[ \partial_2 R(y,f(\vx;\rho))  \sigma(\vw_1^T \vx) \right]
        \end{pmatrix}.
    \end{align*}
    Thus,
    \begin{align}\label{eq:Lipcnt}
        &|\Psi(\vtheta;\rho) - \Psi(\vtheta';\rho')| 
        \leq K |w_2| \|\vw_1 - \vw'_1\| + K |w_2 - w_2'| + K |w_2| |\E_{\vtheta\sim\rho} \sigma(x; \vtheta) - \E_{\vtheta\sim\rho'}  \sigma(x;\vtheta)| .
    \end{align}
     We define the Bounded Lipschitz (BL) divergence as follows:
    \begin{align*}
        d_{BL}(\rho , \rho') = \sup \{ |\E_{\vtheta\sim\rho} f(\vtheta) - \E_{\vtheta\sim\rho'} f(\vtheta) | : |f| \leq 1, \quad \|f\|_{\rm Lip} \leq 1 \}.
    \end{align*}
    We have the following relationship between the BL-divergence and the Wasserstein distance (see for example \cite[Appendix A]{chizat2018global} for more details): 
    \begin{align*}
        d_{BL}(\rho, \rho') \leq \wass_2(\rho,\rho').
    \end{align*}
    Hence, 
    \begin{align*}
        |\E_{\rho} \sigma(x; \vtheta) - \E_{\rho'}  \sigma(x;\vtheta)| \leq K d_{BL}(\rho,\rho') \leq K \wass_2(\rho,\rho'),
    \end{align*}
    which implies that the RHS of (\ref{eq:Lipcnt}) is upper bounded by 
    \begin{align*}
     K |w_2| (\|\vw_1 - \vw'_1 \|_2 + \wass_2(\rho, \rho')) + K |w_2 - w_2'| \leq  K(1+ |w_2|) \left(|w_2-w_2'| + \|\vw_1 - \vw'_1\|_2 + \wass_2(\rho,\rho')\right)
    \end{align*}
This concludes the proof of (\ref{eq:l1}).     
    The Lipschitz continuity of $\nabla_{\vtheta} \Psi(\vtheta;\rho)$ follows from the same argument.
    
\end{enumerate}

\end{proof}


\subsection{Three-layer networks}
\label{app:priori3}

\begin{lemma}
\label{lemma:priori,3L}
Assume that \ref{Assump 3:1bound,cont}-\ref{Assump 3:2init} hold, and let $H_2,f,\dw_1, \dw_2, \dw_3, \dH_1, \dH_2$ be defined in (\ref{equ:forward,MF,3L}) and (\ref{equ:backward,MF,3L}). Then, for any fixed $T$, and given a neuronal embedding $$\left\{(\Omega_1 \times \Omega_2, \mathcal{F}_1 \times \mathcal{F}_2, \mathbb{P}_1 \times \mathbb{P}_2 ), w_1(0, \cdot), w_2(0, \cdot, \cdot),w_3(0, \cdot)  \right\},$$ there exists a universal constant $K$ and universal constants $K_{3, 2}(\gamma, T)$, $K_{3, 3}(\gamma, T)$ only depending on $\gamma, T$ such that the following results hold.
\begin{enumerate}
    \item (Boundedness) We have that, for any $\mW, z$, for any $t \in [0,T]$ and for any $c_1 \in \Omega_1, c_2 \in \Omega_2$,
    \begin{itemize}
        \item $
           |f(x;\mW(t))| \leq K \esssup_{C_2} |w_3(t,C_2)|
         $ 
        
        \item $
          H_2(\vx,c_2;\mW(t))|\leq K \esssup_{C_1,C_2} |w_2(t,C_1,C_2)| 
         $
        
        \item $
           |\dw_3(\vz,c_2;\mW(t))| \leq K
        $
        
        \item $
          |\dH_2(\vz,c_2;\mW(t))| \leq K \esssup_{C_2}  |w_3(t,C_2)| 
        $
        
        \item $
          |\dw_2(\vz,c_1, c_2; \mW(t))| \leq K \left( \esssup_{{C_1,C_2}} |w_2(t,C_1,C_2)|\right) \esssup_{{C_2}} |w_3(t,C_2)|
         $
        
        \item $\begin{aligned}
          |\dH_1(\vx,c_1;\mW(t))| & \leq K\esssup_{C_2}|w_3(t,C_2)| \esssup_{C_1,C_2} |w_2(t,C_1,C_2)| 
        \end{aligned} $
        
        \item $
          \|\dw_1(\vx,c_1;\mW(t))\|_2 \leq K\esssup_{C_2}|w_3(t,C_2)| \esssup_{C_1,C_2} |w_2(t,C_1,C_2)|
         $
    \end{itemize}

    \item (Boundedness for mean-field ODE) We have that, for any $t \leq T$,
    \begin{align}
        \esssup_{C_2} |w_3(t,C_2)| \leq K_{3,3}(\gamma,T), \quad  \esssup_{C_1,C_2} |w_2(t,C_1, C_2)| \leq K_{3,2}(\gamma,T).
    \end{align}
    
    \item (Lipschitz continuity) We have that, for any $t \leq T$, 
    \begin{itemize}
        \item $| H_1(\vx,c_1; \mW(t)) - H_1(\vx,c_1; \Tilde{\mW}(t)) | \leq K \|\vw_1(t,c_1) - \Tilde{\vw}_1(t,c_1)\|_2 $
        \item $\begin{aligned}[t]
             & | H_2(\vx,c_2; \mW(t)) - H_2(\vx,c_2; \Tilde{\mW}(t))|  \notag\\
             & \leq K\esssup_{C_1}  \left(|w_2(t,C_1,c_2)| \|\vw_1(t,C_1) - \Tilde{\vw}_1(t,C_1)\|_2 + |w_2(t,C_1,c_2) -\Tilde{w}_2(t,C_1,c_2) |\right)
        \end{aligned} $  
     \item $\begin{aligned}[t]
            &|f(\vx; \mW(t)) - f(\vx; \Tilde{\mW}(t))| \notag\\
       & \quad\leq K \esssup_{C_1,C_2} (|w_3(t,C_2)|\cdot |w_2(t,C_1,C_2)|\cdot \|\vw_1(t,c_1) - \Tilde{\vw}_1(t,c_1)\|_2 \notag \\
        &\quad \quad + |w_3(t,c_2)|\cdot|w_2(t,c_1,c_2) -\Tilde{w}_2(t,c_1,c_2) | + |w_3(t,c_2) - \Tilde{w}_3(t,c_2)|) 
        \end{aligned} $ 
    \item $\begin{aligned}[t]
        &|\dw_3(\vz,c_2;\mW(t)) - \dw_3(\vz,c_2;\Tilde{\mW}(t))| \notag\\
     &\quad \leq K \left( |H_2(\vx,c_2;\mW(t)) - H_2(\vx,c_2;\Tilde{\mW}(t))| +   |f(\vx; \mW(t)) - f(\vx; \Tilde{\mW}(t))| \right)  
    \end{aligned}$   
    \item $\begin{aligned}[t]
        &|\dH_2(\vz,c_2;\mW(t)) - \dH_2(\vz,c_2;\Tilde{\mW}(t))| \notag \\
        &\quad \leq K |w_3(t,c_2)| \cdot |H_2(\vx,c_2;\mW(t)) - H_2(\vx,c_2;\Tilde{\mW}(t))| + K |w_3(t,c_2) - \Tilde{w}_3(t,c_2)| \notag \\
       & \hspace{16em}+ K |w_3(t,c_2)| \cdot |f(\vx;\mW(t)) - f(\vx;\Tilde{\mW}(t))|
    \end{aligned}$   
    \item $\begin{aligned}[t]
        & |\dw_2(\vz,c_1,c_2;\mW(t)) - \dw_2(\vz,c_1,c_2;\Tilde{\mW}(t))|\\
        &\quad \leq K |\dH_2(\vx,c_2;\mW(t)) - \dH_2(\vz,c_2;\Tilde{\mW}(t))| \\
        &\quad+ K |\dH_2(\vz,c_2;\mW(t))| \cdot \|\vw_1(t,c_1) -\Tilde{\vw}_1(t,c_1) \|_2\\
    \end{aligned}$   
    \item $\begin{aligned}[t]
        & \|\dw_1(\vz,c_1; \mW(t)) - \dw_1(\vz,c_1;\Tilde{\mW}(t))\|_2 \\
        &\quad \leq K |\E_{C_2} \left[ \dH_2(\vz,C_2;\mW(t)) w_2(t,c_1,C_2)\right]| \cdot \|\vw_1(t,c_1) - \Tilde{\vw}_1(t,c_1)\|_2 \\
       &\quad\quad + K| \E_{C_2} \left[ \dH_2(\vz,C_2;\Tilde{\mW}(t)) \Tilde{w}_2(t,c_1,C_2) - \dH_2(\vz,C_2;\mW(t)) w_2(t,c_1,C_2)\right] |.
    \end{aligned}$ 
    
    \end{itemize}

\end{enumerate}
\end{lemma}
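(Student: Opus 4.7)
The lemma splits into three independent pieces that I would attack in order: (1) pointwise boundedness of the forward/backward quantities follows directly from Assumption \ref{Assump 3:1bound,cont}; (2) boundedness along the mean-field ODE trajectory requires a cascaded Gronwall argument that exploits the layered structure of the heavy-ball dynamics; (3) the Lipschitz bounds are telescoping chain-rule arguments. The one genuinely delicate piece is (2), because of the second-order ODE coupled across three layers.

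\textbf{Part 1 (pointwise boundedness).} I would substitute the uniform bounds $\|\sigma_i\|_\infty, \|\sigma_i'\|_\infty\leq K$, $\|\vx\|_2\leq K$, $|\partial_2 R|\leq K$ into the definitions~(\ref{equ:forward,MF,3L})--(\ref{equ:backward,MF,3L}) and peel off one factor at a time. For example, for $f$ one uses $|w_3(t,C_2)\sigma_2(H_2(\vx,C_2;\mW(t)))|\leq K\,\esssup_{C_2}|w_3(t,C_2)|$; for $\dw_1=\dH_1\,\vx$ the factor $\vx$ contributes $K$, and $\dH_1$ is bounded by $K\,\esssup|w_3|\cdot\esssup|w_2|$ after pulling the $C_2$-expectation inside the absolute value. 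Each line of Part 1 is a one- or two-step triangle inequality of this form.

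\textbf{Part 2 (boundedness along the mean-field ODE).} This is where I expect the main technical work. The key observation is that $|\dw_3|\leq K$ uniformly (it does not depend on other weights), so $w_3$ can be bounded first. Using $\vr(0)=\mathbf 0$, the variation-of-constants formula applied to $dr_3/dt=-\gamma r_3-\E_\vz\dw_3$ gives $|r_3(t,c_2)|\leq \int_0^t e^{-\gamma(t-s)}K\,ds\leq K/\gamma$, and then $|w_3(t,c_2)|\leq|w_3(0,c_2)|+\int_0^t|r_3(s,c_2)|\,ds\leq K+Kt/\gamma$; taking $\esssup$ in $c_2$ and over $t\in[0,T]$ defines $K_{3,3}(\gamma,T)$. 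Now $|\dw_2|\leq K\cdot K_{3,3}(\gamma,T)\cdot\esssup_{C_1,C_2}|w_2(t,C_1,C_2)|$ by Part 1, and the same variation-of-constants applied to $r_2$ and then to $w_2$ produces an integral inequality of the form
\begin{equation*}
u(t)\;\leq\; K+C\int_0^t(t-\tau)\,u(\tau)\,d\tau,\qquad u(t):=\esssup_{C_1,C_2}|w_2(t,C_1,C_2)|,
\end{equation*}
with $C=C(\gamma,K,K_{3,3}(\gamma,T))$. This is a Pachpatte-type second-order Gronwall inequality (cf.\ Lemma~\ref{lemma:Pachatte}) and yields the desired bound $u(t)\leq K_{3,2}(\gamma,T)$. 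The subtlety is that one must take essential suprema in $c_1,c_2$ \emph{before} applying Gronwall, which is legitimate because the RHS of the ODE for $(w_2(t,c_1,c_2),r_2(t,c_1,c_2))$ depends on $(c_1,c_2)$ only through the weights at the same indices plus data expectations.

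\textbf{Part 3 (Lipschitz continuity).} Each bound reduces to a standard add--subtract argument; I would do them in the order listed. For $H_1$, use Cauchy--Schwarz and $\|\vx\|_2\leq K$. For $H_2$, insert the mixed term $\pm\,w_2(t,C_1,c_2)\sigma_1(H_1(\vx,C_1;\Tilde\mW(t)))$ and use $\|\sigma_1\|_\infty,\|\sigma_1'\|_\infty\leq K$ together with the $H_1$ bound. For $f$, insert $\pm\,w_3(t,C_2)\sigma_2(H_2(\vx,C_2;\Tilde\mW(t)))$ and use the $H_2$ bound. For the backward quantities, combine the $K$-Lipschitz continuity of $\partial_2 R$ (which converts a difference in $\mW$ into $|f(\vx;\mW)-f(\vx;\Tilde\mW)|$) with its $K$-boundedness, and again use telescoping on the activation factors; the stated bound on $\dw_1$ arises by keeping $\E_{C_2}$ inside the absolute value rather than pushing the sup through it. No new ideas beyond bookkeeping are needed, provided one has Parts 1--2 in hand.
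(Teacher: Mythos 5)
Your proposal is correct and its overall architecture (pointwise bounds from \ref{Assump 3:1bound,cont}, a cascaded layer-by-layer bound along the trajectory, then telescoping add--subtract for the Lipschitz estimates) matches the paper's proof; Parts 1 and 3 are essentially identical to what the paper does. The one place you genuinely diverge is Part 2. The paper writes the second-order dynamics in double-integral form, bounds $|w_3|$ by an ordinary Gronwall argument, and then observes that $|\dw_2|\leq K\esssup_{C_2}|w_3(t,C_2)|\leq K_{3,3}(\gamma,T)$ \emph{without} any factor of $\esssup|w_2|$ (because $\sigma_1$ is bounded), so the inequality for $w_2$ is again closed by a plain first-order Gronwall lemma --- no Pachpatte-type inequality is needed anywhere in this lemma. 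You instead apply variation of constants to the momentum variables (using $\vr(0)=\mathbf 0$ and $e^{-\gamma(t-s)}\leq 1$), which is arguably cleaner since it removes the damping term from the integral inequality entirely, but you then feed in the weaker Part-1 bound $|\dw_2|\leq K\,K_{3,3}(\gamma,T)\,\esssup|w_2|$ and so must solve a second-order inequality $u(t)\leq K+C\int_0^t(t-\tau)u(\tau)\,d\tau$ via Pachpatte. Both routes yield valid constants $K_{3,3}(\gamma,T)$ and $K_{3,2}(\gamma,T)$; yours is self-contained and robust to unbounded $\sigma_1$, while the paper's is shorter because it exploits the boundedness of the activation one step earlier. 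Your remark about taking essential suprema before applying Gronwall is the same measurability point the paper handles implicitly, and is handled correctly.
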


\begin{proof}

\begin{enumerate}
    \item By the definition and assumption \ref{Assump 3:1bound,cont}, we have
    \begin{itemize}
        \item $\begin{aligned}[t]
          |f(x;\mW(t))| &= |\E_{C_2}  w_3(t,C_2) \sigma_2(H_2(t,\vx,C_2))| \\
          &\leq K |\E_{C_2} w_3(t,C_2)| \leq \esssup_{C_2} |w_3(t,C_2)|
        \end{aligned}$
        
        \item $ \begin{aligned}[t]
          |H_2(\vx,c_2;\mW(t))| & = |\E_{C_1} w_2(t,C_1,c_2) \sigma_1(H_1(t,\vx, C_1))| \notag\\
         &\leq  \esssup_{C_1,C_2} |w_2(t,C_1,C_2) \sigma_1(H_1(\vx, C_1;\mW(t)))| \\
         &\leq K\esssup_{C_1, C_2} |w_2(t,C_1,C_2)| 
        \end{aligned}$

        \item $\begin{aligned}[t]
          |\dw_3(\vx,c_2;\mW(t))| &= |\partial_2 R(y;f(\vx;\mW(t))) \sigma_2(H_2(\vx,c_2;\mW(t)))| \leq K
        \end{aligned}$

        \item $\begin{aligned}[t]
          |\dH_2(\vx,c_2;\mW(t))| &= |\partial_2 R(y;f(\vx;\mW(t))) w_3(t,c_2)\sigma_2'(H_2(\vx,c_2;\mW(t)))|  \\
          & \leq\esssup_{C_2} |w_3(t,C_2)\sigma_2'(H_2(\vx,C_2; \mW(t)))| \leq K\esssup_{C_2}|w_3(t,C_2)|
        \end{aligned}$
        
        \item $\begin{aligned}[t]
          |\dw_2(\vx,c_1,c_2;\mW(t))| & = |\dH_2(\vx,c_2;\mW(t)) \sigma_2(H_1(\vx,c_1;\mW(t)))| \notag\\
         &\leq  K \esssup_{C_2}|\dH_2(\vx,C_2; \mW(t))| \leq  K\esssup_{C_2}|w_3(t,C_2)|  
        \end{aligned}$
        
        \item $\begin{aligned}[t]
          |&\dH_1(\vx,c_1;\mW(t))| = |\E_{C_2}\dH_2(\vx,C_2;\mW(t)) w_2(t,c_1, C_2) \sigma'_1 (H_1(\vx,c_1;\mW(t)))| \notag \\
         &\leq  \esssup_{C_2} |\dH_2(\vx,C_2;\mW(t))| \esssup_{C_1,C_2} |w_2(t,C_1,C_2)| \esssup_{C_1} |\sigma'_1 (H_1(\vx,C_1;\mW(t)))| \notag \\
         &\leq K\esssup_{C_2}|w_3(t,C_2)| \esssup_{C_1,C_2} |w_2(t,C_1,C_2)|
        \end{aligned}$
        
        \item $\begin{aligned}[t]
           \|\dw_1(\vx,c_1;\mW(t))\|_2 &= \|\dH_1(\vx,c_1;\mW(t)) \vx\|_2 \leq \esssup_{C_1} |\dH_1(\vx,C_1;\mW(t))| \|\vx\|_2 \\
           &\leq K\esssup_{C_2}|w_3(t,C_2)| \esssup_{C_1,C_2} |w_2(t,C_1,C_2)|
        \end{aligned}$
    \end{itemize}

    
    \item We have that, for any $t \leq T$,
    \begin{align*}
        |w_3(t,c_2)| &\leq  |w_3(0,c_2)| + \gamma \int_{0}^{t}( |w_3(0,c_2)|+ |w_3(s,c_2)|) \, ds  \\
        &\hspace{12em}+   \int_{0}^{t} \int_{0}^{s} |\E_{\vx} \dw_3(u,\vx,c_2)| \, du \, ds \\
        & \leq K + K \gamma T + K T^2 + \gamma \int_{0}^{t} |w_3(s,c_2)| \, ds \\
        & \leq (K + K \gamma T + K T^2)e^{\gamma T} := K_{3,3}(\gamma,T) ,
    \end{align*}
    which readily gives the first claim. Next, we write
    \begin{align*}
        |w_2(t,c_1,c_2)| & \leq |w_2(0,c_1,c_2)| + \gamma \int_{0}^{t} (|w_2(0,c_1,c_2)|+ |w_2(s,c_1,c_2)|) \, ds \\
        &\hspace{13em}+   \int_{0}^{t} \int_{0}^{s} |\E_{\vx} \dw_2(\vx,c_1,c_2;\mW(u))| \, du \, ds \\
        & \leq K + K\gamma T + \gamma \int_{0}^{t}  |w_2(s,c_1,c_2)| \, ds + K_{3,3}(\gamma,T) T^2 ,
    \end{align*}
    which by Gronwall's lemma, implies that
    \begin{align*}
        \esssup_{C_1,C_2}|w_2(t,C_1,C_2)| \leq (K + K\gamma T + K_{3,3}(\gamma,T) T^2 ) e^{KT} := K_{3,2}(\gamma,T).
    \end{align*}
    
    \item For the Lipschitz continuity argument, we have 
    \begin{itemize}
        \item $
        \begin{aligned}[t]
        & | H_1(\vx,c_1; \mW(t)) - H_1(\vx,c_1; \Tilde{\mW}(t)) | = | \vx^T (\vw_1(t,c_1) - \Tilde{\vw}_1(t,c_1))| \\
        & \hspace{16em}\leq K \|\vw_1(t,c_1) - \Tilde{\vw}_1(t,c_1)\|_2 \notag 
        \end{aligned}
        $
        
        \item $
        \begin{aligned}[t]
        & | H_2(\vx,c_2; \mW(t)) - H_2(\vx,c_2; \Tilde{\mW}(t))|  \notag\\
        &= | \E_{C_1} w_2(t,C_1,c_2) \sigma_1(\vw_1(t,C_1)^T \vx) - \E_{C_1} \Tilde{w}_2(t,C_1,c_2) \sigma_1(\Tilde{\vw}_1(t,C_1)^T \vx) | \\
       &\leq  K \esssup_{C_1} \left(|w_2(t,C_1,c_2)| \, \|\vw_1(t,C_1) - \Tilde{\vw}_1(t,C_1)\|_2 + |w_2(t,C_1,c_2) -\Tilde{w}_2(t,C_1,c_2) |\right) 
        \end{aligned}
        $
        
        \item $
        \begin{aligned}[t]
         &|f(\vx; \mW(t)) - f(\vx; \Tilde{\mW}(t))| \notag\\
       & \quad \leq |\E_{C_2} w_3(t,C_2 )\sigma_2(H_2(\vx,C_2; \mW(t))) - \E_{C_2} \Tilde{w}_3(t,C_2 )\sigma_2(H_2(\vx,C_2; \Tilde{\mW}(t)))  | \notag\\
       &\quad \leq K \esssup_{C_1,C_2} (|w_3(t,C_2)|\cdot |w_2(t,C_1,C_2)|\cdot \|\vw_1(t,C_1) - \Tilde{\vw}_1(t,C_1)\|_2 \notag \\
        &\quad \quad+ |w_3(t,C_2)|\cdot|w_2(t,C_1,C_2) -\Tilde{w}_2(t,C_1,C_2) | + |w_3(t,C_2) - \Tilde{w}_3(t,C_2)|) 
        \end{aligned}
        $
        
        \item $
        \begin{aligned}[t]
         &|\dw_3(\vz,c_2;\mW(t)) - \dw_3(\vz,c_2;\Tilde{\mW}(t))| \notag\\
       &\quad= |\partial_2 R(y, f(\vx,\mW(t)))\\
       &\hspace{3em}\cdot \sigma_2(H_2(\vx,c_2;\mW(t)))  -  \partial_2 R(y, f(\vx,\Tilde{\mW}(t))) \sigma_2(H_2(\vx,c_2;\Tilde{\mW}(t)))| \notag \\
     &\quad \leq K \left( |H_2(\vx,c_2;\mW(t)) - H_2(\vx,c_2;\Tilde{\mW}(t))| +   |f(\vx; \mW(t)) - f(\vx; \Tilde{\mW}(t))| \right) 
        \end{aligned}
        $

        \item $
        \begin{aligned}[t]
        &|\dH_2(\vz,c_2;\mW(t)) - \dH_2(\vz,c_2;\Tilde{\mW}(t))| \notag \\
       & \quad = |\partial_2 R(y, f(\vx,\mW(t))) w_3(t,c_2) \sigma'_2(H_2(\vx,c_2;\mW(t))) \\
       &\hspace{6em}-  \partial_2 R(y, f(\vx,\Tilde{\mW}(t))) \Tilde{w}_3(t,c_2)\sigma_2'(H_2(\vx,c_2;\Tilde{\mW}(t)))| \notag \\
        &\quad \leq K |w_3(t,c_2)| \cdot |H_2(\vx,c_2;\mW(t)) - H_2(\vx,c_2;\Tilde{\mW}(t))| + K |w_3(t,c_2) - \Tilde{w}_3(t,c_2)| \notag \\
       & \hspace{6em}+ K |w_3(t,c_2)| \cdot|f(\vx;\mW(t)) - f(\vx;\Tilde{\mW}(t))|
        \end{aligned}
        $
        
        \item $
        \begin{aligned}[t]
        & |\dw_2(\vz,c_1,c_2;\mW(t)) - \dw_2(\vz,c_1,c_2;\Tilde{\mW}(t))|\notag \\
        &\quad = |\dH_2(\vz,c_2;\mW(t))\sigma_1(\vw_1(t,c_1)^T \vx)  -  \dH_2(\vz,c_2;\Tilde{\mW}(t))\sigma_1(\Tilde{\vw}_1(t,c_1)^T \vx)| \notag\\
        &\quad \leq K |\dH_2(\vx,c_2;\mW(t)) - \dH_2(\vz,c_2;\Tilde{\mW}(t))| \\
        &\hspace{8em}+ K |\dH_2(\vz,c_2;\mW(t))| \cdot \|\vw_1(t,c_1) -\Tilde{\vw}_1(t,c_1) \|_2
        \end{aligned}
        $
        
        \item $
        \begin{aligned}[t]
          & \|\dw_1(\vz,c_1; \mW(t)) - \dw_1(\vz,c_1;\Tilde{\mW}(t))\|_2  \notag \\
        & \le K |\E_{C_2} \dH_2(\vz,C_2;\mW(t)) w_2(t,c_1,C_2) \sigma_1'(\vw_1(t,c_1)^T \vx)  &\\
        &\hspace{10em}-  \E_{C_2} \dH_2(\vz,C_2;\Tilde{\mW}(t)) w_2(t,c_1,C_2) \sigma_1'(\Tilde{\vw}_1(t,c_1)^T \vx) | \notag \\
        & \leq K |\E_{C_2} \left[ \dH_2(\vz,C_2;\mW(t)) w_2(t,c_1,C_2)\right]| \cdot \|\vw_1(t,c_1) - \Tilde{\vw}_1(t,c_1)\|_2  \notag\\
       & \,\,+ K | \E_{C_2} \left[ \dH_2(\vz,C_2;\Tilde{\mW}(t)) \Tilde{w}_2(t,c_1,C_2) - \dH_2(\vz,C_2;\mW(t)) w_2(t,c_1,C_2)\right] |.
        \end{aligned}
        $
    \end{itemize}

\end{enumerate}

\end{proof}

\section{Existence and uniqueness of the mean-field limit}
\label{section:app,exist}

\subsection{Two-layer networks}
In this section, we prove the existence and uniqueness of the mean-field limit for two-layer networks. 
We recall the mean-field ODE again here:
\begin{align}
    d \vtheta(t) &= \vr(t) dt,\notag \\
    d \vr(t) &= \left( - \gamma \vr(t) - \nabla_{\vtheta}  \Psi(\vtheta(t);\rho^{\vtheta}(t)) \right) \, dt.
\end{align}

The proof follows from constructing a Picard type of iteration, similarly to \cite[Section 4]{sirignano2020mean}, \cite[Theorem C.4]{javanmard2020analysis}. Below is an adaptation of the strategy in \cite[Theorem 1.1]{10.1007/BFb0085169}.  
We first write the integral form of the mean-field ODE:
\begin{align}
\label{equ:ND,2L,noiseless,integral}
     \vtheta(t) &=  \vtheta(0) - \gamma \int_0^t(\vtheta(s) - \vtheta(0))  \,ds -  \int_{0}^{t} \int_{0}^{s} \nabla_{\vtheta}  \Psi(\vtheta(u);\rho^{\vtheta}(u)) \,du \,ds, \\
     \vr(t) &= \vr(0) - \gamma (\vtheta(t) - \vtheta(0)) - \int_{0}^{t} \Psi(\vtheta(s);\rho^{\vtheta}(s))  \,ds,
\end{align}
where $\rho(t)$ is the law of $(\vtheta(t),\vr(t))$, and we use $\rho^{\vtheta}(t), \rho^{\vr}(t)$ to denote the $\vtheta$ and $\vr$ marginals, respectively.
We define the space $\mathcal{P}_2(\R^D \times \R^D)$ to be the space of probability measures on $\R^D \times \R^D $ equipped with Wasserstein metric $W_2$, and we have $\rho(t) \in \mathcal{P}_2(\R^D \times \R^D$).
We define the space $\mathcal{C}\left([0,T], \mathcal{P}_2(\R^D \times \R^D)\right)$ to be the space of continuous maps $\rho(\cdot;T): [0,T] \rightarrow \mathcal{P}_2(\R^D \times \R^D)$. We omit $T$ when there's no confusion. The space is equipped with the following metric: $d_{T}(\rho_1, \rho_2) = \sup_{t \in [0,T]} \wass_2(\rho_1(t), \rho_2(t))$. 

Note that the space $\left( \mathcal{P}_2(\R^D \times \R^D), \wass_2 \right)$ is a complete space \cite[Theorem 8.7]{ambrosio2021lectures}. Thus for any fixed $0 < T < \infty$, the space $\left( \mathcal{C}\left([0,T], \mathcal{P}_2(\R^D \times \R^D)\right) \times d_{T} \right)$ is also complete.


Next, we define the  operator $H_T(\cdot, \vtheta(0)) : \mathcal{C}\left([0,T], \mathcal{P}_2(\R^D \times \R^D)\right) \rightarrow \mathcal{C}\left([0,T], \mathcal{P}_2(\R^D \times \R^D)\right)$ as follows: 
\begin{align}
    &H_T(\rho_1; \vtheta(0)) := \widetilde{\rho}, \quad \widetilde{\rho}(t) := \{{\rm Law}(\widetilde{\vtheta}(t), \widetilde{\vr}(t))\}_{t\le T}  \notag \\
    &\widetilde{\vtheta}(t) =  \vtheta(0) - \gamma \int_0^t( \widetilde{\vtheta}(s) - \vtheta(0))  \,ds -  \int_{0}^{t} \int_{0}^{s} \nabla_{\vtheta}  \Psi(\widetilde{\vtheta}(u);\rho_1^{\vtheta}(u)) \,du \,ds, \label{def:exist,2L,operator H}
\end{align}
where $\vtheta(0) $ denotes the parameters of the mean-field ODE (\ref{equ:ND,2L,noiseless,integral}) at initialization, which means that the stochastic process we defined in (\ref{def:exist,2L,operator H}) is coupled with the mean-field ODE. 

Note that the $\rho_1^{\vtheta}(t)$ in (\ref{def:exist,2L,operator H}) is no longer the law of $\widetilde{\vtheta}(t)$, but the input distribution. We use $H_T(\rho(t))$ to denote $H_T(\rho; \vtheta(0))(t)$, that is the distribution of the solution (\ref{def:exist,2L,operator H}) at time $t$. We omit $\vtheta(0)$ when there is no confusion. The definition of $H_T$ can be interpreted as follows: it maps $\rho \in \mathcal{C}\left([0,T], \mathcal{P}_2(\R^D \times \R^D)\right)$ as input to output the law of $(\vtheta(t),\vr(t))$ which evolves according to the stochastic process induced by the probability measure $\rho(t)$.

 It is easy to see that the fixed point of $H_{T}$ is the solution of the non-linear dynamics (\ref{equ:ND,2L,noiseless,integral}). Thus, our goal is to show that there exist a $T_0$ such that $H_{T_0}$ has unique fixed point, or equivalently 
 %
%
that $H_{T_0}$ is  a strict contraction. 
 \begin{proposition}
 \label{prop:contraction of MFODE,2L}
 Under Assumptions \ref{Assump 2:1bound,cont}-\ref{Assump 2:2init}, there exists a $T_0$ only depending on $K,\gamma$ and a $C(T_0)\in (0, 1)$ such that, for all $\rho_1, \rho_2 \in \mathcal{C}\left([0,T_0], \mathcal{P}_2(\R^D \times \R^D)\right)$ with the same initialization $(\vtheta_1(0), \vr_1(0)) = (\vtheta_2(0), \vr_2(0))$, we have: 
 \begin{align*}
    d_{T_0}(H_{T_0}\left(\rho_1\right), H_{T_0}\left(\rho_2 \right)) \leq C(T_0) d_{T_0}(\rho_1, \rho_2)  .
 \end{align*}
 \end{proposition}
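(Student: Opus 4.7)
The plan is to couple the two dynamics through their common initialization and bound the resulting $\wass_2$ distance by a constant $C(T_0) = O(T_0^2)$ times $d_{T_0}(\rho_1, \rho_2)$, which can be made strictly less than one by choosing $T_0$ small. Concretely, let $(\tilde\vtheta_i(t), \tilde\vr_i(t))$ denote the processes driven by $\rho_i^\vtheta$ as in \eqref{def:exist,2L,operator H}, starting from the common $(\vtheta(0), \vr(0))$. The natural coupling is to use the same initialization for both, so $\wass_2(H_{T_0}(\rho_1)(t), H_{T_0}(\rho_2)(t))^2 \le \E[\|\tilde\vtheta_1(t) - \tilde\vtheta_2(t)\|_2^2 + \|\tilde\vr_1(t) - \tilde\vr_2(t)\|_2^2]$.

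First, I would establish an a priori boundedness statement for $\tilde w_{2,i}(t)$ along the auxiliary dynamics, exactly in the spirit of part (2) of Lemma~\ref{lemma:priori,2L}. The bound $|\nabla_{w_2}\Psi| \le K$ from part (1) is independent of the argument, so integrating the defining equation \eqref{def:exist,2L,operator H} and applying Gronwall's lemma yields $|\tilde w_{2,i}(t)| \le K_2(\gamma, T_0)$ uniformly in $i$, $t \le T_0$, and in the choice of $\rho_i$. This is essential because the Lipschitz constant in part (3) of Lemma~\ref{lemma:priori,2L} carries a factor $(1+|w_2|)$, which is now under control.

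Next, I would subtract the two integral equations and apply the Lipschitz continuity of $\nabla_\vtheta \Psi$ from Lemma~\ref{lemma:priori,2L}(3):
\begin{align*}
\|\tilde\vtheta_1(t) - \tilde\vtheta_2(t)\|_2
&\le \gamma \int_0^t \|\tilde\vtheta_1(s) - \tilde\vtheta_2(s)\|_2 \, ds \\
&\quad + L \int_0^t \int_0^s \Bigl(\|\tilde\vtheta_1(u) - \tilde\vtheta_2(u)\|_2 + \wass_2(\rho_1^\vtheta(u), \rho_2^\vtheta(u))\Bigr) du \, ds,
\end{align*}
where $L$ depends only on $K$ and $K_2(\gamma, T_0)$. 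Taking expectations and using Jensen's inequality together with $\wass_2(\rho_1^\vtheta(u), \rho_2^\vtheta(u)) \le d_{T_0}(\rho_1, \rho_2)$, I obtain an inequality for $\phi(t) := \sqrt{\E\|\tilde\vtheta_1(t) - \tilde\vtheta_2(t)\|_2^2}$ of the form
\begin{align*}
\phi(t) \le \gamma \int_0^t \phi(s)\, ds + L \int_0^t \int_0^s \phi(u)\, du\, ds + \tfrac{L T_0^2}{2} d_{T_0}(\rho_1, \rho_2).
\end{align*}
An application of the second-order Gronwall lemma (Pachpatte's inequality, Lemma~\ref{lemma:Pachatte}) then gives $\phi(t) \le C_1(\gamma, T_0) \, T_0^2 \, d_{T_0}(\rho_1, \rho_2)$, with $C_1$ bounded on $[0, T_0]$. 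An analogous derivation controls $\sqrt{\E\|\tilde\vr_1(t) - \tilde\vr_2(t)\|_2^2}$, using the Lipschitz bound on $\Psi$ itself from Lemma~\ref{lemma:priori,2L}(3) and the just-obtained bound on $\phi$. Combining, $d_{T_0}(H_{T_0}(\rho_1), H_{T_0}(\rho_2)) \le C(T_0) \, d_{T_0}(\rho_1, \rho_2)$ with $C(T_0) = O(T_0^2)$, so choosing $T_0 = T_0(K, \gamma)$ small enough makes $C(T_0) < 1$.

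The main technical obstacle is making the Lipschitz constants and the resulting bound on $C(T_0)$ genuinely uniform over the input probability measures $\rho_1, \rho_2$, rather than only along the true mean-field trajectory. This is precisely what the auxiliary a priori bound on $|\tilde w_{2,i}|$ achieves, and why it must be proved for the Picard map before invoking Lemma~\ref{lemma:priori,2L}(3). Once the contraction on $[0, T_0]$ is established, Banach's fixed point theorem produces a unique solution on $[0, T_0]$, and concatenating the argument on successive intervals of length $T_0$ (the length depends only on $K$ and $\gamma$, not on the starting time) extends existence and uniqueness to arbitrary finite $T$.
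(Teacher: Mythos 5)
Your proposal is correct and follows essentially the same route as the paper's proof: couple the two auxiliary dynamics through the common initialization, use the boundedness and Lipschitz estimates of Lemma \ref{lemma:priori,2L} to derive a second-order integral inequality, apply Pachpatte's inequality (Corollary \ref{corollary: simplified Pachpatte}), and conclude a contraction factor of order $T_0^2$. Your explicit remark that the a priori bound on $|\tilde w_{2,i}(t)|$ must be re-derived for the Picard iterates (not just for the mean-field solution) is a point the paper passes over silently, but it is the same Gronwall argument and does not change the structure of the proof.
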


 \begin{proof}
 We first fix any $0<T< \infty$, and the space $\mathcal{C}\left([0,T], \mathcal{P}_2(\R^D \times \R^D)\right)$.  Given $\rho_1, \rho_2 \in \mathcal{C}\left([0,T], \mathcal{P}_2(\R^D \times \R^D)\right)$,  we define two dynamics as follows:
 \begin{align*}
       &\vtheta_1(t) =  \vtheta_1(0) - \gamma \int_0^t(\vtheta_1(s) - \vtheta_1(0))  \,ds -  \int_{0}^{t} \int_{0}^{s} \nabla_{\vtheta}  \Psi(\vtheta_1(u);\rho_1^{\vtheta}(u)) \,du \,ds, \\
       & \vtheta_2(t) =  \vtheta_2(0) - \gamma \int_0^t(\vtheta_2(s) - \vtheta_2(0))  \,ds -  \int_{0}^{t} \int_{0}^{s} \nabla_{\vtheta}  \Psi(\vtheta_2(u);\rho_2^{\vtheta}(u)) \,du \,ds.
 \end{align*}
 where $\vtheta_1(t) = \left(\vw_1^{(1)}, w_2^{(1)} \right)$ and $\vtheta_2(t) = \left(\vw_1^{(2)}, w_2^{(2)} \right)$.
 We want to upper bound the difference between these two dynamics, which will give us an upper bound on $$d_T(H_{T}\left(\rho_1 \right), H_{T}\left(\rho_2\right)).$$ 
 For all $t \in [0,T]$, we have
 \begin{align*}
     \|\vtheta_1(t) - \vtheta_2(t)\|_2 &\leq  \gamma \int_{0}^{t} \|\vtheta_1(s) - \vtheta_2(s)\|_2 \, ds  \notag\\ 
      & \quad + \int_{0}^{t} \int_{0}^{s} \|\nabla_{\vtheta} \Psi(\vtheta_1(u);\rho_1^{\vtheta}(u)) - \nabla_{\vtheta} \Psi(\vtheta_2(u);\rho_2^{\vtheta}(u))\|_2 \,du \,ds .
\end{align*}

Now, by Lemma \ref{lemma:priori,2L}, we have that 
\begin{align*}
    \|\nabla_{\vtheta} \Psi(\vtheta_1(t)&;\rho_1^{\vtheta}(t)) - \nabla_{\vtheta} \Psi(\vtheta_2(t);\rho_2^{\vtheta}(t))\|_2 \\
    \leq & K(1+ |w^{(1)}_2(t)|) \left(|w^{(1)}_2(t)-w^{(2)}_2(t)| + \|\vw^{(1)}_1(t) - \vw^{(2)}_1(t)\|_2 + \wass_2(\rho^{\vtheta}_1(t),\rho^{\vtheta}_2(t))\right) \\
    \leq & 2 K(1+ K_2(\gamma,T))  \left( \|\vtheta_1(t) - \vtheta_2(t)\|_2 + \max_{s \in [0,T]} \wass_2(\rho^{\vtheta}_1(s),\rho^{\vtheta}_2(s))\right),
\end{align*}
where $\vtheta_i(t) = (\vw^{(i)}_1(t),w^{(i)}_2(t))$, for  $i \in {1,2}$.
Thus, we have that:
\begin{align*}
    \|\vtheta_1(t) - \vtheta_2(t)\|_2 &\leq 2 K(1+ K_2(\gamma,T)) T^2 \max_{t \in [0,T]} \wass_2(\rho^{\vtheta}_1(t),\rho^{\vtheta}_2(t)) +\gamma \int_{0}^{t} \|\vtheta_1(s) - \vtheta_2(s)\|_2 \, ds  \notag\\ 
      & \quad + 2 K(1+ K_2(\gamma,T)) \int_{0}^{t} \int_{0}^{s} \|\vtheta_1(u) - \vtheta_2(u)\|_2 \,du \,ds .
\end{align*}
Similarly for $\|\vr_1(t) - \vr_2(t)\|_2$, we have that:
\begin{align*}
    \|\vr_1(t) - \vr_2(t)\|_2 &\leq  \gamma \int_{0}^{t} \|\vr_1(s) - \vr_2(s)\|_2 \, ds + \int_{0}^{t} \|\Psi(\vtheta_1(s);\rho_1^{\vtheta}(s)) - \Psi(\vtheta_2(s);\rho_2^{\vtheta}(s))\|_2 \,ds \\
    & \leq   2K(1+ K_2(\gamma,T)) T \max_{t \in [0,T]} \wass_2(\rho^{\vtheta}_1(t),\rho^{\vtheta}_2(t)) + \gamma \int_{0}^{t} \|\vr_1(s) - \vr_2(s)\|_2 \, ds \notag \\
    & \quad +  2 K(1+ K_2(\gamma,T)) \int_{0}^{t} \|\vtheta_1(s) - \vtheta_2(s)\|_2  \,ds  .
\end{align*}

Putting these two results together we have:
\begin{align*}
   & \left\|\begin{pmatrix} \vtheta_1(t) \\ \vr_1(t) \end{pmatrix} -\begin{pmatrix} \vtheta_2(t) \\ \vr_2(t) \end{pmatrix} \right\|_2  \leq \|\vtheta_1(t) - \vtheta_2(t)\|_2 + \|\vr_1(t) - \vr_2(t)\|_2\\
    & \leq 4 K(1+ K_2(\gamma,T)) T^2 \max_{t \in [0,T]} \wass_2(\rho^{\vtheta}_1(t),\rho^{\vtheta}_2(t))\\
    &\quad+\gamma \int_{0}^{t} (\|\vtheta_1(s) - \vtheta_2(s)\|_2 + \|\vr_1(s) - \vr_2(s)\|_2) \, ds  \notag\\ 
     & \quad + 4 K(1+ K_2(\gamma,T)) \int_{0}^{t} \int_{0}^{s} \|\vtheta_1(u) - \vtheta_2(u)\|_2 \,du \,ds  \\
     & \leq 4 K(1+ K_2(\gamma,T)) T^2 \max_{t \in [0,T]} \wass_2(\rho^{\vtheta}_1(t),\rho^{\vtheta}_2(t)) + 2 \gamma \int_{0}^{t}  \left\|\begin{pmatrix} \vtheta_1(s) \\ \vr_1(s) \end{pmatrix} -\begin{pmatrix} \vtheta_2(s) \\ \vr_2(s) \end{pmatrix} \right\|_2 \, ds  \notag\\
     & \quad + 4 K(1+ K_2(\gamma,T)) \int_{0}^{t} \int_{0}^{s} \left\|\begin{pmatrix} \vtheta_1(u) \\ \vr_1(u) \end{pmatrix} -\begin{pmatrix} \vtheta_2(u) \\ \vr_2(u) \end{pmatrix} \right\|_2  \,du \,ds  .
\end{align*}
By Corollary \ref{corollary: simplified Pachpatte}, we have that:
\begin{align*}
&    \left\|\begin{pmatrix} \vtheta_1(t) \\ \vr_1(t) \end{pmatrix} -\begin{pmatrix} \vtheta_2(t) \\ \vr_2(t) \end{pmatrix} \right\|_2 \\
&\leq 4 K(1+ K_2(\gamma,T)) T^2  \left(1 + \exp\left(\frac{4 \gamma^2 + 4 K(1+ K_2(\gamma,T)) T }{2 \gamma} \right) \right)\max_{t \in [0,T]} \wass_2(\rho^{\vtheta}_1(t),\rho^{\vtheta}_2(t)).
\end{align*}
Thus, we have that
\begin{align*}
     \left\|\begin{pmatrix} \vtheta_1(t) \\ \vr_1(t) \end{pmatrix} -\begin{pmatrix} \vtheta_2(t) \\ \vr_2(t) \end{pmatrix} \right\|_2  \leq T^2 K(\gamma,T) \max_{t \in [0,T]} \wass_2(\rho^{\vtheta}_1(t),\rho^{\vtheta}_2(t)),
\end{align*}
which implies that
\begin{align*}
    \max_{t \in [0,T]}\wass_2(H_{T}(\rho_1(t)), H_{T}(\rho_2(t))) \leq T^2 K(\gamma,T) \max_{t \in [0,T]}\wass_2(\rho^{\vtheta}_1(t),\rho^{\vtheta}_2(t)),
\end{align*}
where we set $K(\gamma, T) = 4 K(1+ K_2(\gamma,T)) \left(1 + \exp{\frac{4 \gamma^2 + 4 K(1+ K_2(\gamma,T)) T }{4 \gamma}} \right)$.

Let $C(T) = T^2 K(\gamma, T)$. Then, we could always find a $T_0$ such that $C(T_0) < 1$ since $C(0) = 0$ and $C(T)$ is continuous in $T$, which finishes our proof.
 \end{proof}
 
By Banach's fixed point theorem, there exist a $T_0 >0$ such that the mean-field ODE has a unique solution in time interval $[0,T_0]$. Now, we show the existence and uniqueness of the solution of the mean-field ODE for any time period $[0,T]$. 
 
 \begin{theorem}
 \label{thm:existence and uniquess}
 Under Assumptions \ref{Assump 2:1bound,cont}-\ref{Assump 2:2init}, for any $T>0$, there exists a unique solution for the mean-field ODE (\ref{def:MFODE,2,noiseless}) in the interval $[0,T]$.
 \end{theorem}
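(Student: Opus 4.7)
The plan is to extend the short-time existence/uniqueness provided by Proposition~\ref{prop:contraction of MFODE,2L} to the full interval $[0,T]$ by iterating the contraction argument a finite number of times. The crucial ingredient is the global a-priori bound from part~2 of Lemma~\ref{lemma:priori,2L}: any solution of (\ref{def:MFODE,2,noiseless}) on a sub-interval of $[0,T]$ starting from a state with $|w_2|\le M$ satisfies $|w_2(t)|\le K_2(\gamma,T,M)$, with a constant depending only on $\gamma,T$ and the starting bound $M$. Combined with the linear ODE $d\vtheta=\vr\,dt$ and $d\vr=(-\gamma\vr-\nabla\Psi)\,dt$ plus the boundedness $\|\nabla\Psi\|_2\le K(1+|w_2|)$, this also yields a uniform bound on $\|\vr(t)\|_2$.

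First I would fix $T$, set $M_0:=K$ (the bound on $|w_2(0)|$ from Assumption~\ref{Assump 2:2init}) and let $M^{\star}:=K_2(\gamma,T,M_0)$ be a uniform upper bound valid for any solution on any sub-interval of $[0,T]$ whose starting sup-norm of $w_2$ is at most $M^{\star}$ (note that $K_2$ is monotone nondecreasing in its last argument; in particular $K_2(\gamma,T,M^{\star})=M^{\star}$ up to inflating $M^{\star}$ once, after which the bound is self-reproducing). Inspecting the proof of Proposition~\ref{prop:contraction of MFODE,2L}, the contraction length $T_0$ depends only on $\gamma,K$ and the constant that bounds $|w_2|$ throughout the interval. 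Replacing that local bound by $M^{\star}$ everywhere produces a \emph{uniform} contraction length $T_0^{\star}=T_0^{\star}(\gamma,M^{\star})>0$, valid regardless of which sub-interval of $[0,T]$ we restart on, as long as the starting state has $|w_2|\le M^{\star}$.

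Now I would proceed by induction. Choose $N:=\lceil T/T_0^{\star}\rceil$ and set $t_k=kT_0^{\star}\wedge T$ for $k=0,1,\dots,N$. On $[t_0,t_1]$, Proposition~\ref{prop:contraction of MFODE,2L} together with Banach's fixed-point theorem applied to $H_{T_0^{\star}}(\,\cdot\,;\vtheta(0))$ in the complete metric space $(\mathcal{C}([0,T_0^{\star}],\mathcal{P}_2(\R^D\times\R^D)),d_{T_0^{\star}})$ yields a unique fixed point, hence a unique solution of (\ref{equ:ND,2L,noiseless,integral}); by Lemma~\ref{lemma:priori,2L}, this solution satisfies $|w_2(t_1)|\le M^{\star}$ and $\|\vr(t_1)\|_2<\infty$. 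Using $(\vtheta(t_1),\vr(t_1))$ as the new initial condition, the identical contraction argument (with the same $T_0^{\star}$) produces a unique solution on $[t_1,t_2]$. Concatenating, and noting that the pieces match continuously at the junction points by construction, we obtain a solution on $[0,T]$ after $N$ steps. Uniqueness on $[0,T]$ follows by induction: two global solutions must coincide on $[0,t_1]$ by local uniqueness, hence share the same initial data on $[t_1,t_2]$, where local uniqueness again forces equality, and so on up to $[t_{N-1},T]$.

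The only delicate point—and the place where I expect any subtlety to hide—is justifying that the contraction length $T_0^{\star}$ can be taken \emph{independent} of the step index $k$. This requires using the global (not local) a-priori constant $M^{\star}$ inside the Lipschitz estimates for $\Psi$ and $\nabla_\vtheta\Psi$ provided by Lemma~\ref{lemma:priori,2L}, and checking that inflating the $w_2$-bound from $K$ to $M^{\star}$ in the proof of Proposition~\ref{prop:contraction of MFODE,2L} only changes the constants $K(\gamma,T)$ and $C(T)$ there but preserves the continuity $C(0)=0$, so a uniform $T_0^{\star}>0$ with $C(T_0^{\star})<1$ still exists. Everything else is a routine concatenation of local solutions.
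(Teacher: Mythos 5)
Your proposal is correct and follows essentially the same route as the paper: establish a contraction on a short interval of uniform length $T_0$ (Proposition~\ref{prop:contraction of MFODE,2L} plus Banach's fixed-point theorem) and then concatenate the local solutions over $[0,T_0],[T_0,2T_0],\dots$. The ``delicate point'' you flag—uniformity of the contraction length across restarts—is handled in the paper implicitly because the a-priori bound $K_2(\gamma,T)$ of Lemma~\ref{lemma:priori,2L} is already taken over the whole horizon $[0,T]$, so the contraction constant $C(T_0)$ depends only on $\gamma$, $K$ and $T$, not on which subinterval one restarts on.
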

 
 \begin{proof}
 The idea is to separate the time interval $[0,T]$ into subintervals of length $T_0$, that is, we consider the intervals $[0,T_0], [T_0, 2 T_0],...,[\lfloor\frac{T}{T_0} \rfloor T_0, T]$ . 
 Note that the contraction property we proved in Proposition \ref{prop:contraction of MFODE,2L}
 only depends on the length of the time interval, so the proof can be done recursively. That is:
 \begin{enumerate}
     \item In the interval $[0,T_0]$, (\ref{def:MFODE,2,noiseless}) with initialization $(\theta(0), r(0))$ has a unique solution $\{\rho(t)\}_{t \in [0,T_0]}$.
     \item In the interval $[T_0, 2 T_0]$, we consider  (\ref{def:MFODE,2,noiseless}) with initial distribution $\rho(T_0)$, and it has a unique solution  $\{\rho(t)\}_{t \in [T_0,2 T_0]}$.
     \item Recursively do  the above steps until the interval $[\lfloor\frac{T}{T_0} \rfloor T_0, T]$.
 \end{enumerate}
 Thus we have that, for any $T>0$, there exists a unique solution for (\ref{def:MFODE,2,noiseless}) in the interval $[0,T]$.
 \end{proof}

\subsection{Three-layer networks}
\label{section:app,chaos3l}
In this section, we prove the existence and the uniqueness of the mean-field ODE (\ref{def:MFODE,3}). The integral form of the mean-field ODE is given by
\begin{align}
    w_3(t,c_2) &= w_3(0,c_2) - \gamma \int_{0}^{t} (w_3(s,c_2)- w_3(0,c_2)) \, ds  -   \int_{0}^{t} \int_{0}^{s} \E_{\vz} \dw_3(u,\vx,c_2) \, du \, ds, \\
    w_2(t,c_1,c_2) &= w_2(0,c_1,c_2) - \gamma \int_{0}^{t} (w_2(s,c_1,c_2) - w_2(0,c_1,c_2)) \, ds - \int_{0}^{t} \int_{0}^{s} \E_{\vz} \dw_2(u,\vz,c_1,c_2) \, du \, ds, \\
    \vw_1(t,c_1) &= \vw_1(0,c_1) - \gamma \int_{0}^{t} (\vw_1(s,c_1)- \vw_1(0,c_1)) \, ds  -   \int_{0}^{t} \int_{0}^{s} \E_{\vz} \dw_1(u,\vz,c_1) \, du \, ds .
\end{align}

In order to prove the existence and the uniqueness, we follow the same Picard's iteration arguments as for the two-layers case. We first define the following norm:
\begin{align}
    \|\mW\|_{T} = \max \esssup_{C_1,C_2} \sup_{t \in [0,T]} \{ |w_2(t,C_1,C_2)|, |w_3(t,C_2)|\}.
\end{align}

Next, we define the following metric for two sets of mean-field parameters:
\begin{align}
    \dist_{T} (\mW, \mW') = \max \esssup_{C_1,C_2} &\sup_{t \in [0,T]} \big\{ \|\vw'_1(t,C_1) - \vw_1(t,C_1)\|_2, \notag \\
   & |w'_2(t,C_1,C_2) - w_2(t,C_1,C_2)|, |w'_3(t,C_2) - w_3(t,C_2)| \big\}.
\end{align}

Note that the metric we define above is not the metric induced by the norm, since in the definition of the norm we only require the boundedness of $w_2$ and $w_3$.

We define the following functional space of the mean-field parameters:
\begin{align}
    \mathcal{W}_{T}(\mW(0)) = \{ \{\widetilde{\mW}(t)\}_{t \in [0,T]} : \|\widetilde{\mW}\|_{T} < \infty, \widetilde{\mW}(0) = \mW(0)  \},
\end{align}

which means that all the $\widetilde{\mW} \in \mathcal{W}_{T}(\mW(0))$ have the same initialization $\mW(0)$. By Lemma \ref{lemma:priori,3L}, we know that:
\begin{align}
    \esssup_{C_1,C_2} |w_2(t,C_1,C_2)| \leq K_{3,2}(\gamma,T),\qquad \esssup_{C_1,C_2} |w_3(t,C_2)| \leq K_{3,3}(\gamma,T).
\end{align}
It is easy to see that the space $\mathcal{W}_{T}(\mW(0))$ is complete w.r.t. the metric  $\dist_{T} (\mW, \mW')$. Let us define the operator $H_{T}: \mathcal{W}_{T}(\mW(0)) \xrightarrow{} \mathcal{W}_{T}(\mW(0))$ as follows:

\textit{Input:} $ \{\mW(t)\}_{t \in [0,T]}$

\textit{Output:} $\{\widetilde{\mW}(t)\}_{t \in [0,T]}$, such that:
\begin{align}
    \widetilde{w}_3(t,c_2) &= w_3(0,c_2) - \gamma \int_{0}^{t} (w_3(s,c_2)- w_3(0,c_2)) \, ds  -   \int_{0}^{t} \int_{0}^{s} \E_{\vz} \dw_3(\vx,c_2; \mW(u)) \, du \, ds \\
     \widetilde{w}_2(t,c_1,c_2) &= w_2(0,c_1,c_2) - \gamma \int_{0}^{t} (w_2(s,c_1,c_2) - w_2(0,c_1,c_2)) \, ds - \int_{0}^{t} \int_{0}^{s} \E_{\vz} \dw_2(\vz,c_1,c_2; \mW(u)) \, du \, ds \\
     \widetilde{\vw}_1(t,c_1) &= \vw_1(0,c_1) - \gamma \int_{0}^{t} (\vw_1(s,c_1)- \vw_1(0,c_1)) \, ds  -   \int_{0}^{t} \int_{0}^{s} \E_{\vz} \dw_1(\vz,c_1; \mW(u)) \, du \, ds.
\end{align}

We aim to show the following proposition.
\begin{proposition}
\label{prop:contraction,3L}
 Under Assumptions \ref{Assump 3:1bound,cont}-\ref{Assump 3:3init momen}, there exists a $T_0$ only depending on $K,\gamma$ and  $C(T_0)\in (0, 1)$, such that, for all  $\mW^{1}, \mW^{2} \in \mathcal{W}_{T}(\mW(0))$, we have:
 \begin{align}
     \dist_{T_0} (H_{T_0} (\mW^1), H_{T_0}(\mW^2)) \leq C(T_0) \dist_{T_0} (\mW^1, \mW^2).
 \end{align}
\end{proposition}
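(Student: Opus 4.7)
The plan is to mimic the two-layer strategy of Proposition B.1 but keep careful track of the cross-layer couplings specific to the three-layer backward pass. I would first restrict attention to the closed ball $\mathcal{W}_T^M(\mW(0)) := \{\mW \in \mathcal{W}_T(\mW(0)) : \|\mW\|_T \le M\}$ for $M := 2(K_{3,2}(\gamma,T)+K_{3,3}(\gamma,T))$. Using the boundedness part of Lemma \ref{lemma:priori,3L} applied to $\widetilde{\mW} = H_T(\mW)$ (with $\mW \in \mathcal{W}_T^M$), the same Gronwall estimate used in Lemma \ref{lemma:priori,3L} shows $\|\widetilde{\mW}\|_T \le M$, so $H_T$ maps $\mathcal{W}_T^M(\mW(0))$ into itself. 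This ball, equipped with $\dist_T$, is complete, so Banach's fixed point theorem will apply once contraction is proven.

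Next, for two inputs $\mW^1,\mW^2 \in \mathcal{W}_T^M(\mW(0))$ sharing the initialization $\mW(0)$, I would subtract the two defining integral equations componentwise. For the third-layer weights,
\begin{align*}
\widetilde w_3^1(t,c_2)-\widetilde w_3^2(t,c_2) = -\gamma\!\int_0^t\!(w_3^1(s,c_2)-w_3^2(s,c_2))\,ds - \!\int_0^t\!\!\int_0^s \E_{\vz}\!\left[\dw_3(\vz,c_2;\mW^1(u))-\dw_3(\vz,c_2;\mW^2(u))\right]du\,ds,
\end{align*}
and similar expressions hold for $\widetilde w_2$ and $\widetilde{\vw}_1$. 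I would then apply the Lipschitz estimates of Lemma \ref{lemma:priori,3L} to each integrand: the bounds on $|\dw_3^1-\dw_3^2|$, $|\dw_2^1-\dw_2^2|$ and $\|\dw_1^1-\dw_1^2\|_2$ are controlled by differences in $H_1,H_2,f$, which in turn are bounded by $\esssup$ layer-wise weight differences, multiplied by factors that involve only $\|\mW^i\|_T \le M$. Taking $\esssup_{c_1,c_2}$ and $\sup_{t\le T}$, each of the three resulting estimates takes the form
\begin{align*}
\esssup_{c_1,c_2}\sup_{t\le T}|\widetilde w_\bullet^1 - \widetilde w_\bullet^2| \;\le\; (\gamma T + C_1(M,K)\,T^2)\,\dist_T(\mW^1,\mW^2),
\end{align*}
where $C_1(M,K)$ depends polynomially on $M$ and the universal constant $K$ of assumption \ref{Assump 3:1bound,cont}.

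Taking the maximum of the three estimates yields $\dist_T(H_T(\mW^1),H_T(\mW^2)) \le C(T)\,\dist_T(\mW^1,\mW^2)$ with $C(T) = \gamma T + C_1(M,K)\,T^2$, which is continuous in $T$ and vanishes at $T=0$. Choosing $T_0$ small enough (depending only on $K,\gamma$) so that $C(T_0)<1$ gives the desired contraction on $\mathcal{W}_{T_0}^M(\mW(0))$. Existence and uniqueness on arbitrary $[0,T]$ then follows by the same concatenation argument as in Theorem \ref{thm:existence and uniquess}, since $T_0$ does not depend on the initialization, only on $K,\gamma$.

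The main technical obstacle, compared with the two-layer case, is controlling $\|\dw_1^1-\dw_1^2\|_2$ and $|\dw_2^1-\dw_2^2|$ uniformly in $c_1,c_2$: these quantities involve $\E_{C_2}[\dH_2\, w_2]$-type terms, which mix both middle-layer weights and the backward signal from the output. The Lipschitz bounds in Lemma \ref{lemma:priori,3L} are written precisely so that each such difference decomposes as (bounded factor)$\times$(layer-wise weight difference), and after taking $\esssup$ these bounded factors become $K_{3,2},K_{3,3}$; the only care needed is to avoid invoking $\esssup$ twice on the same variable, which would create spurious dependencies. Once the bookkeeping is organized this way, the argument proceeds cleanly and parallels the two-layer proof.
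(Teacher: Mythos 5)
Your proposal is correct and follows essentially the same route as the paper: subtract the integral equations defining $H_T$, apply the Lipschitz estimates of Lemma \ref{lemma:priori,3L} to the backward-pass terms to obtain a contraction factor of the form $\gamma T + K(\gamma,T)\,T^2$, and choose $T_0$ small enough that this is below one. Your extra step of restricting to an invariant ball $\mathcal{W}_T^M$ is a reasonable (and arguably cleaner) way to make the Lipschitz constants uniform over the inputs, but it does not change the substance of the argument.
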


\begin{proof}[Proof of Proposition \ref{prop:contraction,3L}]

For simplicity of notation, we denote the output of $H_{T}(\mW^1)$ to be $\widetilde{\mW}^1$, which is composed of $\widetilde{w}^{1}_3, \widetilde{w}^{1}_2, \widetilde{w}^{1}_1$. The output of $H_{T}(\mW^2)$ is denoted similarly.

By the definition of the mean-field ODE, we have that, for any $t \leq T$,
\begin{align*}
    | \widetilde{w}^1_3(t,c_2) -  \widetilde{w}^2_3(t,c_2)| &\leq \gamma \int_0^t |w^1_3(s,c_2) - w^2_3(s,c_2)| \, ds \\
    &\quad + \int_0^t \int_0^s \E_{\vz}| \dw_3(\vx,c_2; \mW^1(u)) -  \dw_3(\vx,c_2; \mW^2(u))|  \, du \, ds, \\
    | \widetilde{w}^1_2(t,c_1,c_2) -  \widetilde{w}^2_2(t,c_1,c_2)| &\leq \gamma \int_0^t |w^1_2(s,c_1,c_2) - w^2_2(s,c_1,c_2)| \, ds \\
    & \quad + \int_0^t \int_0^s \E_{\vz}| \dw_2(\vx,c_1,c_2; \mW^1(u)) -  \dw_2(\vx,c_1,c_2; \mW^2(u))|  \, du \, ds, \\
    \| \widetilde{w}^1_1(t,c_1) -  \widetilde{w}^2_1(t,c_1)\|_2 &\leq \gamma \int_0^t \|w^1_1(s,c_1) - w^2_1(s,c_1)\|_2 \, ds \\
    & \quad + \int_0^t \int_0^s \E_{\vz}\| \dw_1(\vx,c_1; \mW^1(u)) -  \dw_1(\vx,c_1; \mW^2(u))\|_2  \, du \, ds. \\
\end{align*}

By Lemma \ref{lemma:priori,3L}, we have that:
\begin{align*}
    \max\{& \E_{\vz}| \dw_3(\vx,c_2; \mW^1(u)) -  \dw_3(\vx,c_2; \mW^2(u))|, \\
    & \E_{\vz}| \dw_2(\vx,c_1,c_2; \mW^1(u)) -  \dw_2(\vx,c_1,c_2; \mW^2(u))|, \\
    & \E_{\vz}\| \dw_1(\vx,c_1; \mW^1(u)) -  \dw_1(\vx,c_1; \mW^2(u))\|_2\} \leq K(\gamma,T) \dist_{u} (\mW^1, \mW^2) .
\end{align*}
Thus, we have:
\begin{align*}
    \dist_{t} (\widetilde{\mW}^1, \widetilde{\mW}^2) & \leq \gamma \int_0^t \dist_{s} (\mW^1, \mW^2) \, ds + K(\gamma,T) \int_{0}^{t} \int_{u=0}^{s}\dist_{u} (\mW^1, \mW^2)   \, du \, ds \\
    & \leq (\gamma t + t^2) K(\gamma,T) \dist_{t} (\mW^1, \mW^2), 
\end{align*} 
which implies that
\begin{align}
    \dist_{T} (\widetilde{\mW}^1, \widetilde{\mW}^2) \leq  (\gamma T + T^2) K(\gamma,T) \dist_{T} (\mW^1, \mW^2) .
\end{align}
Since $ (\gamma T + T^2) K(\gamma,T) = 0$ when $T =0$, and $(\gamma T + T^2) K(\gamma,T) $ is continuous in $T$, we can pick a $T_0$ such that $(\gamma T_0 + T_0^2) K(\gamma,T_0) < 1$, which finishes the proof.
\end{proof}

Since $\mathcal{W}_{T}(\mW(0))$ is complete, by Banach's fixed point theorem, there exists a unique fixed point for the operator $H_{T_0}$, which implies that the mean-field ODE (\ref{def:MFODE,3}) has a unique solution in $[0,T_0]$. By following the same argument of the proof of Theorem \ref{thm:existence and uniquess} (separate the interval $[0, T]$ into sub-intervals of length $T_0$ and successively apply Proposition \ref{prop:contraction,3L} to each of them), we readily obtain our main result concerning the existence and uniqueness of (\ref{def:MFODE,3}) in $[0, T]$.

\begin{theorem}
Under Assumptions \ref{Assump 3:1bound,cont}-\ref{Assump 3:3init momen}, for any $T>0$, there exists a unique solution for the mean-field ODE (\ref{def:MFODE,3}) in the interval $[0,T]$.
\end{theorem}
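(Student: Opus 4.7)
The plan is to combine Proposition \ref{prop:contraction,3L} with Banach's fixed point theorem to obtain existence and uniqueness on a short interval $[0, T_0]$, and then bootstrap to arbitrary $T > 0$ by iterating the argument on sub-intervals of length $T_0$. This mirrors the two-layer strategy already executed in Theorem \ref{thm:existence and uniquess}, but now applied to the three-layer operator $H_T$ and the metric $\dist_T$ defined just above Proposition \ref{prop:contraction,3L}.

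First I would verify that $(\mathcal{W}_{T_0}(\mW(0)), \dist_{T_0})$ is a complete metric space: the a-priori bounds from Lemma \ref{lemma:priori,3L} ensure that the weights $w_2, w_3$ remain uniformly bounded on $[0, T_0]$, and Cauchy sequences with respect to $\dist_{T_0}$ converge pointwise in $c_1, c_2, t$ to a limit that inherits both the prescribed initialization and the boundedness constraint. By Proposition \ref{prop:contraction,3L}, the operator $H_{T_0}$ is a strict contraction on this complete space for some $T_0$ depending only on $K$ and $\gamma$. Banach's fixed point theorem then delivers a unique fixed point of $H_{T_0}$, which by construction is the unique solution of the mean-field ODE (\ref{def:MFODE,3}) on $[0, T_0]$.

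To extend to any $T > 0$, I would partition $[0, T]$ into sub-intervals $[k T_0, (k+1) T_0]$ for $k = 0, 1, \ldots, \lceil T/T_0 \rceil - 1$ (with the last sub-interval possibly shorter). Having obtained a unique solution on $[0, k T_0]$, the terminal value $\mW(k T_0)$ together with the momentum $(\vr_1(kT_0), r_2(kT_0), r_3(kT_0))$ furnishes an initialization for the dynamics restarted on $[k T_0, (k+1) T_0]$. The decisive observation that makes the iteration go through is that the contraction radius $T_0$ in Proposition \ref{prop:contraction,3L} depends only on $K, \gamma$, and not on the particular initialization — the constants $K_{3,2}(\gamma, T_0)$ and $K_{3,3}(\gamma, T_0)$ governing the contraction factor only require the boundedness of $w_2(k T_0, \cdot, \cdot), w_3(k T_0, \cdot)$, which is guaranteed by Lemma \ref{lemma:priori,3L} at each step. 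Applying the Banach argument on each piece and concatenating gives a unique solution on $[0, T]$ after finitely many steps.

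The main obstacle has already been absorbed into Proposition \ref{prop:contraction,3L}, which in turn relies on the boundedness and Lipschitz estimates of Lemma \ref{lemma:priori,3L}; once those are in hand the remaining argument is essentially bookkeeping. The one subtlety I would double-check when transitioning between sub-intervals is that the momentum variables, though not part of the metric $\dist_T$ (which only involves $\vw_1, w_2, w_3$), are nonetheless uniquely determined at the endpoints via the integral form of the ODE, so that the re-initialization $\mW(kT_0)$ together with $(\vr_1, r_2, r_3)(kT_0)$ is unambiguously defined and inherits uniqueness from the preceding sub-interval.
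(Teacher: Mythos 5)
Your proposal is correct and follows essentially the same route as the paper: completeness of $(\mathcal{W}_{T_0}(\mW(0)), \dist_{T_0})$, the contraction property of $H_{T_0}$ from Proposition \ref{prop:contraction,3L}, Banach's fixed point theorem on $[0,T_0]$, and then iteration over sub-intervals of length $T_0$ exactly as in the two-layer Theorem \ref{thm:existence and uniquess}. Your added remark about carrying the momentum variables across sub-interval boundaries is a point the paper leaves implicit, but it does not change the argument.
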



\section{Convergence to the mean-field limit -- Two-layer networks}\label{sec:conv2}
In this section, we prove the convergence to the mean-field limit for two-layer neural networks (Theorem \ref{thm:chaos2l}). Our proof's structure is inspired from \cite{mei2019mean}. Before going into the arguments, we first recall the definition of the mean-field ODE and the stochastic heavy ball method (SHB) for two-layer networks. Then, we define two auxiliary dynamics: the particle dynamics (PD) and the heavy ball dynamics (HB).


First, recall the mean-field ODE as follows:
\begin{align}
    d \vtheta(t) &= \vr(t) dt,\notag \\
    d \vr(t) &= \left( - \gamma \vr(t) - \nabla_{\vtheta}  \Psi(\vtheta(t);\rho^{\vtheta}(t)) \right) \, dt,
\end{align}
and the corresponding integral form
\begin{align}
    \vtheta(t) &=  \vtheta(0) - \gamma \int_0^t(\vtheta(s) - \vtheta(0))  \,ds -  \int_{0}^{t} \int_{0}^{s} \nabla_{\vtheta}  \Psi(\vtheta(u);\rho^{\vtheta}(t)) \,du \,ds. \label{equ:App,MFODE,2L}
\end{align}

The SHB dynamics is as follows:
\begin{align}
    \vthetashb(k+1,j) = \vthetashb(k,j) + (1- \gamma \varepsilon) (\vthetashb(k,j) - \vthetashb(k-1,j)) - \varepsilon^2 \nabla_\vtheta\hPsi(\vz, \vthetashb(k,j) ;& \rhoshb^{\vtheta}(k)), \notag\\
    &\hspace{2em} \forall j \in [n], \label{equ:App,SHB,2L}
\end{align}
where $ \rhoshb^{\vtheta}(k) = \frac{1}{n}\sum_{j=1}^n \delta_{\vtheta(k,j)} $ is the empirical distribution. 

In order to describe the convergence to mean-field limit, we define the following particle dynamics (PD):
\begin{align}
    d \vthetapd(t,j) &= \vrpd(t,j) dt\notag \\
    d \vrpd(t,j) &= \left( - \gamma \vrpd(t,j) - \nabla_{\vtheta}  \Psi(\vthetapd(t,j);\rhopd^{\vtheta}(t)) \right) \, dt, \,\,\,\forall j \in [n],
    \label{equ:App,PD,2L}
\end{align}
where $\rhopd^{\vtheta}(t) = \frac{1}{n} \sum_{j=1}^n \delta_{\vthetapd(t,j)}$ is the empirical distribution at time $t$. Furthermore, the heavy ball (HB) dynamics is defined as
\begin{align}
    \vthetahb(k+1,j) = \vthetahb(k,j) + (1- \gamma \varepsilon) (\vthetahb(k,j) - \vthetahb(k-1,j)) - \varepsilon^2 \nabla_\vtheta\Psi( \vthetahb(k,j) ; \rhohb^{\vtheta}(k)) ,\,\,\, \forall j \in [n]. \label{equ:App,HB,2L}
\end{align}

We remark that (\ref{equ:App,SHB,2L}), (\ref{equ:App,PD,2L}) and (\ref{equ:App,HB,2L})  have the same initialization, that is :
\begin{align*}
    \vthetapd(0,j) = \vthetahb(0,j) = \vthetashb(0,j) ,\,\,\, \forall j \in [n].
\end{align*}

Define the following distance metrics:
\begin{align}
    \dist_{T}(\vtheta,\vthetapd) :=    \max\limits_{j \in [n]} \sup_{t \in [0,T]} & \left\|\vthetapd(t,j) - \vtheta(t,j)\right\|_2,  \label{equ:dist,2L,MFODE-PD}
\end{align}
\begin{align}
    \dist_{T}(\vthetapd,\vthetahb) :=    \max_{j\in [n]} \sup_{t \in [0,T]} & \left\|\vthetahb(\lfloor t/\varepsilon \rfloor,j) - \vthetapd(t,j)\right\|_2, \label{equ:dist,2L,PD-HB}
\end{align}
\begin{align}
    \dist_{T,\varepsilon}(\vthetahb,\vthetashb) :=    \max_{j
\in [n]} \max_{k \in \lfloor T/\varepsilon \rfloor} & \|\vthetahb(k,j) - \vthetashb(k,j)\|_2 . \label{equ:dist,2L,HB-SHB}
\end{align}

\subsection{Bound between mean-field ODE and particle dynamics}

In this section, we bound the difference between the mean-field ODE defined in (\ref{equ:App,MFODE,2L}) and the particle dynamics defined in (\ref{equ:App,PD,2L}), whose integral form is as follows:
\begin{align*}
    \vthetapd(t,j) &=  \vthetapd(0,j) - \gamma \int_0^t(\vthetapd(s,j) - \vthetapd(0,j))  \,ds -  \int_{0}^{t} \int_{0}^{s} \nabla_{\vtheta}  \Psi(\vthetapd(u,j);\rhopd^{\vtheta}(u)) \,du \,ds .
\end{align*}

\begin{proposition}
\label{prop:MFODE-PD,2L}
Under Assumptions \ref{Assump 2:1bound,cont} - \ref{Assump 2:3init momen}, we have that, with probability at least $1- \exp(-\delta^2)$,
\begin{align}
    \dist_T( \vtheta,\vthetapd ) \leq  K(\gamma,T) \left( \frac{\delta+\sqrt{\log n}}{\sqrt{n}}\right), 
\end{align}
where $K(\gamma,T)$ is a constant depending only on $\gamma, T$.
\end{proposition}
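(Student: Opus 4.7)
The goal is to control $\dist_T(\vtheta,\vthetapd)$ by a coupled integral-equation argument combined with a second-order Gronwall (Pachpatte) inequality, isolating a single empirical-process term whose concentration produces the $\sqrt{\log n/n}$ rate. Using the coupling $\vtheta(0,j)=\vthetapd(0,j)$, $\vr(0,j)=\vrpd(0,j)=\mathbf{0}$, I would subtract the integral form of the mean-field ODE from that of the PD to obtain, for each $j$,
\begin{equation*}
\vthetapd(t,j)-\vtheta(t,j) = -\gamma\!\int_0^t\!(\vthetapd(s,j)-\vtheta(s,j))\,ds - \!\int_0^t\!\!\int_0^s\!\bigl[\nabla_\vtheta\Psi(\vthetapd(u,j);\rhopd^\vtheta(u)) - \nabla_\vtheta\Psi(\vtheta(u,j);\rho^\vtheta(u))\bigr]du\,ds,
\end{equation*}
and an analogous identity for $\vrpd(t,j)-\vr(t,j)$.

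Next I would decompose the gradient difference by inserting $\pm\nabla_\vtheta\Psi(\vthetapd(u,j);\rho^\vtheta(u))$. The Lipschitz bound of Lemma \ref{lemma:priori,2L}(3), together with the a-priori bound $|w_2^{PD}(u,j)|\le K_2(\gamma,T)$ from Lemma \ref{lemma:priori,2L}(2), yields
\begin{equation*}
\bigl\|\nabla_\vtheta\Psi(\vthetapd(u,j);\rho^\vtheta(u))-\nabla_\vtheta\Psi(\vtheta(u,j);\rho^\vtheta(u))\bigr\|_2 \le C(\gamma,T)\,\|\vthetapd(u,j)-\vtheta(u,j)\|_2.
\end{equation*}
The remaining piece $\nabla_\vtheta\Psi(\vthetapd(u,j);\rhopd^\vtheta(u))-\nabla_\vtheta\Psi(\vthetapd(u,j);\rho^\vtheta(u))$ depends on $\rho$ only through $f(\vx;\rho)$. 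I would bound it by inserting the auxiliary empirical measure $\hat\rho^\vtheta(u):=\frac{1}{n}\sum_j \delta_{\vtheta(u,j)}$, splitting into (i) a particle-closeness contribution controlled by $\max_j\|\vthetapd(u,j)-\vtheta(u,j)\|_2$, and (ii) an empirical-process term $\sup_\vx|f(\vx;\hat\rho^\vtheta(u))-f(\vx;\rho^\vtheta(u))|$ involving only iid copies of the mean-field process.

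Letting $Q(t):=\max_j\|\vthetapd(t,j)-\vtheta(t,j)\|_2+\max_j\|\vrpd(t,j)-\vr(t,j)\|_2$, these bounds combine to
\begin{equation*}
Q(t)\le 2\gamma\!\int_0^t\!Q(s)\,ds + C(\gamma,T)\!\int_0^t\!\!\int_0^s\! Q(u)\,du\,ds + C(\gamma,T)\cdot T^2\cdot \mathcal{E}_n,
\end{equation*}
where $\mathcal{E}_n:=\sup_{u\in[0,T]}\sup_\vx|f(\vx;\hat\rho^\vtheta(u))-f(\vx;\rho^\vtheta(u))|$. A direct application of the second-order Gronwall inequality (Corollary \ref{corollary: simplified Pachpatte}) then yields $Q(t)\le K(\gamma,T)\,\mathcal{E}_n$ for all $t\le T$, with the constant absorbing the $e^{e^T}$-type scaling typical of propagation-of-chaos arguments. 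It remains only to show $\mathcal{E}_n\le K(\gamma,T)(\delta+\sqrt{\log n})/\sqrt{n}$ with probability at least $1-e^{-\delta^2}$.

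The main obstacle, and the reason the bound is \emph{dimension-free}, lies in controlling $\mathcal{E}_n$ at rate $\sqrt{\log n/n}$ rather than the dimension-dependent rate $n^{-1/D}$ of Wasserstein convergence. I would avoid the Wasserstein-based Lipschitz bound of Lemma \ref{lemma:priori,2L} for this piece, and instead exploit directly that $\sigma^\star(\vx;\vtheta)=w_2\sigma(\vw_1^\top\vx)$ is uniformly bounded and Lipschitz in $\vx$ with constants depending only on $K$ and $K_2(\gamma,T)$ (the latter via \ref{Assump 2:2init} together with Lemma \ref{lemma:priori,2L}(2)). For fixed $\vx$ and $u$, Hoeffding/sub-Gaussian concentration gives $|f(\vx;\hat\rho^\vtheta(u))-f(\vx;\rho^\vtheta(u))|=O((\delta+1)/\sqrt{n})$ with probability at least $1-e^{-\delta^2}$; a covering/union-bound argument over a polynomial-sized net in $\vx$ (of bounded radius $K$ by \ref{Assump 2:1bound,cont}) and in $u\in[0,T]$, combined with the joint Lipschitz continuity, produces the extra $\sqrt{\log n}$ factor without any dependence on $D$. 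Because the net cardinality is polynomial in $n$ rather than exponential in $D$, the final rate is dimension-free, matching the bound obtained for SGD in \citep{mei2019mean}.
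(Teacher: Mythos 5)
Your overall architecture matches the paper's: couple $n$ i.i.d.\ copies of the mean-field dynamics to the particles at initialization, split the gradient discrepancy into a Lipschitz piece (controlled by $\max_j\|\vthetapd(u,j)-\vtheta(u,j)\|_2$ plus a Wasserstein distance between two discrete measures, which collapses to the same max) and a fluctuation piece involving only the i.i.d.\ copies, then close with Pachpatte's inequality. The problem is in how you propose to control the fluctuation piece. You reduce it to $\mathcal{E}_n=\sup_{u\le T}\sup_{\vx}|f(\vx;\hat\rho^{\vtheta}(u))-f(\vx;\rho^{\vtheta}(u))|$ and claim a union bound over an $\epsilon$-net of the ball $\{\|\vx\|_2\le K\}\subset\R^D$ of ``polynomial in $n$'' cardinality gives a dimension-free $\sqrt{\log n/n}$ rate. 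That is false: the covering number of a Euclidean ball of radius $K$ at resolution $\epsilon$ is of order $(3K/\epsilon)^{D}$, exponential in $D$, so the union bound contributes a factor $\sqrt{D\log(K/\epsilon)}$ and your $\mathcal{E}_n$ is generically of order $\sqrt{D/n}$, not $\sqrt{\log n/n}$. Taking a supremum over $\vx$ is both stronger than needed and exactly what reintroduces the dimension dependence the proposition is designed to avoid.

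The paper never forms $\sup_{\vx}$. Since $\nabla_{\vtheta}\Psi(\vtheta;\rho)=\E_{\vz}[\partial_2 R(y,f(\vx;\rho))\,v(\vx,\vtheta)]$ with $\|v\|_2$ bounded, the difference $\|\nabla_{\vtheta}\Psi(\vtheta(u,j);\hat\rho^{\vtheta}(u))-\nabla_{\vtheta}\Psi(\vtheta(u,j);\rho^{\vtheta}(u))\|_2$ is bounded by $K_1(\gamma,T)$ times the \emph{data-averaged} quantity $\E_{\vz}|f(\vx;\hat\rho^{\vtheta}(u))-f(\vx;\rho^{\vtheta}(u))|$. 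This is a scalar function of the $n$ i.i.d.\ trajectories with bounded differences $O(1/n)$ and mean $O(1/\sqrt n)$, so the vector-valued McDiarmid bound (Lemma \ref{lemma:Corollary of McDiarmid inequality}) gives concentration at rate $K(1/\sqrt n+\delta')$ with probability $1-e^{-n\delta'^2}$, with no $\sqrt D$ factor. The only union bound is then over the $n$ neurons and a time grid of spacing $\eta=1/\sqrt n$ (using the $K(\gamma,T)$-Lipschitz continuity in $u$ from Lemma \ref{lemma:Lip,MFODE,2L}), which is where the $\sqrt{\log n}$ comes from. If you replace your $\sup_{\vx}$-plus-net step with this expectation-then-concentrate step, the rest of your argument goes through as written.
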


Before proving Proposition \ref{prop:MFODE-PD,2L}, we first prove the following lemma, which characterizes the Lipschitz continuity of the mean-field ODE.

\begin{lemma}
\label{lemma:Lip,MFODE,2L}
Under  Assumptions \ref{Assump 2:1bound,cont} - \ref{Assump 2:3init momen}, there exists a universal constant  $K(\gamma,T)$ depending only on $\gamma, T$ such that, for any $t,\tau>0$ such that $t,t+\tau<T$,
\begin{equation}
    \begin{split}
    \|\vtheta(t+\tau) - \vtheta(t)\|_2  \leq K(\gamma,T) \tau, \\
    \wass_2(\rho^{\vtheta}(t+\tau), \rho^{\vtheta}(t)) \leq K(\gamma,T) \tau. 
    \end{split}
\end{equation}
The same holds for the particle dynamics $\vthetapd(t,j), \forall j \in [n]$.
\end{lemma}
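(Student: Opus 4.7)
\textbf{Proof proposal for Lemma \ref{lemma:Lip,MFODE,2L}.} The plan is to first establish a uniform-in-time bound on the velocity $\vr(t)$ on the interval $[0,T]$, and then integrate it to recover the Lipschitz constants. Starting from the integral form of the second-order mean-field ODE (\ref{def:MFODE,2,noiseless}), using $\vr(0)=\vzero$ by \ref{Assump 2:3init momen}, one has
\begin{equation*}
\vr(t) = -\gamma \int_0^t \vr(s)\,ds - \int_0^t \nabla_\vtheta \Psi(\vtheta(s);\rho^\vtheta(s))\,ds.
\end{equation*}
By Lemma \ref{lemma:priori,2L}, $\|\nabla_\vtheta \Psi(\vtheta(s);\rho^\vtheta(s))\|_2 \le K(1+|w_2(s)|) \le K(1+K_2(\gamma,T))$ on $[0,T]$, hence $\|\vr(t)\|_2 \le K(1+K_2(\gamma,T))T + \gamma \int_0^t \|\vr(s)\|_2\,ds$. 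The standard (first-order) Gronwall inequality then yields $\sup_{t\in[0,T]}\|\vr(t)\|_2 \le K(1+K_2(\gamma,T))T e^{\gamma T} =: K(\gamma,T)$.

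Once $\vr$ is bounded, the first claim is immediate: since $\vtheta(t+\tau)-\vtheta(t) = \int_t^{t+\tau}\vr(s)\,ds$, the uniform bound on $\vr$ gives $\|\vtheta(t+\tau)-\vtheta(t)\|_2 \le K(\gamma,T)\tau$. For the Wasserstein bound, I would use the natural coupling induced by a common initialization: the law $\rho^\vtheta(t)$ is the pushforward of $\rho_0$ under the deterministic flow of the mean-field ODE, so the joint law $({\rm Law}(\vtheta(t),\vtheta(t+\tau)))$ obtained by running a single trajectory from $\vtheta(0)\sim \rho_0$ is a valid transport plan. This yields
\begin{equation*}
\wass_2(\rho^\vtheta(t+\tau),\rho^\vtheta(t))^2 \le \mathbb{E}_{\vtheta(0)\sim\rho_0}\|\vtheta(t+\tau)-\vtheta(t)\|_2^2 \le K(\gamma,T)^2 \tau^2,
\end{equation*}
which is the second claim (after possibly adjusting the constant).

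For the particle dynamics $\vthetapd(t,j)$, the same argument applies essentially verbatim for each $j\in [n]$: indeed, (\ref{equ:App,PD,2L}) is structurally identical to (\ref{equ:App,MFODE,2L}), and the a priori boundedness argument of Lemma \ref{lemma:priori,2L}(2) uses only the uniform bound $|\nabla_{w_2}\Psi|\le K$ (not any property special to the mean-field law), so it carries over to give $\sup_{j,t\le T}|\wpd_2(t,j)|\le K_2(\gamma,T)$ and hence a uniform bound on $\vrpd(t,j)$ with the same constant $K(\gamma,T)$. Taking the maximum over $j$ produces the pointwise Lipschitz bound, and the Wasserstein bound follows from the same coupling argument, this time using the empirical measures. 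I expect no serious obstacle in this lemma beyond bookkeeping of the constants; the key structural input is the a priori boundedness of $w_2$ (hence of $\nabla_\vtheta\Psi$) supplied by Lemma \ref{lemma:priori,2L}, which already handles the nonlinear dependence on $\rho^\vtheta(t)$.
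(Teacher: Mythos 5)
Your proof is correct and follows essentially the same route as the paper: an a priori Gronwall bound (the paper runs it on $\|\vtheta(t)-\vtheta(0)\|_2$, you run it on $\|\vr(t)\|_2$, which are equivalent since $\vr(t)=-\gamma(\vtheta(t)-\vtheta(0))-\int_0^t\nabla_\vtheta\Psi\,du$), followed by integrating over $[t,t+\tau]$ and the synchronous coupling for the $\wass_2$ bound. Your remark that the boundedness of $w_2$ from Lemma \ref{lemma:priori,2L} carries over to the particle dynamics because the gradient bound is uniform in the measure is exactly the justification the paper leaves implicit.
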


\begin{proof}[Proof of Lemma \ref{lemma:Lip,MFODE,2L}]

We only prove the results for the mean-field ODE, and the proof for the particle dynamics follows from the same arguments.

We first try to bound the increments $\|\vtheta(t) - \vtheta(0)\|_2$.
By the definition of the mean-field dynamics, we have that:
\begin{align*}
    \|\vtheta(t) - \vtheta(0)\|_2 &\leq  \gamma \int_0^t \|\vtheta(s) - \vtheta(0)\|_2 \, ds + \int_0^t \int_0^s \|\nabla_{\vtheta} \Psi(\vtheta(u);\rho^{\vtheta}(u)) \|_2 \, du \,ds \\
    & \leq \gamma \int_0^t \|\vtheta(s) - \vtheta(0)\|_2 \, ds + K_1(\gamma,T),
\end{align*}
where in the last step we use that $\|\nabla_{\vtheta} \Psi(\vtheta(u);\rho^{\vtheta}(u)) \|_2 \leq K_1(\gamma,T)$, which follows from Lemma \ref{lemma:priori,2L}.

By Gronwall's lemma, this implies that, for any $t \leq T$,
\begin{align*}
    \|\vtheta(t) - \vtheta(0)\|_2 \leq K_1(\gamma,T) \exp(\gamma T) :=  K_2(\gamma,T).
\end{align*}

Next, by definition of the mean-field ODE, we have that:
\begin{align*}
    \|\vtheta(t+\tau) - \vtheta(t)\|_2 \leq \gamma \int_{t}^{t+\tau} \|\vtheta(s) - \vtheta(0)\|_2 \, ds + \int_t^{t+\tau} \int_0^s \|\nabla_{\vtheta} \Psi(\vtheta(u);\rho^{\vtheta}(u)) \|_2 \, du \,ds. \\
\end{align*}
Thus, 
\begin{align*}
     \|\vtheta(t+\tau) - \vtheta(t)\|_2 \leq K_4(\gamma,T) \tau,
\end{align*}
where we use that fact that
\begin{align*}
    \|\vtheta(s) - \vtheta(0)\|_2 \leq K_3(\gamma,T),  \\
    \int_0^s \|\nabla_{\vtheta}, \Psi(\vtheta(u);\rho^{\vtheta}(u)) \|_2 \, du \leq K_3(\gamma,T). 
\end{align*}

For the Lipschitz continuity of $\rho^{\vtheta}$, we just note that by definition of the $\wass_2$ distance, we have:
\begin{align*}
    \wass_2(\rho^{\vtheta}(t+\tau), \rho^{\vtheta}(t)) \leq \E\left[ \|\vtheta(t+\tau) - \vtheta(t)\|^2_2 \right]^{1/2}.
\end{align*}
\end{proof}


        

Now we are ready to prove Proposition \ref{prop:MFODE-PD,2L}.

\begin{proof}[Proof of Proposition \ref{prop:MFODE-PD,2L}]

In order to bound the difference, we first define $n$ i.i.d mean-field dynamics:
\begin{align*}
    \vtheta(t,j) &=  \vtheta(0,j) - \gamma \int_0^t(\vtheta(s,j) - \vtheta(0,j))  \,ds -  \int_{0}^{t} \int_{0}^{s} \nabla_{\vtheta}  \Psi(\vtheta(u,j);\rho^{\vtheta}(u,j)) \,du \,ds,
\end{align*}
where $\rho^{\vtheta}(t,j)$ is the law of $\vtheta(t,j)$, and we coupled the $n$ i.i.d dynamics with the particle dynamics at initialization, that is, we let:
\begin{align*}
    \vtheta(0,j) = \vthetapd(0,j), \,\,\,\forall j \in [n].
\end{align*}
 We also define the empirical distribution of $\vtheta(t,j)$, that is: $\widehat{\rho}^{\vtheta}(t) = \frac{1}{n} \sum_{j=1}^n \delta_{\vtheta(t,j)}$. Since the $n$  mean-field dynamics are i.i.d, we have that $\rho^{\vtheta}(t,j) = \rho^{\vtheta}(t)$, $\forall j \in [n] $, thus we use the notation of $\rho^{\vtheta}(t)$ to denote the the law of $\vtheta(t,j)$ for each $j \in [n]$.

By Lemma \ref{lemma:priori,2L} and Lemma \ref{lemma:Lip,MFODE,2L}, we know that:
\begin{align*}
    \sup_{t \in [T]}  \max_{j \in [n]} \| w_2(t,j) \|_2 \leq K(\gamma,T), \\
    \sup_{t \in [T]}  \max_{j \in [n]} \| \vtheta(t+\tau,j) - \vtheta(t,j) \|_2 \leq K(\gamma,T) \tau.
\end{align*}

We have that
\begin{align*}
   \| \vthetapd(t,j) - \vtheta(t,j)\|_2  \leq & (1+\gamma t)\| \vthetapd(0,j) - \vtheta(0,j)\|_2 + \gamma  \int_0^t \|\vthetapd(s,j) - \vtheta(s,j) \|_2\, ds\\
   & + \int_{0}^{t} \int_{0}^{s}  \| \nabla_{\vtheta} \Psi(\vthetapd(u,j);\rhopd^{\vtheta}(u)) - \nabla_{\vtheta}  \Psi(\vtheta(u,j);\rho^{\vtheta}(u)) \|_2 \,du \,ds ,
\end{align*}
and our goal is to bound:
\begin{align*}
   \sup_{t \in [0,T]} \max_{j \in [n]} \| \vthetapd(t,j) - \vtheta(t,j)\|_2.
\end{align*}

Now we aim to bound the quantity $$ \sup_{t \in [0,T]} \max_{j \in [n]}  \| \nabla_{\vtheta} \Psi(\vthetapd(u,j);\rhopd^{\vtheta}(u,j)) - \nabla_{\vtheta}  \Psi(\vtheta(u,j);\rho^{\vtheta}(u)) \|_2.$$ 
An application of the triangle inequality gives
\begin{align}
    \| \nabla_{\vtheta}  \Psi(\vtheta(u,j);\rho^{\vtheta}(u)) - \nabla_{\vtheta} \Psi(\vthetapd(u,j);\rhopd^{\vtheta}(u,j)) \|_2 
   &\leq  \| \nabla_{\vtheta}  \Psi(\vtheta(u,j);\widehat{\rho}^{\vtheta}(u)) - \nabla_{\vtheta} \Psi(\vtheta(u,j); \rho^{\vtheta}(u)) \|_2  \label{equ:MFODE-PD,2L,1} \\
   &+ \| \nabla_{\vtheta}  \Psi(\vtheta(u,j);\widehat{\rho}^{\vtheta}(u))  - \nabla_{\vtheta} \Psi(\vthetapd(u,j);\rhopd^{\vtheta}(u))\|_2 . \label{equ:MFODE-PD,2L,2}
\end{align}
Recall by definition that
\begin{align*}
    &\nabla_{\vw_1}  \Psi(\vtheta(u,j);\rho^{\vtheta}(u)) =  \E_{\vz} \left[ \partial_2 R(y,f(\vx;\rho^{\vtheta}(u))) w_2(u,j) \sigma'(\vw_1(u,j)^T \vx) \vx  \right], \\
    &\nabla_{w_2}  \Psi(\vtheta(u,j);\rho^{\vtheta}(u)) =  \E_{\vz} \left[ \partial_2 R(y,f(\vx;\rho^{\vtheta}(u))) \sigma(\vw_1(u,j)^T \vx)  \right].
\end{align*}
Similarly,
\begin{align*}
     &\nabla_{\vw_1}  \Psi(\vtheta(u,j);\widehat{\rho}^{\vtheta}(u)) =  \E_{\vz} \left[ \partial_2 R(y,f(\vx;\widehat{\rho}^{\vtheta}(u))) w_2(u,j) \sigma'(\vw_1(u,j)^T \vx) \vx  \right], \\
    &\nabla_{w_2}  \Psi(\vtheta(u,j);\widehat{\rho}^{\vtheta}(u)) =  \E_{\vz} \left[ \partial_2 R(y,f(\vx;\widehat{\rho}^{\vtheta}(u))) \sigma(\vw_1(u,j)^T \vx)  \right].
\end{align*}

For the term in (\ref{equ:MFODE-PD,2L,1}) , we use concentration inequalities to give an upper bound. By the Lipschitz continuity of $\partial_2 R$ in Assumption \ref{Assum:2L}, we have
\begin{align*}
    \| \nabla_{\vtheta} & \Psi(\vtheta(u,j);\widehat{\rho}^{\vtheta}(u)) - \nabla_{\vtheta} \Psi(\vtheta(u,j); \rho^{\vtheta}(u)) \|_2  \\
    &\le   K\left\| \E_{\vz} \begin{pmatrix}
            w_2(u,j) \sigma'(\vw_1(u,j)^T \vx) \vx \\
            \sigma(\vw_1(u,j)^T \vx)
    \end{pmatrix}\right\|_2 |f(\vx;\rho^{\vtheta}(u)) - f(\vx;\widehat{\rho}^{\vtheta}(u))|\\
    &\leq  K_{1}(\gamma,T) |f(\vx;\rho^{\vtheta}(u)) - f(\vx;\widehat{\rho}^{\vtheta}(u)) |.
\end{align*}
For the term $|f(\vx;\rho(u)) - f(\vx;\widehat{\rho}^{\vtheta}(u)) |$, we have
\begin{align*}
    &|f(\vx;\rho^{\vtheta}(u)) - f(\vx;\widehat{\rho}^{\vtheta}(u))| 
    = \left|\frac{1}{n} \sum_{j=1}^n w_2(u,j)\sigma(\vw_1(u,j)^T \vx) - \E_{\rho(u)} w_2(u)\sigma(\vw_1(u)^T \vx)   \right|.
\end{align*}
Note that, by Lemma \ref{lemma:priori,2L}, we know that
\begin{align*}
   \left| w_2(u,j)\sigma(\vw_1(u,j)^T \vx)-\E_{\rho(u)} w_2(u)\sigma(\vw_1(u)^T \vx)  \right| \leq K_{2}(\gamma,T) .
\end{align*}

By Lemma \ref{lemma:Corollary of McDiarmid inequality}, we have that, with probability at least $1 - \exp(-n(\delta')^2)$, 
\begin{align*}
    \left|\frac{1}{n} \sum_{i=1}^n w_2(u,j)\sigma(\vw_1(u,j)^T \vx) - \E_{\rho(u)} w_2(u)\sigma(\vw_1(u)^T \vx) \right| \leq K_{2}(\gamma,T) \left( \frac{1}{\sqrt{n}} + \delta'\right).
\end{align*}
By Lemma \ref{lemma:Lip,MFODE,2L}, we know that $\left|\frac{1}{n} \sum_{i=1}^n w_2(u,j)\sigma(\vw_1(u,j)^T \vx) - \E_{\rho(u)} w_2(u)\sigma(\vw_1(u)^T \vx) \right|$ is $K(\gamma,T)$-Lipschitz continuous in $u$. Thus, by taking a union bound over $j \in [n]$ and $ t \in \left\{0, \eta,..., \lfloor \frac{T}{\eta} \rfloor \eta \right\}$, we have that, with probability at least $1 - \frac{nT}{\eta}\exp(-n(\delta')^2)$,
\begin{align*}
    \max_{j \in [n]} \sup_{t \in [0,T]} \left|\frac{1}{n} \sum_{i=1}^n  w_2(u,j)\sigma(\vw_1(u,j)^T \vx) -\E_{\rho(u)} w_2(u)\sigma(\vw_1(u)^T \vx)\right| \leq K_{3}(\gamma,T) \left( \frac{1}{\sqrt{n}} + \delta'+\eta\right).
\end{align*}
Take $\eta = \frac{1}{\sqrt{n}}$, $\delta' =  \sqrt{\frac{\delta^2 + \log\left(n^{\frac{3}{2}} T\right)}{n}}$. Then, with probability at least $1 - \exp(-\delta^2)$,
\begin{align*}
     & \max_{j \in [n]} \sup_{t \in [0,T]} \left|\frac{1}{n} \sum_{i=1}^n \E_{\rho(u)} w_2(u)\sigma(\vw_1(u)^T \vx) -  w_2(u,j)\sigma(\vw_1(u,j)^T \vx)  \right|
     \leq  K_{4}(\gamma,T)  \frac{\delta+\sqrt{\log n}}{\sqrt{n}},
\end{align*}
which implies that, for term (\ref{equ:MFODE-PD,2L,1}),
\begin{align*}
    \max_{j \in [n]} \sup_{t \in [0,T]} \| \nabla_{\vtheta}  \Psi(\vtheta(u,j);\widehat{\rho}^{\vtheta}(u)) - \nabla_{\vtheta} \Psi(\vtheta(u,j); \rho^{\vtheta}(u)) \|_2  \leq K_5(\gamma,T))\frac{\delta+\sqrt{\log n}}{\sqrt{n}},
\end{align*}
with probability $1-\exp(-\delta^2)$.

For the term in (\ref{equ:MFODE-PD,2L,2}), we use the Lipschitz continuity of $\nabla_{\vtheta} \Psi$. By Lemma \ref{lemma:priori,2L}, we have that, for each $j \in [n]$,
\begin{align*}
    \|  \nabla_{\vtheta}  \Psi(\vtheta(t,j);\widehat{\rho}^{\vtheta}(t)) - \nabla_{\vtheta} \Psi(\vthetapd(t,j); \rhopd^{\vtheta}(t)) \|_2 \leq  K_{6}(\gamma,T) ( \dist_{t} (\vtheta, \vthetapd) + \wass_2(\widehat{\rho}^{\vtheta}(t),\rhopd^{\vtheta}(t) ) ).
\end{align*}
Note that $\widehat{\rho}^{\vtheta}(t),\rhopd^{\vtheta}(t)$ are discrete measures, thus we have:
\begin{align*}
    \wass_2(\widehat{\rho}^{\vtheta}(t),\rhopd^{\vtheta}(t) ) \leq  \left(\frac{1}{n}\sum_{j=1}^n \|\vtheta(t,j) - \vthetapd(t,j)\|^2_2\right)^{1/2} \leq \dist_{t} (\vtheta, \vthetapd).
\end{align*}
Hence,
\begin{align*}
     \|  \nabla_{\vtheta}  \Psi(\vtheta(t,j);\widehat{\rho}^{\vtheta}(t)) - \nabla_{\vtheta} \Psi(\vthetapd(t,j); \rhopd^{\vtheta}(t)) \|_2 \leq K_{7}(\gamma,T) \dist_{t} (\vtheta, \vthetapd) .
\end{align*}

Combining the above results, we have that, with probability $1-\exp(-\delta^2)$,
\begin{align*}
    \dist_{t}(\vtheta,\vtheta^{PD}) \leq K_8(\gamma,T)\frac{\delta+\sqrt{\log n}}{\sqrt{n}} + \gamma \int_{0}^t \dist_{s}(\vtheta,\vtheta^{PD}) \, ds +   K_{8}(\gamma,T) \int_{0}^t \int_0^s \dist_{u}(\vtheta,\vtheta^{PD}) \, du \, ds.
\end{align*}
An application of Corollary \ref{corollary: simplified Pachpatte} concludes the proof.
\end{proof}

\subsection{Bound between particle dynamics and heavy ball dynamics}

In this section, we bound the difference between the particle dynamics defined in (\ref{equ:App,PD,2L}) and the heavy ball dynamics defined in (\ref{equ:App,HB,2L}). We recall that the distance we aim to bound is defined in (\ref{equ:dist,2L,PD-HB}). Note that the heavy ball dynamics is a discretization of the particle dynamics. Thus we aim to bound the difference at time point $k \varepsilon$.

\begin{proposition}
\label{prop:PD-HB,2L}
Under Assumptions \ref{Assump 2:1bound,cont} - \ref{Assump 2:3init momen}, there exist a universal constant $K(\gamma,T)$ depending only on $\gamma,T$, such that
\begin{align}
    \dist_{T}(\vthetapd, \vthetahb) \leq K(\gamma,T) \varepsilon.
\end{align}
\end{proposition}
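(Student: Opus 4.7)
The plan is to view the heavy ball iteration \eqref{equ:App,HB,2L} as a symplectic-Euler discretization of the particle dynamics \eqref{equ:App,PD,2L} with step size $\varepsilon$, and then to compare the two via a coupled discrete Gronwall (Pachpatte-type) argument in position and momentum simultaneously. Concretely, setting $\vrhb(k,j) := \varepsilon^{-1}(\vthetahb(k,j) - \vthetahb(k-1,j))$ with $\vrhb(0,j) = \mathbf{0}$ by \ref{Assump 2:3init momen}, the iteration \eqref{equ:App,HB,2L} is equivalent to
\begin{align*}
\vthetahb(k+1,j) &= \vthetahb(k,j) + \varepsilon\, \vrhb(k+1,j), \\
\vrhb(k+1,j) &= (1-\gamma\varepsilon)\, \vrhb(k,j) - \varepsilon\, \nabla_\vtheta \Psi(\vthetahb(k,j); \rhohb^{\vtheta}(k)),
\end{align*}
which is the explicit-Euler-in-$\vr$, semi-implicit-in-$\vtheta$ scheme for \eqref{equ:App,PD,2L}.

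Next, I integrate \eqref{equ:App,PD,2L} over $[k\varepsilon,(k+1)\varepsilon]$ and Taylor-expand to second order. To obtain quantitative remainders, I first establish PD-side a-priori bounds $\sup_{t\le T}\max_j\{|w_2^{\textit{PD}}(t,j)|,\|\vrpd(t,j)\|_2\}\le K(\gamma,T)$ together with $t$-Lipschitz regularity of $\vthetapd, \vrpd$, and $\rhopd^{\vtheta}$; these follow by repeating the proofs of Lemma \ref{lemma:priori,2L} and Lemma \ref{lemma:Lip,MFODE,2L} essentially verbatim, since the only input they use is uniform boundedness of $\nabla_\vtheta \Psi$, which holds for empirical measures just as well. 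This yields a local truncation error of order $K(\gamma,T)\varepsilon^2$ in each of the $\vtheta$- and $\vr$-updates of PD relative to the discrete scheme above.

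Defining $\ve_k^\vtheta(j) := \vthetahb(k,j) - \vthetapd(k\varepsilon,j)$ and $\ve_k^\vr(j) := \vrhb(k,j) - \vrpd(k\varepsilon,j)$, subtracting the two systems, invoking the Lipschitz bound on $\nabla_\vtheta \Psi$ from Lemma \ref{lemma:priori,2L} together with the elementary control $\wass_2(\rhohb^{\vtheta}(k), \rhopd^{\vtheta}(k\varepsilon)) \le \max_j \|\ve_k^\vtheta(j)\|_2$, I obtain the coupled recursion
\begin{align*}
E_{k+1}^\vtheta &\le E_k^\vtheta + \varepsilon\, E_{k+1}^\vr + K\varepsilon^2, \\
E_{k+1}^\vr &\le (1-\gamma\varepsilon)\, E_k^\vr + K\varepsilon\, E_k^\vtheta + K\varepsilon^2,
\end{align*}
where $E_k^\vtheta := \max_j \|\ve_k^\vtheta(j)\|_2$, $E_k^\vr := \max_j \|\ve_k^\vr(j)\|_2$, and $K = K(\gamma,T)$. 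Summing the two inequalities and iterating over $k \le \lfloor T/\varepsilon\rfloor$ (a discrete analogue of Corollary \ref{corollary: simplified Pachpatte}) propagates the zero initial condition $E_0^\vtheta = E_0^\vr = 0$ to $E_k^\vtheta + E_k^\vr \le K(\gamma,T)\varepsilon$, which is exactly the claimed bound.

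The principal difficulty is that the HB scheme is second-order in time: a naive scalar Gronwall bound on $E_k^\vtheta$ alone would need $E_k^\vr$ as input, and the recursion for $E_k^\vr$ in turn accumulates $O(\varepsilon^2)$ truncation errors over $O(1/\varepsilon)$ steps. The momentum damping factor $(1-\gamma\varepsilon)$ together with the discrete Pachpatte-type inequality is precisely what keeps the final error linear, rather than quadratic in $\varepsilon^{-1}$; ensuring that this cancellation survives when the two recursions are coupled through the $(1+|w_2|)$ factor in Lemma \ref{lemma:priori,2L}---which requires an inductive control of $|w_2^{\textit{HB}}(k,j)|$ alongside the error propagation---is the most delicate piece of the bookkeeping.
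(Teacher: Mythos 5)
Your proposal is correct and follows essentially the same route as the paper: Taylor-expand the particle dynamics over each step of length $\varepsilon$ to obtain an $O(\varepsilon^2)$ local truncation error per step, compare with the heavy ball iteration using the Lipschitz bounds of Lemma \ref{lemma:priori,2L} together with $\wass_2(\rhopd^{\vtheta},\rhohb^{\vtheta})\le \max_j\|\vthetapd(\cdot,j)-\vthetahb(\cdot,j)\|_2$, and close with a discrete Gronwall argument from zero initial error. The only mechanical difference is that the paper unrolls the two-step recursion into the one-step form $\vthetapd(k\varepsilon,j)=\vthetapd(0,j)-\sum_{l}c_l^{(k)}\bigl(\nabla_{\vtheta}\Psi(\vthetapd(l\varepsilon,j);\rhopd^{\vtheta}(l\varepsilon))+O(\varepsilon)\bigr)$ with $c_l^{(k)}\le \varepsilon/\gamma$, so that a scalar Gronwall in the position error alone suffices, whereas you retain the coupled position--momentum error pair $(E_k^{\vtheta},E_k^{\vr})$; both bookkeepings yield the same $O(\varepsilon)$ bound.
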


\begin{proof}[Proof of Proposition \ref{prop:PD-HB,2L}]
By the Taylor expansion, we have the following approximation for the particle dynamics:
\begin{align}
    \vthetapd((k+1)\varepsilon, j ) = \vthetapd(k \varepsilon, j) + \vrpd(k \varepsilon,j) \varepsilon + \frac{1}{2}\partial_t \vrpd(k \varepsilon, j) \varepsilon^2 + O(\varepsilon^3). \label{equ:PD-HB,2L,1}
\end{align}

Also by Taylor expansion we have:
\begin{align}
   \vrpd(k \varepsilon,j) \varepsilon  = \vthetapd(k \varepsilon, j) - \vthetapd((k-1)\varepsilon, j ) + \frac{1}{2}\partial_t \vrpd(k \varepsilon, j) \varepsilon^2 + O(\varepsilon^3). \label{equ:PD-HB,2L,2}
\end{align}

By plugging (\ref{equ:MFODE-PD,2L,2}) into (\ref{equ:PD-HB,2L,1}), we have that
\begin{align*}
   &\vthetapd((k+1)\varepsilon, j ) =\vthetapd(k \varepsilon,j) + \vthetapd(k \varepsilon,j) - \vthetapd((k-1)\varepsilon,j) + \partial_t \vrpd(k \varepsilon, j ) \varepsilon^2 + O(\varepsilon^3) \\
    & = \vthetapd(k \varepsilon,j) + \vthetapd(k \varepsilon,j) - \vthetapd((k-1)\varepsilon,j)  + \left(-\gamma \vr(k\varepsilon,j)- \nabla_{\vtheta} \Psi(\vthetapd(k \varepsilon,j); \rhopd^{\vtheta}(k \varepsilon))  \right) \varepsilon^2 + O(\varepsilon^3) \\
    & = \vthetapd(k \varepsilon,j) + (1-\gamma \varepsilon)( \vthetapd(k \varepsilon,j) - \vthetapd((k-1)\varepsilon,j) ) - \nabla_{\vtheta} \Psi(\vthetapd(k \varepsilon,j); \rhopd^{\vtheta}(k \varepsilon)) \varepsilon^2 + O(\varepsilon^3).
\end{align*}

Now we get a discrete iteration equation for the particle dynamics, with an approximation error of at most $O(\varepsilon^3)$. By accumulating the $\nabla_{\vtheta} \Psi(\vthetapd(l \varepsilon,j); \rhopd^{\vtheta}(l \varepsilon))$ term from $l = 1,...,k$, we have
\begin{align}
    \vthetapd(k \varepsilon,j) = \vthetapd(0,j) - \sum_{l=0}^{k-1} c_{l}^{(k)} ( \nabla_{\vtheta} \Psi(\vthetapd(l \varepsilon,j); \rhopd^{\vtheta}(l \varepsilon)) + O(\varepsilon)),
\end{align}
where $c_l^{(k)} =  \varepsilon^2 \sum_{i=0}^{k-1-l} (1-\gamma \varepsilon)^i= \varepsilon^2 \frac{1-(1- \gamma \varepsilon)^{k}}{\gamma \varepsilon} \leq \frac{\varepsilon}{\gamma}$. 

The heavy ball dynamics can be written in a similar fashion:
\begin{align}
    \vthetahb(k \varepsilon,j) = \vthetahb(0,j) - \sum_{l=0}^{k-1} c_{l}^{(k)}  \nabla_{\vtheta} \Psi(\vthetahb(l \varepsilon,j); \rhohb^{\vtheta}(l \varepsilon)).
\end{align}
Thus, we have that
\begin{align}
    \|\vthetapd(k \varepsilon,j) - \vthetahb(k \varepsilon,j) \|_2 \leq \sum_{l=0}^{k-1} c_{l}^{(k)} \left( \|\nabla_{\vtheta} \Psi(\vthetapd(l \varepsilon,j); \rhopd^{\vtheta}(l \varepsilon)) - \nabla_{\vtheta} \Psi(\vthetahb(l \varepsilon,j); \rhohb^{\vtheta}(l \varepsilon)) \|_2 + O(\varepsilon) \right).
\end{align}

By Lemma \ref{lemma:priori,2L}, we have that
\begin{align*}
    & \|\nabla_{\vtheta} \Psi(\vthetapd(l \varepsilon,j); \rhopd^\vtheta(l \varepsilon)) - \nabla_{\vtheta} \Psi(\vthetahb(l \varepsilon,j); \rhohb^{\vtheta}(l \varepsilon)) \|_2 \\
     &\hspace{4em}\leq  K_{1}(\gamma,T)(\|\vthetapd(l \varepsilon,j) - \vthetahb(l \varepsilon,j)\|_2 + \wass_2(\rhopd^{\vtheta}(l \varepsilon), \rhohb^{\vtheta}(l \varepsilon))).
\end{align*}

Since $\rhopd^{\vtheta}(l \varepsilon), \rhohb^{\vtheta}(l \varepsilon))$ are discrete distributions, we have that
\begin{align*}
    &\wass_2(\rhopd^{\vtheta}(l \varepsilon), \rhohb^{\vtheta}(l \varepsilon))) 
    \le \left(\frac{1}{n} \sum_{j=1}^n \| \vthetapd(l \varepsilon,j)  - \vthetahb(l \varepsilon,j) \|^2_2 \right)^{1/2}
    \leq  \dist_{l \varepsilon}(\vthetapd, \vthetahb),
\end{align*}
which implies that
\begin{align*}
    &\|\nabla_{\vtheta} \Psi(\vthetapd(l \varepsilon,j); \rhopd^{\vtheta}(l \varepsilon)) - \nabla_{\vtheta} \Psi(\vthetahb(l \varepsilon,j); \rhohb^{\vtheta}(l \varepsilon)) \|_2
    \leq  K_{2}(\gamma,T) \dist_{l \varepsilon}(\vthetapd, \vthetahb).
\end{align*}
As a result, we have
\begin{align*}
    \dist_{k \varepsilon}(\vthetapd, \vthetahb) \leq  K_{2}(\gamma,T)  \frac{\varepsilon}{\gamma} \sum_{l=1}^{k-1}\left( \dist_{l \varepsilon}(\vthetapd, \vthetahb)+O(\varepsilon) \right).
\end{align*}
Finally, an application of the discrete Gronwall's lemma concludes the proof.
\end{proof}

\subsection{Bound between heavy ball dynamics and stochastic heavy ball dynamics}

In this section, we bound the difference between the heavy ball dynamics defined in (\ref{equ:App,HB,2L}) and the stochastic heavy ball dynamics defined in (\ref{equ:App,SHB,2L}). We recall that the distance we aim to bound is defined in (\ref{equ:dist,2L,HB-SHB}). The manipulations of the previous section imply that the heavy ball dynamics can be written as
\begin{align}\label{eq:HB-rewrite}
    \vthetahb(k,j) = \vthetahb(0,j) - \sum_{l=1}^{k-1} c_{l}^{(k)} \nabla_{\vtheta} \Psi(\vthetahb(l,j); \rhohb^{\vtheta}(l)).
\end{align}
Similarly, the stochastic heavy ball dynamics can be written as
\begin{align}\label{eq:SHB-rewrite}
    \vthetashb(k,j) = \vthetashb(0,j) - \sum_{l=0}^{k-1} c_l^{(k)} \nabla_{\vtheta} \hPsi(\vz(l),\vthetashb(l,j);\rhoshb^{\vtheta}(l)).
\end{align}

\begin{proposition}
\label{prop:HB-SHB,2l}
Under Assumptions \ref{Assump 2:1bound,cont} - \ref{Assump 2:3init momen}, there exists a universal constant $K(\gamma,T)$ depending only on $\gamma,T$, such that, with probability $1 - \exp(-\delta^2)$,
\begin{align}
    \dist_{T,\varepsilon}(\vthetahb,\vthetashb) \leq K(\gamma,T) \sqrt{ \varepsilon} (\sqrt{D+\log n}+\delta).
\end{align}
\end{proposition}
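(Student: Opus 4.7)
The plan is to exploit the fact that the SHB dynamics is obtained from the HB dynamics by replacing the population gradient $\nabla_\vtheta\Psi(\vtheta;\rho)$ by the sample gradient $\nabla_\vtheta\hPsi(\vz,\vtheta;\rho)$, whose conditional mean equals the former. This turns the discrepancy into a weighted martingale sum plus a Lipschitz error, which can be controlled by concentration and then closed via a discrete Gronwall lemma. Concretely, subtracting (\ref{eq:SHB-rewrite}) from (\ref{eq:HB-rewrite}) and adding/subtracting $\nabla_\vtheta\Psi(\vthetashb(l,j);\rhoshb^{\vtheta}(l))$ inside the sum,
\[
\vthetahb(k,j) - \vthetashb(k,j) = -\sum_{l=0}^{k-1} c_l^{(k)} \bigl(\nabla_\vtheta\Psi(\vthetahb(l,j);\rhohb^{\vtheta}(l)) - \nabla_\vtheta\Psi(\vthetashb(l,j);\rhoshb^{\vtheta}(l))\bigr) - \sum_{l=0}^{k-1} c_l^{(k)} M_l^{(j)},
\]
where $M_l^{(j)} := \nabla_\vtheta\Psi(\vthetashb(l,j);\rhoshb^{\vtheta}(l)) - \nabla_\vtheta\hPsi(\vz(l),\vthetashb(l,j);\rhoshb^{\vtheta}(l))$.

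For the first, deterministic sum, the Lipschitz estimate from Lemma \ref{lemma:priori,2L}, together with the observation $\wass_2(\rhohb^{\vtheta}(l),\rhoshb^{\vtheta}(l)) \leq \dist_{l\varepsilon,\varepsilon}(\vthetahb,\vthetashb)$ (both measures are empirical over the same $n$ indices) and the fact that $c_l^{(k)} \leq \varepsilon/\gamma$, yields a control of the form $K(\gamma,T)\,\varepsilon\sum_{l<k}\dist_{l\varepsilon,\varepsilon}(\vthetahb,\vthetashb)$.

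For the stochastic sum, the key observation is that $\{M_l^{(j)}\}_{l\geq 0}$ is a martingale-difference sequence with respect to the filtration $\mathcal{F}_l := \sigma(\vz(0),\ldots,\vz(l-1))$: both $\vthetashb(l,j)$ and $\rhoshb^{\vtheta}(l)$ are $\mathcal{F}_l$-measurable, and $\E[\nabla_\vtheta\hPsi(\vz(l),\cdot;\cdot)\mid \mathcal{F}_l] = \nabla_\vtheta\Psi(\cdot;\cdot)$ by the definition of $\Psi$. Boundedness of $\|M_l^{(j)}\|_2$ follows, via Lemma \ref{lemma:priori,2L}, from boundedness of $|w_2^{\textit{SHB}}(l,j)|$, which I secure by a bootstrap argument: on the good event where $\dist_{l\varepsilon,\varepsilon}(\vthetahb,\vthetashb)$ has not yet exceeded a chosen constant threshold, $\vthetashb$ inherits the $K_2(\gamma,T)$ bound known for the HB dynamics, giving $\|M_l^{(j)}\|_2 \leq K(\gamma,T)$ and allowing an Azuma-type concentration to be applied. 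Invoking Azuma-Hoeffding coordinate-wise in $\R^{D+1}$ for the weighted sum $\sum_{l<k}c_l^{(k)} M_l^{(j)}$ (the quadratic variation per coordinate is at most $(\varepsilon/\gamma)^2 \cdot (T/\varepsilon) = T\varepsilon/\gamma^2$), together with a union bound over the $D+1$ coordinates, the $n$ neurons, and the at most $T/\varepsilon$ time steps, yields
\[
\max_{j\in[n]}\max_{k\leq T/\varepsilon}\Bigl\|\sum_{l<k} c_l^{(k)} M_l^{(j)}\Bigr\|_2 \leq K(\gamma,T)\sqrt{\varepsilon}\bigl(\sqrt{D+\log n}+\delta\bigr)
\]
with probability at least $1-\exp(-\delta^2)$, after absorbing the $\log(T/\varepsilon)$ factor into $K(\gamma,T)$.

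Combining the drift bound with the noise bound gives $\dist_{k\varepsilon,\varepsilon}(\vthetahb,\vthetashb) \leq K(\gamma,T)\sqrt{\varepsilon}(\sqrt{D+\log n}+\delta) + K(\gamma,T)\varepsilon\sum_{l<k}\dist_{l\varepsilon,\varepsilon}(\vthetahb,\vthetashb)$, and the discrete Gronwall lemma converts this into the stated bound. The main obstacle I expect is the bootstrap step for $\|M_l^{(j)}\|_2$: the threshold defining the good event must be chosen consistently with the final concentration bound, so that the $K(\gamma,T)$ prefactors arising from Lemma \ref{lemma:priori,2L} and from Azuma match up and the bootstrap closes self-consistently for all $\varepsilon,\delta,n$ in the regime of interest.
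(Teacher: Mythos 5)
Your argument is sound and reaches the same bound, but via a different decomposition of the error: you center the martingale at the SHB iterate, writing $M_l^{(j)} = \nabla_\vtheta\Psi(\vthetashb(l,j);\rhoshb^{\vtheta}(l)) - \nabla_\vtheta\hPsi(\vz(l),\vthetashb(l,j);\rhoshb^{\vtheta}(l))$, whereas the paper adds and subtracts $\nabla_\vtheta\hPsi(\vz(l),\vthetahb(l,j);\rhohb^{\vtheta}(l))$, so that its Lipschitz term compares \emph{sample} gradients at the two trajectories and its martingale term $\nabla_\vtheta\hPsi(\vz(l),\vthetahb(l,j);\rhohb^{\vtheta}(l)) - \nabla_\vtheta\Psi(\vthetahb(l,j);\rhohb^{\vtheta}(l))$ is evaluated along the \emph{deterministic} HB trajectory. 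The paper's choice buys boundedness of the martingale differences for free from the a-priori estimates of Lemma \ref{lemma:priori,2L} (the HB iterates do not depend on the data at all), while your choice requires controlling $|\wshb_2(l,j)|$. That is the one place where your plan is heavier than it needs to be: the bootstrap/stopping-time device you flag as the main obstacle is unnecessary, because $|\nabla_{w_2}\hPsi| = |\partial_2 R\cdot\sigma|\le K^2$ uniformly in the weights, so from the summed form (\ref{eq:SHB-rewrite}) one gets $|\wshb_2(k,j)|\le K + (T/\gamma)K^2$ deterministically for all $k\le\lfloor T/\varepsilon\rfloor$, and then $\|\nabla_{\vw_1}\hPsi\|_2\le K(1+|\wshb_2|)$ is bounded as well; this is exactly the direct argument the paper uses in the three-layer analogue (Lemma \ref{lemma:bounded,SHB,3L}). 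With that substitution your filtration argument, the quadratic-variation computation $(\varepsilon/\gamma)^2\cdot(T/\varepsilon)$, the union bound, and the discrete Gronwall step all close correctly and match the paper's constants; the only remaining (shared) technicality is that the weights $c_l^{(k)}$ depend on $k$, so for each fixed $k$ one bounds a weighted sum of martingale differences rather than a single martingale, which both you and the paper handle by the uniform bound $c_l^{(k)}\le\varepsilon/\gamma$.
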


\begin{proof}
By using (\ref{eq:HB-rewrite}) and (\ref{eq:SHB-rewrite}), we have
\begin{align*}
    \left\|\vthetahb(k,j) - \vthetashb(k,j)\right\|_2 \leq  \left\|\sum_{l=0}^{k-1} c_{l}^{(k)} (\nabla_{\vtheta}\Psi(\vthetahb( l,j); \rhohb^{\vtheta}(l)) - \nabla_{\vtheta} \hPsi(\vz(l),\vthetashb(l,j); \rhoshb^{\vtheta}(l))) \right\|_2.
\end{align*}
By triangle inequality, we have that
\begin{align}
    & \left\| \sum_{l=0}^{k-1} c_{l}^{(k)} (\nabla_{\vtheta} \Psi(\vthetahb(l,j);\rhohb^{
    \vtheta}(l)) - \nabla_{\vtheta} \hPsi(\vz(l), \vthetashb(l,j); \rhoshb^{
    \vtheta}(l)) )\right\|_2 \notag\\
    &\hspace{2em}\leq  \sum_{l=0}^{k-1} c_{l}^{(k)}\|\nabla_{\vtheta} \hPsi(\vz(l),\vthetashb(l,j);\rhoshb^{\vtheta}(l)) - \nabla_{\vtheta} \hPsi(\vz(l),\vthetahb(l,j); \rhohb^{\vtheta}(l))\|_2 \label{equ:HB-SHB,2L,1} \\
    &\hspace{8em} + \left\|\sum_{l=0}^{k-1} c_{l}^{(k)}(\nabla_{\vtheta} \hPsi(\vz(l),\vthetahb(l,j); \rhohb^{\vtheta}(l)) - \nabla_{\vtheta} \Psi(\vthetahb(l,j); \rhohb^{\vtheta}(l))) \right\|_2. \label{equ:HB-SHB,2L,2}
\end{align}

For the term in (\ref{equ:HB-SHB,2L,1}), by the Lipschitz continuity of $\nabla_{\vtheta} \hPsi$, we obtain
\begin{align*}
    &\|\nabla_{\vtheta} \hPsi(\vz(l),\vthetashb(l,j);\rhoshb^{\vtheta}(l)) - \nabla_{\vtheta} \hPsi(\vz(l),\vthetahb(l,j); \rhohb^{\vtheta}(l))\|_2 \\
    &\hspace{8em}\leq  K_{1}(\gamma,T)  (\dist_{l\varepsilon,\varepsilon}(\vthetahb,\vthetashb) + \wass_2(\rhohb^\vtheta(l),\rhoshb^\vtheta(l))).
\end{align*}

Since $\rhohb^\vtheta, \rhoshb^\vtheta$ are discrete distributions, we have that $\wass_2(\rhohb^\vtheta(l),\rhoshb^\vtheta(l)) \leq \dist_{l\varepsilon,\varepsilon}(\vthetahb,\vthetashb)$. Thus,
\begin{align*}
    & \|\nabla_{\vtheta} \hPsi(\vz(l),\vthetashb(l,j);\rhoshb^{\vtheta}(l)) - \nabla_{\vtheta} \hPsi(\vz(l),\vthetahb(l,j); \rhohb^{\vtheta}(l))\|_2 \leq   K_{2}(\gamma,T) \dist_{l\varepsilon,\varepsilon}(\vthetahb,\vthetashb).
\end{align*}

For the term in (\ref{equ:HB-SHB,2L,2}), note that, since the $\vz(l)$'s are sampled i.i.d. at each step by definition, we have 
\begin{align*}
    \E_{\vz(l)}\left[ \nabla_{\vtheta} \hPsi(\vz(l),\vthetahb(l,j); \rhohb^{\vtheta}(l)) \right] = \nabla_{\vtheta} \Psi(\vthetahb(l,j); \rhohb^{\vtheta}(l)).
\end{align*}

Thus,
\begin{align*}
    \nabla_{\vtheta} \hPsi(\vz(l),\vthetahb(l,j); \rhohb^{\vtheta}(l)) - \nabla_{\vtheta} \Psi(\vthetahb(l,j); \rhohb^{\vtheta}(l))
\end{align*}
is a martingale difference. By Lemma \ref{lemma:priori,2L}, we have that, for all $l\in \{1,...,\lfloor T/ \varepsilon \rfloor\}$,
\begin{align*}
   &\| \nabla_{\vtheta} \hPsi(\vz(l),\vthetahb(l,j); \rhohb^{\vtheta}(l)) - \nabla_{\vtheta} \Psi(\vthetahb(l,j); \rhohb^{\vtheta}(l)) \|_2 
   \leq  K_3(\gamma,T).
\end{align*}

Thus, an application of Lemma \ref{lemma: azuma-hoeffding} gives that,
with probability at least $1 - \exp(-\delta^2)$, 
\begin{align*}
    \max_{l \in \left\{1,...,\lfloor T/ \varepsilon \rfloor\right\}} \| \nabla_{\vtheta} \hPsi(\vz(l),\vthetahb(l,j); \rhohb^{\vtheta}(l)) - \nabla_{\vtheta} \Psi(\vthetahb(l,j); \rhohb^{\vtheta}(l)) \|_2
    \leq K_4(\gamma,T) \sqrt{ \varepsilon} (\sqrt{D}+\delta).
\end{align*}
By taking a union bounds on $j \in [n]$, we have that, with probability at least $1 - \exp(-\delta^2)$,
\begin{align*}
\max_{j\in [n]}     \max_{l \in \left\{1,...,\lfloor T/ \varepsilon \rfloor\right\}} \| \nabla_{\vtheta} \hPsi(\vz(l),\vthetahb(l,j); \rhohb^{\vtheta}(l)) - \nabla_{\vtheta} \Psi(\vthetahb(l,j); \rhohb^{\vtheta}(l)) \|_2  
     \leq  K_4(\gamma,T) \sqrt{\varepsilon} (\sqrt{D+\log n}+\delta).
\end{align*}

By combining the above result, we conclude that
\begin{align}
    \dist_{T, \varepsilon}(\vthetahb,\vthetashb) \leq K_4(\gamma,T) \sqrt{ \varepsilon} (\sqrt{D+\log n}+\delta) + \frac{K_2(\gamma,T)}{\gamma} \varepsilon \sum_{l=0}^{\lfloor \frac{T}{\varepsilon} \rfloor}  \dist_{l \varepsilon, \varepsilon}(\vthetahb,\vthetashb) .
\end{align}
Finally, an application of the discrete Gronwall's lemma concludes the proof. 
\end{proof}

\subsection{Proof of Theorem \ref{thm:chaos2l}}

\begin{proof}[Proof of Theorem \ref{thm:chaos2l}]
The proof follows from combining Proposition \ref{prop:MFODE-PD,2L}, \ref{prop:PD-HB,2L}, \ref{prop:HB-SHB,2l}, and the fact that:
\begin{align*}
    \dist_{T}(\vtheta, \vthetashb) \leq \dist_{T}(\vtheta, \vthetapd) + \dist_{T}(\vthetapd, \vthetahb) + \dist_{T, \varepsilon}(\vthetahb, \vthetashb).
\end{align*}

\end{proof}

\section{Convergence to the mean-field limit -- Three-layer networks}\label{sec:conv3}
In this section, we prove the convergence of the training dynamics to the mean-field limit for a three-layer neural network. Our proof's structure is inspired from \cite{pham2021global}. 

Before going into the proofs, let's first recall the definition of the mean-field ODE and the SHB dynamics, and then define two auxiliary dynamics, namely the HB dynamics and the particle dynamics. For the convenience of further computation, we define these continuous dynamics in integral form. We define the random variable corresponding to the stochastic heavy ball dynamics, the heavy ball dynamics, the particle dynamics, and the mean-field ODE as $\mwshb, \mwhb,\mwpd, \mW$ respectively.

The mean-field ODE (\ref{def:MFODE,3}) in integral form is the following:
\begin{equation}\label{equ:App,MF-ODE,3L}
    \begin{split}
    w_3(t,c_2) &= w_3(0,c_2) - \gamma \int_{0}^{t} (w_3(s,c_2)- w_3(0,c_2)) \, ds  -   \int_{0}^{t} \int_{0}^{s} \E_{\vz} \dw_3(\vz,c_2;\mW(u)) \, du \, ds, \\
    w_2(t,c_1,c_2) &= w_2(0,c_1,c_2) - \gamma \int_{0}^{t} (w_2(s,c_1,c_2) - w_2(0,c_1,c_2)) \, ds - \int_{0}^{t} \int_{0}^{s} \E_{\vz} \dw_2(\vz,c_1,c_2; \mW(u)) \, du \, ds, \\
    \vw_1(t,c_1) &= \vw_1(0,c_1) - \gamma \int_{0}^{t} (\vw_1(s,c_1)- \vw_1(0,c_1)) \, ds  -   \int_{0}^{t} \int_{0}^{s} \E_{\vz} \dw_1(\vz,c_1;\mW(u)) \, du \, ds.
    \end{split}
\end{equation}


The SHB dynamics is as follows:
\begin{equation}
\label{equ:App,SHB,3L}
\begin{split}
   \wshb_3(k+1,j_2) &= \wshb_3(k,j_2) + (1- \gamma \varepsilon) (\wshb_3(k,j_2)- \wshb_3(k-1,j_2)) -\varepsilon^2 \dw_3(\vz(k),j_2;\mwshb(k)),\\
   \wshb_2(k+1,j_1,j_2) &= \wshb_2(k,j_1,j_2) + (1- \gamma \varepsilon) (\wshb_2(k,j_1,j_2)- \wshb_2(k-1,j_1,j_2)),\\
   &\hspace{17em}-\varepsilon^2 \dw_2(\vz(k),j_1,j_2;\mwshb(k)) \\
    \vwshb_1(k+1,j_1) &= \vwshb_1(k,j_1) + (1- \gamma \varepsilon) (\vwshb_1(k,j_1)- \vwshb_1(k-1,j_1)) -\varepsilon^2 \dw_1(\vz(k),j_1;\mwshb(k)),
\end{split}    
\end{equation}
where $\vz(k)$ is the data point sampled at time step $k$.

We define the particle dynamics as a continuous dynamics without mean-field interaction:
\begin{equation}
\label{equ:App,PD,3L}
\begin{split}
    \wpd_3(t,j_2) &= \wpd_3(0,j_2) - \gamma \int_{0}^{t} (\wpd_3(s,j_2)- \wpd_3(0,j_2)) \, ds  -   \int_{0}^{t} \int_{0}^{s} \E_{\vz} \dw_3(\vz,j_2;\mwpd(u)) \, du \, ds, \\
    \wpd_2(t,j_1,j_2) &= \wpd_2(0,j_1,j_2) - \gamma \int_{0}^{t} (\wpd_2(s,j_1,j_2) - \wpd_2(0,j_1,j_2)) \, ds\\
    &\hspace{17em}- \int_{0}^{t} \int_{0}^{s} \E_{\vz} \dw_2(\vz,j_1,j_2;\mwpd(u)) \, du \, ds, \\
    \vwpd_1(t,j_1) &= \vwpd_1(0,j_1) - \gamma \int_{0}^{t} (\vwpd_1(s,j_1)- \vwpd_1(0,j_1)) \, ds  -   \int_{0}^{t} \int_{0}^{s} \E_{\vz} \dw_1(\vz,j_1;\mwpd(u)) \, du \, ds .
\end{split}    
\end{equation}

We define the HB dynamics by replacing the stochastic gradient in the SHB dynamics by the true gradient. That is:
\begin{equation}
\label{equ:App,HB,3L}
    \begin{split}
        \whb_3(k+1,j_2) &= \whb_3(k,j_2) + (1- \gamma \varepsilon) (\whb_3(k,j_2)- \whb_3(k-1,j_2)) -\varepsilon^2 \E_{\vz} \dw_3(\vz,j_2;\mwhb(k)),\\
   \whb_2(k+1,j_1,j_2) &= \whb_2(k,j_1,j_2) + (1- \gamma \varepsilon) (\whb_2(k,j_1,j_2)- \whb_2(k-1,j_1,j_2)) \\
   &\hspace{17em}- \varepsilon^2\E_{\vz}\dw_2(\vz,j_1,j_2;\mwhb(k)), \\
    \vwhb_1(k+1,j_1) &= \vwhb_1(k,j_1) + (1- \gamma \varepsilon) (\vwhb_1(k,j_1)- \vwhb_1(k-1,j_1)) -\varepsilon^2 \E_{\vz}\dw_1(\vz,j_1;\mwhb(k)).
    \end{split}
\end{equation}

Note that the HB dynamics can be viewed as the discrete version of the PD dynamics.

In order to present our main theoretical results in this section, we first define a distance metric to quantify the level of correspondence of these dynamics. We define the following distance metrics:

\begin{equation}
\label{equ:dist,MFODE-PD,3L}
    \begin{split}
        \dist_{T}(\mW,\mwpd) =    \max \sup_{t \in [0,T]}\{ & \left\|\vwpd_1(t,j_1) - \vw_1(t,C_1(j_1))\right\|_2 ,\\
    &\left|\wpd_2(t,j_1,j_2) - w_2(t,C_1(j_1),C_2(j_2))\right|, \\ 
    & \left|\wpd_3(t,j_2) - w_3(t,C_2(j_2))\right|: j_1 \in [n_1], j_2 \in [n_2] \}
    \end{split}
\end{equation}

\begin{equation}
\label{equ:dist,PD-HB,3L}
    \begin{split}
        \dist_{T}(\mwpd,\mwhb) =    \max \sup_{t \in [0,T]}\{ & \left\|\vwhb_1(\lfloor t/\varepsilon \rfloor,j_1) - \vwpd_1(t,j_1)\right\|_2 ,\\
    &\left|\whb_2(\lfloor t/\varepsilon \rfloor,j_1,j_2) - \wpd_2(t,j_1,j_2)\right|, \\ 
    & \left|\whb_3(\lfloor t/\varepsilon \rfloor,j_2) - \wpd_3(t,j_2)\right|: j_1 \in [n_1], j_2 \in [n_2] \} 
    \end{split}
\end{equation}

\begin{equation}
\label{equ:dist,HB-SHB,3L}
    \begin{split}
        \dist_{T,\varepsilon}(\mwhb,\mwshb) =    \max \max_{k \in \lfloor T/\varepsilon \rfloor}\{ & \|\vwhb_1(k,j_1) - \vwshb_1(k,j_1)\|_2 ,\\
    &|\whb_2(k,j_1,j_2) - \wshb_2(k,j_1,j_2)|, \\ 
    & |\whb_3(k,j_2) - \wshb_3(k,j_2)|: j_1 \in [n_1], j_2 \in [n_2] \} 
    \end{split}
\end{equation}

We acknowledge the abuse in notation from reusing $ \dist_{T}$ for multiple metrics, which is done to avoid proliferation of notation.
It is clear that:
\begin{align}
     \dist_{T}(\mW,\mwshb) \leq  \dist_{T}(\mW,\mwpd) + \dist_{T}(\mwpd,\mwhb) + \dist_{T,\varepsilon}(\mwhb,\mwshb),
\end{align}
where $\dist_{T}(\mW,\mwshb)$ is defined in (\ref{def:dist,3L}).

In the following subsections, we bound the terms $\dist_{T}(\mW,\mwpd)$, $\dist_{T}(\mwpd,\mwhb)$ and $\dist_{T,\varepsilon}(\mwhb,\mwshb)$.

\subsection{Bound between mean-field ODE and particle dynamics}



In this section, we bound the difference between the mean-field ODE defined in (\ref{equ:App,MF-ODE,3L}) and the particle dynamics defined in (\ref{equ:App,PD,3L}). We recall that the distance we aim to bound is defined in (\ref{equ:dist,MFODE-PD,3L}).

\begin{proposition}
\label{prop:MFODE-PD,3L}
Under Assumptions \ref{Assump 3:1bound,cont}-\ref{Assump 3:3init momen}, we have that, with probability at least $1 - \exp(-\delta^2)$,
\begin{align}\label{eq:prop:MFODE-PD}
    \dist_{T}(\mW,\mwpd) \leq  \frac{K(\gamma,T)}{\sqrt{n_{\min}}} \left(\sqrt{\log n_{\max}}+ \delta \right),
\end{align}
where $K(\gamma,T)$ is a universal constant depending only on $\gamma, T$, $n_{\min} = \min \{n_1,n_2\}$ and $n_{\max} = \max \{n_1,n_2\}$.  
\end{proposition}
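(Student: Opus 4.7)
The plan is to mirror the strategy used for two layers (Proposition \ref{prop:MFODE-PD,2L}), now adapted to the neuronal embedding. Concretely, I would first introduce, alongside the particle dynamics $\mwpd$, an \emph{i.i.d.\ copy of the mean-field ODE} indexed by $j_1 \in [n_1], j_2 \in [n_2]$: set $\vw_1(t, C_1(j_1))$, $w_2(t, C_1(j_1), C_2(j_2))$, $w_3(t, C_2(j_2))$ via the mean-field evolution (\ref{equ:App,MF-ODE,3L}), coupling at initialization with $\mwpd$. Writing the integral forms and subtracting, I obtain for each of the three weights an inequality of the form
\begin{align*}
\text{(diff at time $t$)} \le \gamma \int_0^t \text{(diff at } s) \, ds + \int_0^t \!\!\int_0^s \|\text{grad}_{\mwpd} - \text{grad}_{\mW}\|\, du\, ds.
\end{align*}
The gradient gap is split by triangle inequality into a \emph{stability term} and a \emph{fluctuation term}: the former replaces $\mwpd$ by $\mW$ under the same empirical averages, and is controlled by the Lipschitz bounds of Lemma \ref{lemma:priori,3L} together with $\dist_T(\mW,\mwpd)$; the latter compares empirical averages over $\{C_1(j_1)\}$ and $\{C_2(j_2)\}$ against the true expectations $\E_{C_1}, \E_{C_2}$.

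Next, I would control the fluctuation term via concentration. For each fixed $t$ and index, quantities such as $\frac{1}{n_1}\sum_{j_1} w_2(t,C_1(j_1),c_2)\sigma_1(H_1(t,\vx,C_1(j_1))) - \E_{C_1}[\cdot]$ and the analogous averages over $c_2$ are sums of bounded independent quantities (boundedness coming from Lemma \ref{lemma:priori,3L}(2)); Lemma \ref{lemma:Corollary of McDiarmid inequality} gives a $1/\sqrt{n_1}$ or $1/\sqrt{n_2}$ bound with probability at least $1 - \exp(-n_i (\delta')^2)$. Here I must handle the \emph{nested} averages in $\dH_1$ and $\dw_1$, which depend on $\dH_2$, which itself involves an inner average over $C_1$: I would condition on the $C_1(j_1)$'s first (so the outer average over $C_2$ is conditionally i.i.d.), apply concentration in one layer at a time, and combine via a union bound. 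A Lipschitz-in-$t$ estimate for the mean-field trajectories (the three-layer analog of Lemma \ref{lemma:Lip,MFODE,2L}, which follows routinely from Lemma \ref{lemma:priori,3L}) allows me to discretize time with grid $\eta = 1/\sqrt{n_{\min}}$; unioning over this grid and over $(j_1,j_2) \in [n_1]\times [n_2]$ produces a $\sqrt{\log n_{\max}}$ factor and overall yields the claimed $(\sqrt{\log n_{\max}} + \delta)/\sqrt{n_{\min}}$ bound on the fluctuation term with probability at least $1 - \exp(-\delta^2)$.

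Combining the two parts, the three differences together satisfy
\begin{align*}
\dist_t(\mW,\mwpd) \le \frac{K(\gamma,T)}{\sqrt{n_{\min}}}\bigl(\sqrt{\log n_{\max}} + \delta\bigr) + \gamma \int_0^t \dist_s(\mW,\mwpd)\, ds + K(\gamma,T)\!\int_0^t\!\!\int_0^s \dist_u(\mW,\mwpd)\, du\, ds,
\end{align*}
and Corollary \ref{corollary: simplified Pachpatte} (the second-order Gronwall / Pachpatte inequality) converts this into (\ref{eq:prop:MFODE-PD}).

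The main obstacle I anticipate is the fluctuation analysis for the nested averages in $\dw_1$: unlike the two-layer case where a single empirical average appears, here I must simultaneously control outer fluctuations (over $C_2$) and inner fluctuations (over $C_1$ hidden inside $H_2$ and $\dH_2$), while keeping the final bound dimension-free and symmetric in $n_1, n_2$ via $n_{\min}$ and $n_{\max}$. Careful bookkeeping of the cross-layer independence granted by Assumption \ref{Assum:3L}\ref{Assump 3:2init} — which guarantees the required conditional-independence structure along the neuronal embedding — is what makes this work, and is likely where most of the proof's technical length will concentrate.
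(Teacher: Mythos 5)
Your proposal follows essentially the same route as the paper: couple the particle dynamics with the mean-field trajectories evaluated at the sampled coordinates $C_1(j_1), C_2(j_2)$, split the gradient gap into a Lipschitz/stability term (controlled by Lemma \ref{lemma:priori,3L}) and a fluctuation term (controlled by layer-wise concentration with a union bound over a time grid justified by Lipschitz continuity in $t$), and close with Pachpatte's inequality. The only difference is that the paper outsources the concentration bound for the nested forward/backward averages to \cite[Appendix C.2, Proof of Theorem 14, Claim 2]{pham2021global} rather than carrying out the conditional, layer-by-layer argument you sketch, which is exactly what that cited claim does.
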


In order to prove  Proposition \ref{prop:MFODE-PD,3L}, we need the following auxiliary lemma, which characterizes the Lipschitz continuity of the mean-field ODE and of the particle dynamics.

\begin{lemma}
\label{lemma:lip MFODE,3L}
    Under Assumptions \ref{Assump 3:1bound,cont}-\ref{Assump 3:3init momen}, there exists a universal constant $K(\gamma,T)$ depending only on $\gamma, T$ such that, for any $t,\tau>0$ with $t,t+\tau \leq T$,
    \begin{equation}
\begin{split}
        \esssup_{c_2} |w_3(t+ \tau,c_2) -w_3(t,c_2)| \leq K(\gamma,T) \tau, \\
        \esssup_{c_1,c_2} |w_2(t+ \tau,c_1,c_2) -w_2(t,c_1,c_2)| \leq K(\gamma,T) \tau, \\
        \esssup_{c_1} \|\vw_1(t+ \tau,c_1) -\vw_1(t,c_1)\|_2 \leq K(\gamma,T) \tau .
\end{split}        
    \end{equation}
    
The same holds for the particle dynamics $\vwpd_1(t,j_1), \wpd_2(t,j_1,j_2), \wpd_3(t,j_2)$.
\end{lemma}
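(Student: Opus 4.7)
The plan is to reduce the Lipschitz-in-time bound for each weight to a uniform bound on the corresponding momentum variable $\vr_1, r_2, r_3$, and then to apply the a-priori estimates from Lemma \ref{lemma:priori,3L} together with Gronwall's lemma. The second-order structure of the mean-field dynamics makes momentum the natural quantity to control: if $|r_3(s,c_2)|$ is uniformly bounded in $s$ and $c_2$, integration gives the Lipschitz bound on $w_3$ for free, and analogously for the other two layers.

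Concretely, starting from the integral form of (\ref{def:MFODE,3}) I would write
\[
w_3(t+\tau,c_2) - w_3(t,c_2) = \int_t^{t+\tau} r_3(s,c_2)\,ds,
\]
so the first claim reduces to showing $\esssup_{c_2}\sup_{s \in [0,T]} |r_3(s,c_2)| \leq K(\gamma,T)$. Writing the ODE for $r_3$ in integral form and using the zero-momentum initialization in \ref{Assump 3:3init momen} gives
\[
r_3(s,c_2) = -\gamma \int_0^s r_3(u,c_2)\,du - \int_0^s \E_{\vz}\dw_3(\vz,c_2;\mW(u))\,du.
\]
Part 1 of Lemma \ref{lemma:priori,3L} bounds $|\dw_3(\vz,c_2;\mW(u))| \leq K$, so $|r_3(s,c_2)| \leq K T + \gamma \int_0^s |r_3(u,c_2)|\,du$, and the standard Gronwall inequality yields $|r_3(s,c_2)| \leq K T e^{\gamma T}$ uniformly in $c_2$ and $s \in [0,T]$.

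For $w_2$ and $\vw_1$ I would repeat the same two-step argument. The only change is that the driving terms are now controlled via parts 1 and 2 of Lemma \ref{lemma:priori,3L}, which give $|\dw_2(\vz,c_1,c_2;\mW(u))| \leq K \cdot K_{3,2}(\gamma,T) \cdot K_{3,3}(\gamma,T)$ and $\|\dw_1(\vz,c_1;\mW(u))\|_2 \leq K \cdot K_{3,2}(\gamma,T) \cdot K_{3,3}(\gamma,T)$. Once $\esssup_{c_1,c_2}|r_2(s,c_1,c_2)|$ and $\esssup_{c_1}\|\vr_1(s,c_1)\|_2$ are bounded by a constant $K(\gamma,T)$ by Gronwall, integrating from $t$ to $t+\tau$ delivers the two remaining Lipschitz estimates.

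For the particle dynamics (\ref{equ:App,PD,3L}) the argument is structurally identical: the a-priori estimates of Lemma \ref{lemma:priori,3L} carry over essentially verbatim, since their proofs only use the ODE structure together with Assumption \ref{Assum:3L}, not whether one is in the mean-field limit or among finitely many particles. I do not expect a genuine technical obstacle here; the mildly tedious point is bookkeeping, so that the layer-by-layer application of Gronwall's lemma assembles into a single constant $K(\gamma,T)$ that depends only on $\gamma$ and $T$.
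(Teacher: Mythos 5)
Your proof is correct and follows essentially the same route as the paper: a-priori bounds on the gradient terms from Lemma \ref{lemma:priori,3L}, a Gronwall argument, and then integration over $[t,t+\tau]$. The only cosmetic difference is that you run Gronwall on the momentum variables $r_3, r_2, \vr_1$, whereas the paper runs it on the displacement from initialization $w(t)-w(0)$; since $\vr_1(t) = -\gamma(\vw_1(t)-\vw_1(0)) - \int_0^t \E_{\vz}\dw_1\,du$ under the zero-momentum initialization, the two formulations are equivalent.
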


\begin{proof}[Proof of Lemma \ref{lemma:lip MFODE,3L}] 
We do the proof for $\vw_1(t,c_1)$, and the same argument applies to $w_2(t,c_1,c_2), w_3(t,c_2)$.
First, we derive a bound on the increments up to time $t$, $\|\vw_1(t,c_1) - \vw_1(0,c_1)\|_2$. By the definition of the mean-field ODE (\ref{equ:App,MF-ODE,3L}), we have that
\begin{align*}
    \vw_1(t,c_1) - \vw_1(0,c_1) = -\gamma \int_0^t (\vw_1(s,c_1) - \vw_1(0,c_1)) \, ds - \int_0^t \int_0^s \E_{\vz} \dw_1(\vz,c_1;\mW(u)) \, du \,ds,
\end{align*}
which implies that
\begin{align*}
    \|\vw_1(t,c_1) - \vw_1(0,c_1)\|_2 & = \gamma \int_0^t \|\vw_1(s,c_1) - \vw_1(0,c_1)\|_2 \, ds +  \int_0^t \int_0^s \|\E_{\vz} \dw_1(\vz,c_1;\mW(u))\|_2 \, du \,ds \\
    & \leq \gamma \int_0^t \|\vw_1(s,c_1) - \vw_1(0,c_1)\|_2 \, ds +  T^2 K_1(\gamma,T),
\end{align*}
where in the last step we use that, for some constant $K_1(\gamma,T)$ depending only on $\gamma, T$, $\|\E_{\vz} \esssup_{c_1} \sup_{u \in [0,T]} \dw_1(\vz,c_1;\mW(u))\|_2 \leq K_1(\gamma,T)$ by Lemma \ref{lemma:priori,3L}. Thus, by Gronwall's lemma, we have that
\begin{align}\label{eq:prvbd}
     \sup_{t \in [0,T]}\|\vw_1(t,c_1) - \vw_1(0,c_1)\|_2 \leq e^{\gamma T}T^2 K_1(\gamma,T) := K_2(\gamma,T).
\end{align}
Now, by using again the definition of the mean-field ODE (\ref{equ:App,MF-ODE,3L}), we have that:
\begin{align*}
    \|\vw_1(t+\tau,c_1) - \vw_1(t,c_1)\|_2 &= \left\| - \gamma \int_{t}^{t+\tau} (\vw_1(s,c_1) - \vw_1(0,c_1)) \, ds  - \int_{t}^{t+\tau} \int_0^s \E_{\vz} \dw_1(\vz,c_1;\mW(u)) \, du \,ds \right\|_2 \\
    &\leq  \gamma \sup_{t \in [0,T]} \|\vw_1(t,c_1) - \vw_1(0,c_1)  \|_2 \tau + K_1(\gamma,T )T\tau\\
    & \leq K_2(\gamma, T) \tau+ K_1(\gamma,T )T\tau,
\end{align*}
where in the second line we use that $\|\E_{\vz} \esssup_{c_1} \sup_{u \in [0,T]} \dw_1(\vz,c_1;\mW(u))\|_2 \leq K_1(\gamma,T)$ by Lemma \ref{lemma:priori,3L}, and in the last passage we use (\ref{eq:prvbd}). By setting $K(\gamma, T)=K_2(\gamma, T) + K_1(\gamma,T )T$, we obtain the desired result and the proof is complete.
\end{proof}

By the above Lemma \ref{lemma:lip MFODE,3L} and Lemma \ref{lemma:priori,3L}, we immediately get the following corollary.
\begin{corollary}
\label{corollary:Lip,3L}
Under Assumptions \ref{Assump 3:1bound,cont}-\ref{Assump 3:3init momen}, there exists a universal constant $K(\gamma,T)$ depending only on $\gamma, T$ such that, for any $t,\tau>0$ with $t,t+\tau \leq T$, the following functions
\begin{align*}
    f(\vx;\mW(t)), \quad H_2(\vx,c_2;\mW(t)), \quad \E_{C_2} \left[ \dH_2(\vz,C_2;\mW(t) )w_2(t,c_1,C_2)\right]
\end{align*}
are $K(\gamma,T)$-Lipschitz continuous in $t$. The same holds for the particle dynamics, i.e., the functions
\begin{align*}
    f(\vx;\mW(t)),\quad  H_2(\vx,j_2;\mW(t)),\quad  \frac{1}{n_2} \sum_{j_2=1}^{n_2} \dH_2(\vz,j_2;\mW(t))  w_2(t,j_1,j_2)
\end{align*}
are $K(\gamma,T)$-Lipschitz continuous in $t$.
\end{corollary}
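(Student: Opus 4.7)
The plan is to establish Lipschitz continuity of each listed function by direct calculation from the definitions in (\ref{equ:forward,MF,3L})--(\ref{equ:backward,MF,3L}), combining the pointwise time-Lipschitz increments of the weights (Lemma \ref{lemma:lip MFODE,3L}) with the uniform boundedness of weights and activations (Lemma \ref{lemma:priori,3L} and Assumption \ref{Assum:3L}). The argument for each function is a routine add-and-subtract: split the increment into a sum of terms in which exactly one factor changes at a time, then bound each term by the product of an $L^\infty$ estimate and a Lipschitz estimate; this is why the corollary follows ``immediately''. The only thing that needs care is the bookkeeping of constants so that the final Lipschitz constant depends only on $\gamma$ and $T$ (and not on $n_1, n_2, D$).

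For $H_2(\vx, c_2; \mW(t))$, I would write
\begin{align*}
&H_2(\vx, c_2; \mW(t+\tau)) - H_2(\vx, c_2; \mW(t))\\
&\quad= \E_{C_1}\bigl[(w_2(t+\tau, C_1, c_2) - w_2(t, C_1, c_2))\,\sigma_1(H_1(\vx, C_1; \mW(t+\tau)))\bigr]\\
&\quad\quad + \E_{C_1}\bigl[w_2(t, C_1, c_2)\,(\sigma_1(H_1(\vx, C_1; \mW(t+\tau))) - \sigma_1(H_1(\vx, C_1; \mW(t))))\bigr].
\end{align*}
The first term is bounded using $\|\sigma_1\|_\infty \le K$ and $\esssup_{C_1, C_2} |w_2(t+\tau, C_1, C_2) - w_2(t, C_1, C_2)| \le K(\gamma, T)\tau$; the second uses $\|\sigma_1'\|_\infty \le K$, $\|\vx\|_2 \le K$, $\esssup_{C_1, C_2}|w_2(t,C_1,C_2)| \le K_{3,2}(\gamma,T)$, and $\esssup_{C_1}\|\vw_1(t+\tau, C_1) - \vw_1(t, C_1)\|_2 \le K(\gamma,T)\tau$. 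The sum is $O(\tau)$.

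For $f(\vx; \mW(t))$, the same split one layer up yields a $w_3$-increment term plus a $\sigma_2\circ H_2$-increment term: the former is $O(\tau)$ by Lemma \ref{lemma:lip MFODE,3L}, and the latter is $O(\tau)$ because the Lipschitz bound on $H_2$ in $t$ just established passes through the $K$-Lipschitz $\sigma_2$, with prefactor $|w_3(t,C_2)|$ bounded by $K_{3,3}(\gamma, T)$. For $\E_{C_2}[\dH_2(\vz, C_2; \mW(t))\,w_2(t, c_1, C_2)]$, I would substitute the definition
\[
\dH_2(\vz, C_2; \mW(t)) = \partial_2 R(y, f(\vx; \mW(t)))\,w_3(t, C_2)\,\sigma_2'(H_2(\vx, C_2; \mW(t)))
\]
and apply the add-and-subtract four times, one factor at a time: the $\partial_2 R$ factor is controlled using its $K$-Lipschitzness in the second argument (Assumption \ref{Assump 3:1bound,cont}) combined with the $f$-Lipschitzness just proved; the $\sigma_2'$ factor via $\|\sigma_2''\|_\infty \le K$ together with the $H_2$-Lipschitzness; the $w_3$ and $w_2$ factors directly via Lemma \ref{lemma:lip MFODE,3L}. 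Each residual prefactor is uniformly bounded by Lemma \ref{lemma:priori,3L}, so the total is $O(\tau)$.

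Finally, none of these manipulations use any property of the mean-field ODE beyond uniform boundedness and $O(\tau)$ time-increments of the weights. Since Lemma \ref{lemma:lip MFODE,3L} asserts the same increments for $\vwpd_1, \wpd_2, \wpd_3$, and the boundedness proofs in Lemma \ref{lemma:priori,3L} go through verbatim for the particle case upon replacing $\esssup_{C_1, C_2}$ with $\max_{j_1, j_2}$, the same arithmetic yields the particle-dynamics statement with the same constant. I do not anticipate a real obstacle here; the only mild subtlety is that the chained Lipschitz bound on the $\E_{C_2}[\dH_2\,w_2]$ quantity needs the Lipschitzness of $f$ and $H_2$ to have been proved \emph{first}, so the three bullets must be handled in the order $H_2$, then $f$, then the $\dH_2\,w_2$ product.
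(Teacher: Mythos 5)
Your proposal is correct and follows essentially the same route as the paper: the paper's proof handles $f(\vx;\mW(t))$ explicitly by invoking the Lipschitz-in-weights estimates of Lemma \ref{lemma:priori,3L} together with the $O(\tau)$ time-increments of Lemma \ref{lemma:lip MFODE,3L} and the uniform weight bounds, and states that the other cases are analogous. Your explicit add-and-subtract decompositions are just the unpacked form of those same Lipschitz estimates, and your observation about the order ($H_2$, then $f$, then $\E_{C_2}[\dH_2\, w_2]$) correctly reflects the chaining already built into Lemma \ref{lemma:priori,3L}.
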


\begin{proof}[Proof of Corollary \ref{corollary:Lip,3L}]
We do the proof for $ f(\vx;\mW(t))$, and the same argument applies to the other cases. By Lemma \ref{lemma:priori,3L}, we have that
\begin{align*}
    | f(\vx;\mW(t+\tau)) - f(\vx;\mW(t)) | &\leq K \esssup_{c_1,c_2} ( |w_3(t+\tau,c_2)| \cdot |w_2(t+\tau, c_1,c_2)| \cdot \|\vw_1(t+\tau,c_1) - \vw_1(t,c_1)\|_2 \\
    & \hspace{-4em}+ |w_3(t+\tau,c_2)|  \cdot |w_2(t+\tau, c_1,c_2) - w_2(t, c_1,c_2)| +  |w_3(t+\tau,c_2) - w_3(t,c_2)| ).
\end{align*}
By Lemma \ref{lemma:lip MFODE,3L}, we have that $$\max(\|\vw_1(t+\tau,c_1) - \vw_1(t,c_1)\|_2 ,  |w_2(t+\tau, c_1,c_2) - w_2(t, c_1,c_2)| , |w_3(t+\tau,c_2) - w_3(t,c_2)|)\leq K(\gamma,T) \tau.$$ Furthermore, by Lemma \ref{lemma:priori,3L}, we have that:
\begin{align*}
     \esssup_{c_2} |w_3(t+\tau,c_2)| \leq \sup_{t \in [0,T]} \esssup_{c_2} |w_3(t,c_2)| \leq K_{3,3}(\gamma,T),\\
     \esssup_{c_1,c_2} |w_2(t+\tau,c_1,c_2)| \leq \sup_{t \in [0,T]} \esssup_{c_1,c_2} |w_2(t,c_1,c_2)| \leq K_{3,2}(\gamma,T).
\end{align*}
Thus, we conclude 
\begin{align*}
    | f(\vx;\mW(t+\tau)) - f(\vx;\mW(t)) | \leq K(\gamma,T) \tau,
\end{align*}
which gives the desired result.
\end{proof}

Now we are ready to prove Proposition \ref{prop:MFODE-PD,3L}.

\begin{proof}[Proof of Proposition \ref{prop:MFODE-PD,3L}]
Let us recall that the quantity $\dw_3$ is defined in (\ref{equ:backward,MF,3L}). We start with computing the difference in the term $\dw_3$:
\begin{equation}\label{eq:bdDelta3}
\begin{split}
   | \E_{\vz} &\dw_3(\vz,C_2(j_2);\mW(t)) -\E_{\vz} \dw_3(\vz,j_2;\mwpd(t)) | \\
  \leq &  \E_{\vz} |\dw_3(\vz,C_2(j_2);\mW(t)) - \dw_3(\vz,j_2;\mwpd(t))| \\
  = & \E_{\vz} | \partial_2 R(y,f(\vx;\mW(t))) \sigma_2(H_2(\vx,C_2(j_2);\mW(t))) - \partial_2 R(y,f(\vx;\mwpd(t))) \sigma_2(H_2(\vx,j_2;\mwpd(t))) | \\
  \leq & K \E_{\vz}|f(\vx;\mW(t)) - f(\vx;\mwpd(t))| + K |H_2(\vx,C_2(j_2);\mW(t)) - H_2(\vx,j_2;\mwpd(t))|,
\end{split}    
\end{equation}
where in the last inequality we use the boundedness and Lipschitz continuity of $\partial_2 R$ and $\sigma_2$ obtained from Lemma \ref{lemma:priori,3L}.

Similarly, for $\dw_1, \dw_2$, we have that
\begin{equation}\label{eq:bdDelta2}
\begin{split}
   | \E_{\vz}& \dw_2(\vz, C_1(j_1), C_2(j_2);\mW(t)) -\E_{\vz} \dw_2(\vz,j_1,j_2;\mwpd(t)) | \\
  \leq &  \E_{\vz} |\dw_2(\vz, C_1(j_1),C_2(j_2);\mW(t)) - \dw_2(\vz,j_1,j_2;\mwpd(t))| \\
  \leq & \E_{\vz} K |w_3(t,C_2(j_2))| \cdot\left(|f(\vx;\mW(t)) - f(\vx;\mwpd(t))| + | H_2(\vx,C_2(j_2);\mW(t)) - H_2(\vx,j_2;\mwpd(t))| \right)  \\
  & + |w_3(t,C_2(j_2))|\cdot \|\vw_1(t,C_1(j_1)) - \vwpd_1(t,j_1)\|_2 + K |w_3(t,C_2(j_2)) - \wpd_3(t,j_2)| \\
  \leq & \E_{\vz} K_{3,3}(\gamma,T) \left(|f(\vx;\mW(t)) - f(\vx;\mwpd(t))| + | H_2(\vx,C_2(j_2);\mW(t)) - H_2(\vx,j_2;\mwpd(t))| \right)  \\
  & + K_{3,3}(\gamma,T) \|\vw_1(t,C_1(j_1)) - \vwpd_1(t,j_1)\|_2 + K |w_3(t,C_2(j_2)) - \wpd_3(t,j_2)|,
\end{split}    
\end{equation}
  and that
\begin{equation}\label{eq:bdDelta1}
\begin{split}
  & | \E_{\vz} \dw_1(\vz, C_1(j_1);\mW(t)) -\E_{\vz} \dw_1(\vz,j_1;\mwpd(t)) | \\
  \leq & K \cdot \E_{\vz} \left[ \left\lvert \E_{C_2} \left[ \dH_2(\vz,C_2;\mW(t) )w_2(t,C_1(j_1),C_2)\right] - \frac{1}{n_2} \sum_{j_2=1}^{n_2} \dH_2(\vz,j_2;\mwpd(t)) \wpd_2(t,j_1,j_2) \right\rvert \right]  \\
  & + K\cdot \E_{\vz} \left[|\E_{C_2}  \dH_2(\vz,C_2;\mW(t) )w_2(t,C_1(j_1),C_2)|\right]\cdot  \|\vw_1(t,C_1(j_1)) - \vwpd_1(t,j_1)\|_2 \\
  \leq & K \cdot \E_{\vz} \left[ \left\lvert \E_{C_2} \left[ \dH_2(\vz,C_2;\mW(t) )w_2(t,C_1(j_1),C_2)\right] - \frac{1}{n_2} \sum_{j_2=1}^{n_2} \dH_2(\vz,j_2;\mwpd(t)) \wpd_2(t,j_1,j_2) \right\rvert \right]  \\
  & + K(T, \gamma)\cdot  \|\vw_1(t,C_1(j_1)) - \vwpd_1(t,j_1)\|_2 .
\end{split}    
\end{equation}
Here, we remark that the expectation $\E_{C_2}[\dH_2(\cdot, C_2; \cdot) w_2(\cdot, \cdot, C_2)]$  for the mean-field ODE corresponds to the average $\frac{1}{n_2} \sum_{j_2=1}^{n_2}\dH_2(\cdot, j_2; \cdot) w_2(\cdot, \cdot, j_2)$ for the particle dynamics.


Now, our goal is to upper bound the following quantities:
\begin{align*}
    &|f(\vx;\mW(t)) - f(\vx;\mwpd(t))|, \\
    &| H_2(\vx,C_2(j_2);\mW(t)) - H_2(\vx,j_2;\mwpd(t))|, \\
    &\left\lvert \E_{C_2} \left[ \dH_2(\vz,C_2;\mW(t) )w_2(t,C_1(j_1),C_2)\right] - \frac{1}{n_2} \sum_{j_2=1}^{n_2} \dH_2(\vz,j_2;\mwpd(t)) \wpd_2(t,j_1,j_2) \right\rvert
\end{align*}
To do so, we follow \cite[Appendix C.2, Proof of Theorem 14, Claim 2]{pham2021global}. Then, we have that, for any $\delta_1,\delta_2,\delta_3>0$,
\begin{align*}
    \max \Bigg\{ &|f(\vx;\mW(t)) - f(\vx;\mwpd(t))|, \\
    & \max_{j_2} | H_2(\vx,C_2(j_2);\mW(t)) - H_2(\vx,j_2;\mwpd(t))|, \\
    & \max_{j_1} \bigg\lvert \E_{C_2} \left[ \dH_2(\vz,C_2;\mW(t) )w_2(t,C_1(j_1),C_2)\right] - \frac{1}{n_2} \sum_{j_2=1}^{n_2} \dH_2(\vz,j_2;\mwpd(t)) \wpd_2(t,j_1,j_2) \bigg\rvert\Bigg \} \\
    &\leq K_1(\gamma,T) \left( \dist_{T}(\mW,\mwpd) + \delta_1+ \delta_2+ \delta_3\right),
\end{align*}
 with probability at least
 \begin{align*}
     1- \left( \frac{n_2}{\delta_1} \exp\left\{ - \frac{n_1 \delta^2_1}{K_1(\gamma,T)}\right\} +\frac{1}{\delta_2} \exp\left\{ - \frac{n_2 \delta^2_2}{K_1(\gamma,T)}\right\} + \frac{n_1}{\delta_3} \exp\left\{ - \frac{n_2 \delta^2_3}{K_1(\gamma,T)}\right\} \right).
 \end{align*}
 
 By Corollary \ref{corollary:Lip,3L}, we know that $f(\vx;\mW(t)), H_2(\vx,c_2;\mW(t)), \E_{C_2} \left[ \dH_2(\vz,C_2;\mW(t) )w_2(t,c_1,C_2)\right]$ are $K(\gamma,T)$-Lipschitz continuous, and the corresponding quantities for the particle dynamics are also $K(\gamma,T)$-Lipschitz continuous. Thus, by taking a union bound on $t \in \{0, \eta, ..., \lfloor T/\eta \rfloor \}$, we have
\begin{align*}
    \max \sup_{t \in [0,T]}\Bigg\{ &|f(\vx;\mW(t)) - f(\vx;\mwpd(t))|, \\
    & \max_{j_2} | H_2(\vx,C_2(j_2);\mW(t)) - H_2(\vx,j_2;\mwpd(t))|, \\
    & \max_{j_1} \bigg\lvert \E_{C_2} \left[ \dH_2(\vz,C_2;\mW(t) )w_2(t,C_1(j_1),C_2)\right] - \frac{1}{n_2} \sum_{j_2=1}^{n_2} \dH_2(\vz,j_2;\mwpd(t)) \wpd_2(t,j_1,j_2) \bigg\rvert \Bigg\} \\
    &\leq K_2(\gamma,T) \left( \dist_{T}(\mW,\mwpd) + \delta_1+ \delta_2+ \delta_3 + \eta \right),
\end{align*}
%
with  probability at least
\begin{align*}
     1- \frac{T}{\eta}\left( \frac{n_2}{\delta_1} \exp\left\{ - \frac{n_1 \delta^2_1}{K_2(\gamma,T)}\right\} +\frac{1}{\delta_2} \exp\left\{ - \frac{n_2 \delta^2_2}{K_2(\gamma,T)}\right\} + \frac{n_1}{\delta_3} \exp\left\{ - \frac{n_2 \delta^2_3}{K_2(\gamma,T)}\right\} \right).
 \end{align*}
In particular, we pick
\begin{align*}
    \eta = \frac{1}{\sqrt{n_{\max}}}, \quad \delta_1 = \frac{K_3(\gamma,T)}{\sqrt{n_1}} \left(\sqrt{\log n_{\max} } + \delta \right), \quad \delta_2= \delta_3 = \frac{K_3(\gamma,T)}{\sqrt{n_2}} \left(\sqrt{\log n_{\max} } + \delta \right).
\end{align*}
Then,
\begin{equation}\label{eq:bdhp}
    \begin{split}
    \max \sup_{t \in [0,T]}\Bigg\{ &|f(\vx;\mW(t)) - f(\vx;\mwpd(t))|, \\
    & \max_{j_2} | H_2(\vx,C_2(j_2);\mW(t)) - H_2(\vx,j_2;\mwpd(t))|, \\
    & \max_{j_1} \bigg\lvert \E_{C_2} \left[ \dH_2(\vz,C_2;\mW(t) )w_2(t,C_1(j_1),C_2)\right] - \frac{1}{n_2} \sum_{j_2=1}^{n_2} \dH_2(\vz,j_2;\mwpd(t)) \wpd_2(t,j_1,j_2) \bigg\rvert \Bigg\} \\
    &\leq K_4(\gamma,T) \left( \dist_{T}(\mW,\mwpd) + \frac{K_4(\gamma,T)}{\sqrt{n_{\min}}} \left(\sqrt{\log n_{\max}} + \delta \right) \right)
    \end{split}
\end{equation}
with probability at least $1- \exp(-\delta^2)$.

Next, we combine (\ref{eq:bdhp}) with (\ref{eq:bdDelta3}), (\ref{eq:bdDelta2}) and (\ref{eq:bdDelta1}) to provide high-probability bounds on $\dw_3, \dw_2, \dw_1$. By recalling the definition of the mean-field ODE (\ref{equ:App,MF-ODE,3L}) and the analogous definition of the particle dynamics (\ref{equ:App,PD,3L}), we finally obtain that, for all $t \leq T$, with probability at least $1- \exp(-\delta^2)$,
\begin{align*}
    \dist_t(\mW,\mwpd) \leq  \frac{K(\gamma,T)}{\sqrt{n_{\min}}} \left(\sqrt{\log n_{\max} T} + \delta \right) + \gamma\int_{0}^t \dist_s(\mW,\mwpd) \, ds + \int_{0}^t \int_{0}^s \dist_u(\mW,\mwpd) \, du \, ds.
\end{align*}
An application of Corollary \ref{corollary: simplified Pachpatte} gives the desired result (\ref{eq:prop:MFODE-PD}) and concludes the proof. 
\end{proof}

\subsection{Bound between particle dynamics and heavy ball dynamics}

In this part we bound the difference between the particle dynamics defined in (\ref{equ:App,PD,3L}) and the heavy ball dynamics defined in (\ref{equ:App,HB,3L}). We recall that the distance we aim to bound is defined in (\ref{equ:dist,PD-HB,3L}).




\begin{proposition}
\label{prop:PD-HB,3L}
Under Assumptions \ref{Assump 3:1bound,cont}-\ref{Assump 3:3init momen}, there exists a universal constant $K(\gamma,T)$ depending only on $\gamma, T$ such that
\begin{align}
    \dist_{T}(\mwpd,\mwhb) \leq K(\gamma,T) \varepsilon.
\end{align}
\end{proposition}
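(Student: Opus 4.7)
The plan is to mirror the strategy used for two-layer networks in the proof of Proposition \ref{prop:PD-HB,2L}, applying it simultaneously to each of the three weight families $\vwpd_1, \wpd_2, \wpd_3$. The heavy ball dynamics \eqref{equ:App,HB,3L} is nothing but the second-order Euler discretization of the particle ODE \eqref{equ:App,PD,3L}, so the error will be of order $\varepsilon$ after accumulating a local $O(\varepsilon^3)$ Taylor error over $T/\varepsilon$ steps with coefficients of order $\varepsilon$.

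First, I would Taylor expand each component of the particle dynamics at $t=k\varepsilon$. For instance, writing $\vrpd_1(t,j_1) := \partial_t \vwpd_1(t,j_1)$ and using the second equation in \eqref{equ:App,PD,3L}, one has $\partial_t \vrpd_1(t,j_1) = -\gamma\,\vrpd_1(t,j_1) - \E_{\vz}\dw_1(\vz, j_1; \mwpd(t))$. The a-priori bounds in Lemma \ref{lemma:priori,3L} together with Lemma \ref{lemma:lip MFODE,3L} (for the particle dynamics) give uniform control on $\vrpd_1, \partial_t \vrpd_1$ and, by differentiating once more using smoothness of $\sigma_1, \sigma_2$ in \ref{Assump 3:1bound,cont}, on $\partial_t^2 \vrpd_1$. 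Combining two second-order expansions exactly as in \eqref{equ:PD-HB,2L,1}--\eqref{equ:PD-HB,2L,2} yields
\begin{equation*}
\vwpd_1((k+1)\varepsilon, j_1) = \vwpd_1(k\varepsilon,j_1) + (1-\gamma\varepsilon)(\vwpd_1(k\varepsilon,j_1) - \vwpd_1((k-1)\varepsilon,j_1)) - \varepsilon^2\,\E_{\vz}\dw_1(\vz,j_1;\mwpd(k\varepsilon)) + O(\varepsilon^3),
\end{equation*}
and the analogous identities for $\wpd_2$ and $\wpd_3$. Unrolling these three-term recursions as in the two-layer proof, one obtains
\begin{equation*}
\vwpd_1(k\varepsilon,j_1) = \vwpd_1(0,j_1) - \sum_{l=0}^{k-1} c_l^{(k)}\bigl(\E_{\vz}\dw_1(\vz,j_1;\mwpd(l\varepsilon)) + O(\varepsilon)\bigr),
\end{equation*}
with $c_l^{(k)} = \varepsilon^2 \sum_{i=0}^{k-1-l}(1-\gamma\varepsilon)^i \le \varepsilon/\gamma$, and similarly for the other two families. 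The heavy ball dynamics \eqref{equ:App,HB,3L} admits the same expression without the $O(\varepsilon)$ term.

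Subtracting, the aggregated Taylor remainder contributes at most $\sum_{l=0}^{k-1} c_l^{(k)} \cdot O(\varepsilon) \le (T/\varepsilon)(\varepsilon/\gamma) \cdot O(\varepsilon) = O(\varepsilon)$. The remaining differences $\|\E_{\vz}\dw_i(\vz,\cdot;\mwpd(l\varepsilon)) - \E_{\vz}\dw_i(\vz,\cdot;\mwhb(l\varepsilon))\|$ for $i\in\{1,2,3\}$ are controlled by the Lipschitz estimates of Lemma \ref{lemma:priori,3L}: these bounds couple across layers (e.g., a perturbation of $\vwpd_1$ enters the estimate for $\dw_2$ through $H_1$, and into $\dw_3$ via $H_2$ and $f$), but all the multiplicative weight factors appearing there are uniformly bounded on $[0,T]$ by $K_{3,2}(\gamma,T)$ and $K_{3,3}(\gamma,T)$. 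Consequently each gradient difference is bounded by $K(\gamma,T)\,\dist_{l\varepsilon}(\mwpd,\mwhb)$ (where the $\esssup$ in Lemma \ref{lemma:priori,3L} becomes a $\max$ over $j_1,j_2$ for the particle/HB dynamics, which is exactly the quantity in \eqref{equ:dist,PD-HB,3L}). Taking the maximum over $j_1, j_2$ of the three resulting inequalities gives
\begin{equation*}
\dist_{k\varepsilon}(\mwpd,\mwhb) \le K_1(\gamma,T)\,\varepsilon + \frac{K_2(\gamma,T)}{\gamma}\,\varepsilon \sum_{l=0}^{k-1} \dist_{l\varepsilon}(\mwpd,\mwhb),
\end{equation*}
after which the discrete Gronwall lemma closes the argument.

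The main technical point I expect to require care is the justification of the $O(\varepsilon^3)$ Taylor remainder uniformly in $j_1, j_2$: it requires bounding $\partial_t^2 \vrpd_1$, $\partial_t^2 \wpd_2$, $\partial_t^2 \wpd_3$ uniformly on $[0,T]$, which forces us to differentiate $\E_\vz\dw_i$ with respect to $t$. This in turn involves the time derivatives of $H_1, H_2, f$ and of $\dH_1, \dH_2$, and these cascade across layers. However, all ingredients are polynomial expressions of weights, activations and their first two derivatives, and all such quantities are bounded by constants depending only on $\gamma, T$ via Lemma \ref{lemma:priori,3L} and Lemma \ref{lemma:lip MFODE,3L}, so the remainder is indeed uniformly $O(\varepsilon^3)$ and the proof proceeds as above.
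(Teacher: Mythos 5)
Your proposal is correct and follows essentially the same route as the paper: a second-order Taylor expansion of the particle dynamics at the grid points $k\varepsilon$, unrolling the resulting three-term recursion into a sum with coefficients $c_l^{(k)}\le\varepsilon/\gamma$ plus an aggregated $O(\varepsilon)$ remainder, comparison with the identically unrolled heavy ball iteration via the Lipschitz estimates of Lemma \ref{lemma:priori,3L}, and the discrete Gronwall lemma. The only step you gloss over is that the metric $\dist_{T}(\mwpd,\mwhb)$ in (\ref{equ:dist,PD-HB,3L}) takes a supremum over all $t\in[0,T]$ rather than over the grid $\{k\varepsilon\}$; the paper closes this gap by invoking the $K(\gamma,T)$-Lipschitz continuity in time of the particle dynamics (Lemma \ref{lemma:lip MFODE,3L}) to pay an additional $K(\gamma,T)\varepsilon$ when passing from $\wpd(\lfloor t/\varepsilon\rfloor\varepsilon,\cdot)$ to $\wpd(t,\cdot)$.
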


\begin{proof}[Proof of Proposition \ref{prop:PD-HB,3L}]
Note that the heavy ball dynamics is just a discretization of the particle dynamics, so we first bound the difference at each time point $k \varepsilon$.
By a second order Taylor expansion, we have the following approximation for the particle dynamics. We do the computation for $w_3$ as a representative, and the proofs for $\vw_1,w_2$ are the same. 

We have that
\begin{align}
\wpd_3((k+1) \varepsilon,j_2) &= \wpd_3(k \varepsilon ,j_2) + \partial_t \wpd_3(k \varepsilon ,j_2) \varepsilon + \frac{1}{2}\partial^2_t \wpd_3(k\varepsilon,j_2) \varepsilon^2 + O(\varepsilon^3). \label{equ:taylor exp,k+1,3L}
\end{align}

Also by Taylor expansion, we have that
\begin{align}
    \partial_t \wpd_3(k \varepsilon ,j_2) \varepsilon = \wpd_3(k \varepsilon ,j_2) - \wpd_3((k-1) \varepsilon ,j_2)  + \frac{1}{2}\partial^2_t \wpd_3(k\varepsilon,j_2) \varepsilon^2 + O(\varepsilon^3).\label{equ:taylor exp,k-1,3L}
\end{align}

By plugging (\ref{equ:taylor exp,k-1,3L}) into (\ref{equ:taylor exp,k+1,3L}), we obtain 
\begin{align*}
   & \wpd_3((k+1) \varepsilon,j_2) = \wpd_3(k \varepsilon ,j_2) + \wpd_3(k \varepsilon ,j_2) - \wpd_3((k-1) \varepsilon ,j_2)  + \partial^2_t \wpd_3k \varepsilon,j_2) \varepsilon^2 + O(\varepsilon^3) \\
    & = \wpd_3(k \varepsilon ,j_2) + \wpd_3(k \varepsilon ,j_2) - \wpd_3((k-1) \varepsilon ,j_2)  + (-\gamma \partial_t \wpd_3(\varepsilon,j_2) - \E_{\vz} \dw_3(\vz,j_2;\mwpd(k\varepsilon))) \varepsilon^2 + O(\varepsilon^3) \\
    & = \wpd_3(k \varepsilon ,j_2) + (1-\gamma \varepsilon) (\wpd_3(k \varepsilon ,j_2) - \wpd_3((k-1) \varepsilon ,j_2) ) - \E_{\vz}\dw_3(\vz,j_2;\mwpd(k\varepsilon)) \varepsilon^2 + O(\varepsilon^3) ,
\end{align*}
where in the last step we use again (\ref{equ:taylor exp,k-1,3L}).

By unrolling the recursion, we can write the particle dynamics in the following form:
\begin{align*}
    \wpd_3(k \varepsilon ,j_2) = \wpd_3(0,j_2) -   \sum_{l = 0}^{k-1} c_l^{(k)} \E_{\vz} \dw_3(\vz,j_2;\mwpd(l\varepsilon)) + O(\varepsilon),
\end{align*}
where $$c_l^{(k)} = \varepsilon^2 \sum_{i = 0}^{k-1-l} (1-\gamma \varepsilon)^i=  \frac{1 - (1-\gamma \varepsilon)^{k-l}}{\gamma \varepsilon} \varepsilon^2\leq \frac{\varepsilon}{\gamma}.$$

Similarly for $w_2, \vw_1$, we have:
\begin{align*}
    \wpd_2(k \varepsilon,j_1, j_2) = \wpd_2(0,j_1, j_2) -   \sum_{l = 0}^{k-1} c_l^{(k)} \E_{\vz}\dw_2(\vz,j_1, j_2;\mwpd(l\varepsilon) )+ O(\varepsilon), \\
    \vwpd_1(k \varepsilon ,j_1) = \vwpd_1(0,j_1) -   \sum_{l = 0}^{k-1} c_l^{(k)} \E_{\vz}\dw_1(\vz,j_1;\mwpd(l\varepsilon)) + O(\varepsilon).
\end{align*}
We can write analogous expressions for the heavy ball dynamics:
\begin{equation}\label{eq:HBan}
    \begin{split}
    \whb_3(k  ,j_2) = \whb_3(0,j_2) -   \sum_{l = 0}^{k-1} c_l^{(k)} \E_{\vz}\dw_3(\vz,j_2;\mwhb(l)) , \\
     \whb_2(k ,j_1, j_2) = \whb_2(0,j_1, j_2) -   \sum_{l = 0}^{k-1} c_l^{(k)} \E_{\vz}\dw_2(\vz,j_1, j_2;\mwhb(l)), \\
    \vwhb_1(k  ,j_1) = \vwhb_1(0,j_1) -   \sum_{l = 0}^{k-1} c_l^{(k)} \E_{\vz}\dw_1(\vz,j_1;\mwhb(l)) .
    \end{split}
\end{equation}


Let us define the following notation, for $k \in \left\{1 ,..., \lfloor \frac{T}{\varepsilon} \rfloor \right\} $, 
\begin{align*}
    \dist_{T}(\mwpd,\mwhb;k) = \max \{ & \|\vwhb_1(k,j_1) - \vwpd_1(k\varepsilon,j_1)\|_2 ,\\
    &|\whb_2(k,j_1,j_2) - \wpd_2(k\varepsilon,j_1,j_2)|, \\ 
    & |\whb_3(k,j_2) - \wpd_3(k\varepsilon,j_2)|: j_1 \in [n_1], j_2 \in [n_2]\} \} .
\end{align*}

Recall that, by construction, $\wpd_3(0,j_2)=\whb_3(0,j_2)$, $\wpd_2(0,j_1, j_2)=\whb_2(0,j_1, j_2)$ and $\vwpd_1(0,j_1)=\vwhb_1(0,j_1)$ for all $j_1, j_2$. Thus, by computing the difference between $\vwpd_1,\wpd_2,\wpd_3$ and $\vwhb_1, \whb_2, \whb_3$, we have that $\dist_{T}(\mwpd,\mwhb;k)$ satisfies the following induction inequality: 
\begin{align}
    \dist_{T}(\mwpd,\mwhb;k) \leq \sum_{l = 0}^{k-1} c_l^{(k)} K_1(\gamma,T) \dist_{T}(\mwpd,\mwhb;l) + O(\varepsilon),
\end{align}
where we have used the Lipschitz continuity of $\dw_3, \dw_2$ and $\dw_1$ obtained via Lemma \ref{lemma:priori,3L}. 
Thus, by the discrete Gronwall's lemma, we obtain that, for any $k \in \left\{1 ,..., \lfloor \frac{T}{\varepsilon} \rfloor \right\} $, 
\begin{align*}
    \dist_{T}(\mwpd,\mwhb;k) \leq 
    K_2(\gamma,T) \varepsilon
\end{align*}
Finally, an application of Lemma \ref{lemma:lip MFODE,3L} gives that $\vwpd_1, \wpd_2, \wpd_3$ are $K_3(\gamma,T)$-Lipschitz continuous in time. Thus, for any $t \leq T$,
\begin{align*}
    |\wpd_3(t,j_2) -  \whb_3(\lfloor t/\varepsilon \rfloor, j_2)| &\leq |\wpd_3(t,j_2) - \wpd_3(\lfloor t/\varepsilon \rfloor\varepsilon, j_2)| + |\wpd_3(\lfloor t/\varepsilon \rfloor\varepsilon, j_2) -    \whb_3(\lfloor t/\varepsilon \rfloor, j_2)|\\
    &\leq |\wpd_3(\lfloor t/\varepsilon \rfloor\varepsilon, j_2) -    \whb_3(\lfloor t/\varepsilon \rfloor, j_2)| + K_3(\gamma,T) \varepsilon.
\end{align*}
Similar results hold also for $    |\wpd_2(t,j_1, j_2) -  \whb_2(\lfloor t/\varepsilon \rfloor, j_1, j_2)|$ and $    |\wpd_1(t,j_1) -  \whb_1(\lfloor t/\varepsilon \rfloor, j_1)|$. As a result, we conclude that
\begin{align*}
    \dist_{T}(\mwpd,\mwhb) \le \max_{k \in \left\{1 ,..., \lfloor \frac{T}{\varepsilon} \rfloor \right\} }  \dist_{T}(\mwpd,\mwhb;k) + K_3(\gamma,T) \varepsilon \leq  K(\gamma,T) \varepsilon,
\end{align*}
which gives the desired result.
\end{proof}

\subsection{Bound between heavy ball dynamics and stochastic heavy ball dynamics}

In this part we bound the difference between the heavy ball dynamics defined in (\ref{equ:App,HB,3L}) and the stochastic heavy ball dynamics defined in (\ref{equ:App,SHB,3L}). We recall that the distance we aim to bound is defined in (\ref{equ:dist,HB-SHB,3L}).






\begin{proposition}
\label{prop:HB-SHB,3L}
Under Assumptions \ref{Assump 3:1bound,cont}-\ref{Assump 3:3init momen}, we have that, with probability at least $1-\exp(-\delta^2)$,
\begin{align}\label{eq:p5}
     \dist_{T,\epsilon}(\mwhb,\mwshb) \leq K(\gamma,T) \sqrt{ \varepsilon}(\sqrt{D+ \log(n_1 n_2)} + \delta),
\end{align}
where $K(\gamma,T)$ is a universal constant depending only on $\gamma, T$.
\end{proposition}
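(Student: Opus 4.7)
The plan is to adapt the strategy used in Proposition \ref{prop:HB-SHB,2l} for the two-layer case, but tracking three layers of weights simultaneously. First I would unroll the HB and SHB recursions exactly as done in the proof of Proposition \ref{prop:PD-HB,3L}: since both dynamics share the same initialization and have zero initial momentum, for $i \in \{1,2,3\}$ one gets
\begin{equation*}
\whb_i(k,\cdot) = \whb_i(0,\cdot) - \sum_{l=0}^{k-1} c_l^{(k)} \, \E_{\vz}\dw_i(\vz,\cdot;\mwhb(l)), \qquad \wshb_i(k,\cdot) = \wshb_i(0,\cdot) - \sum_{l=0}^{k-1} c_l^{(k)} \, \dw_i(\vz(l),\cdot;\mwshb(l)),
\end{equation*}
where $c_l^{(k)} = \varepsilon^2 \sum_{i=0}^{k-1-l}(1-\gamma\varepsilon)^i \le \varepsilon/\gamma$. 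The difference $\wshb_i(k,\cdot) - \whb_i(k,\cdot)$ is then a weighted sum, with weights $c_l^{(k)}$, of the increments $\dw_i(\vz(l),\cdot;\mwshb(l)) - \E_{\vz}\dw_i(\vz,\cdot;\mwhb(l))$.

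Next, for each $i$ and each neuron index I would split this increment via the triangle inequality into a \emph{Lipschitz part} $\E_{\vz}\dw_i(\vz,\cdot;\mwshb(l)) - \E_{\vz}\dw_i(\vz,\cdot;\mwhb(l))$ and a \emph{martingale part} $\dw_i(\vz(l),\cdot;\mwshb(l)) - \E_{\vz}\dw_i(\vz,\cdot;\mwshb(l))$. The Lipschitz part is controlled using the smoothness statements in Lemma \ref{lemma:priori,3L}: cascading the bounds for $\dw_3$, $\dH_2$, $\dw_2$, $\dH_1$, $\dw_1$ and using the boundedness of $w_2, w_3$ along the evolution, one obtains an estimate of the form $K(\gamma,T)\,\dist_{l\varepsilon,\varepsilon}(\mwhb,\mwshb)$ for all three layers simultaneously. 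The martingale part is, by construction of the one-pass SHB sampling, a bounded martingale difference with respect to the filtration generated by $\vz(0),\dots,\vz(l-1)$, with per-step bound $O(1)$ by Lemma \ref{lemma:priori,3L}; applying the vector-valued Azuma-Hoeffding (Lemma \ref{lemma: azuma-hoeffding}) to the weighted sum yields a deviation of order $\bigl(\sum_l (c_l^{(k)})^2\bigr)^{1/2}(\sqrt{D}+\delta') \lesssim \sqrt{\varepsilon}(\sqrt{D}+\delta')$ with probability $\ge 1-\exp(-(\delta')^2)$, the $\sqrt{D}$ appearing only for the first-layer increments because of the factor $\vx \in \R^D$.

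A union bound over the $n_1 n_2$ neuron indices (combining first, second, and third layer) replaces $\delta'$ by $\sqrt{\log(n_1 n_2)} + \delta$ and yields, with probability at least $1-\exp(-\delta^2)$,
\begin{equation*}
\dist_{T,\varepsilon}(\mwhb,\mwshb) \le K_1(\gamma,T)\sqrt{\varepsilon}\bigl(\sqrt{D+\log(n_1 n_2)}+\delta\bigr) + \frac{K_1(\gamma,T)}{\gamma}\,\varepsilon \sum_{l=0}^{\lfloor T/\varepsilon\rfloor} \dist_{l\varepsilon,\varepsilon}(\mwhb,\mwshb),
\end{equation*}
and the discrete Gronwall lemma closes the estimate to give (\ref{eq:p5}).

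The main obstacle is the Lipschitz step for $\dw_1$, because the bound in Lemma \ref{lemma:priori,3L} for $\|\dw_1 - \dw_1\|_2$ contains the term $|\E_{C_2}[\dH_2\, w_2]|$ and, after replacing mean-field expectations over $C_2$ by empirical averages over $j_2 \in [n_2]$, one must ensure that all arising ``cross-layer'' discrepancies are dominated by the single joint distance $\dist_{T,\varepsilon}(\mwhb,\mwshb)$ rather than producing extra factors of $n_1$ or $n_2$. This requires careful use of the boundedness constants $K_{3,2}(\gamma,T)$ and $K_{3,3}(\gamma,T)$ from Lemma \ref{lemma:priori,3L} for the HB dynamics (which one first verifies are inherited from the mean-field bounds along the trajectory, analogously to the a-priori estimates of Appendix \ref{app:priori3}) so that all Lipschitz constants depend only on $\gamma$ and $T$ and not on the widths.
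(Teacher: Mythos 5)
Your proposal follows essentially the same route as the paper's proof: unroll both recursions with the coefficients $c_l^{(k)}\le\varepsilon/\gamma$, split each increment into a Lipschitz part controlled via Lemma \ref{lemma:priori,3L} (giving $K(\gamma,T)\frac{\varepsilon}{\gamma}\sum_l \dist_{l\varepsilon,\varepsilon}(\mwhb,\mwshb)$) and a martingale part controlled via Azuma--Hoeffding, take a union bound over the $n_1 n_2$ neuron indices, and close with the discrete Gronwall lemma. The only cosmetic difference is that the paper establishes the required a-priori boundedness of the SHB iterates (Lemma \ref{lemma:bounded,SHB,3L}) directly from the unrolled recursion rather than by transfer from the mean-field trajectory, which does not change the argument.
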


Before proving Proposition \ref{prop:HB-SHB,3L}, we state and prove a result concerning the boundedness of the SHB dynamics. 

\begin{lemma}[Boundedness of the SHB dynamics]
\label{lemma:bounded,SHB,3L}
Under Assumptions \ref{Assump 3:1bound,cont}-\ref{Assump 3:3init momen}, we have that, for any $k \in \lfloor \frac{T}{\varepsilon} \rfloor$,
\begin{align*}
    &|\wshb_3(k,j_2)| \leq K \left(1+\frac{1}{\gamma}\right) T,\\
    &|\wshb_2(k,j_1,j_2)| \leq  K \left(1+\frac{1}{\gamma}\right) \left(1+\frac{T^2}{\gamma}\right),
\end{align*}
where $K$ is a universal constant.
\end{lemma}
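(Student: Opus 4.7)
The plan is to unroll the second-order SHB recursion into a first-order form weighted by geometrically decaying coefficients, exactly as was done in the proof of Proposition \ref{prop:PD-HB,3L} when passing from the heavy ball dynamics to (\ref{eq:HBan}). Concretely, setting $\Delta_k = \wshb_3(k+1,j_2) - \wshb_3(k,j_2)$, the recursion (\ref{equ:App,SHB,3L}) gives $\Delta_k = (1-\gamma\varepsilon)\Delta_{k-1} - \varepsilon^2 \dw_3(\vz(k), j_2; \mwshb(k))$, and Assumption \ref{Assump 3:3init momen} (zero initial momentum, so $\Delta_{-1}=0$) lets us iterate cleanly. Summing and swapping the order of summation yields
\begin{align*}
\wshb_3(k,j_2) = \wshb_3(0,j_2) - \sum_{l=0}^{k-1} c_l^{(k)}\, \dw_3(\vz(l), j_2; \mwshb(l)), \qquad c_l^{(k)} := \varepsilon \cdot \frac{1 - (1-\gamma\varepsilon)^{k-l}}{\gamma} \leq \frac{\varepsilon}{\gamma},
\end{align*}
with entirely analogous expressions for $\wshb_2(k,j_1,j_2)$ and $\vwshb_1(k,j_1)$.

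The bound on $\wshb_3$ then follows immediately from Lemma \ref{lemma:priori,3L}, which gives $|\dw_3(\vz,c_2;\mW)| \leq K$ \emph{uniformly} in the weights (this is the key point: unlike $\dw_2$ or $\dw_1$, the bound on $\dw_3$ does not require any a-priori control on the magnitude of the parameters). Combining this with $|\wshb_3(0,j_2)| \leq K$ from \ref{Assump 3:2init} and $\sum_{l=0}^{k-1} c_l^{(k)} \leq k\varepsilon/\gamma \leq T/\gamma$, we obtain $|\wshb_3(k,j_2)| \leq K + KT/\gamma$, which is upper bounded by $K(1+1/\gamma)T$ (absorbing constants into $K$ as needed).

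For $\wshb_2$, I would apply the same unrolling and then use the inspection of $\dw_2$ in the SHB case: $\dw_2(\vz, j_1, j_2; \mwshb) = \dH_2(\vz, j_2; \mwshb)\, \sigma_1(H_1(\vx, j_1; \mwshb))$, with $|\dH_2(\vz, j_2; \mwshb)| \leq K |\wshb_3(k,j_2)|$ and $|\sigma_1|\leq K$ by \ref{Assump 3:1bound,cont}. Plugging in the bound just obtained for $\wshb_3$ gives $|\dw_2(\vz(l), j_1, j_2; \mwshb(l))| \leq K^2(1+1/\gamma)T$, and therefore
\begin{align*}
|\wshb_2(k,j_1,j_2)| \leq K + \frac{T}{\gamma} \cdot K^2\left(1+\frac{1}{\gamma}\right)T \leq K\left(1+\frac{1}{\gamma}\right)\left(1+\frac{T^2}{\gamma}\right),
\end{align*}
again after enlarging the universal constant $K$ as needed.

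This argument is essentially mechanical and I do not anticipate any real obstacle; the only mildly delicate point is handling the boundary term from the momentum correctly, which is why we rely on Assumption \ref{Assump 3:3init momen} to set $\Delta_{-1}=0$ at initialization. The structure mirrors the bound on $K_{3,3}(\gamma,T)$ and $K_{3,2}(\gamma,T)$ in Lemma \ref{lemma:priori,3L} for the mean-field ODE, except that Gronwall is replaced by a direct discrete summation since the coefficients $c_l^{(k)}$ admit an explicit uniform upper bound $\varepsilon/\gamma$.
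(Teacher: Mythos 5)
Your proof is correct and follows essentially the same route as the paper: unroll the second-order recursion (using $\vr(0)=\mathbf{0}$) into the weighted first-order form with coefficients $c_l^{(k)}\le \varepsilon/\gamma$, bound $|\dw_3|\le K$ uniformly via Lemma \ref{lemma:priori,3L} to control $\wshb_3$, and then feed that bound into $|\dw_2|\le K|\wshb_3|$ to control $\wshb_2$. The paper's proof is identical in structure and constants, so there is nothing to add.
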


\begin{proof}
By following passages analogous to those leading to (\ref{eq:HBan}), we have that the SHB dynamics can be written as
\begin{equation}\label{eq:SHBan}
    \begin{split}
    \wshb_3(k  ,j_2) = \wshb_3(0,j_2) -   \sum_{l = 0}^{k-1} c_l^{(k)} \dw_3(\vz(l),j_2;\mwshb(l)),  \\
     \wshb_2(k ,j_1, j_2) = \wshb_2(0,j_1,j_2) -   \sum_{l = 0}^{k-1} c_l^{(k)} \dw_2(\vz(l),j_1,j_2;\mwshb(l)).
    \end{split}
\end{equation}
Recall that
\begin{align*}
    \dw_3(\vz(l),j_2;\mwshb(l)) = \partial_2 R(y(l), f(\vx(l);\mwshb(l)) ) \cdot \sigma_2(H_2(\vx(l),j_2;\mwshb(l))),
\end{align*}
which implies that
\begin{align*}
    |\dw_3(\vz(l),j_2;\mwshb(l))| =| \partial_2 R(y(l), f(\vx(l);\mwshb(l)) ) \cdot \sigma_2(H_2(\vx(l),j_2;\mwshb(l)))| \leq K.
\end{align*}
Thus, we have
\begin{align*}
     |\wshb_3(k ,j_2)| &\leq  |\wshb_3(0,j_2)|  +  \sum_{l = 0}^{k-1} c_l^{(k)} K\leq K + \frac{k \varepsilon}{\gamma} K \leq  K \left(1+\frac{1}{\gamma}\right) T,
\end{align*}
where in the last step we use that $k \varepsilon  \leq T$.

For $|\wshb_2(k,j_1,j_2)| $, we recall that
\begin{align*}
    |\dw_2(\vz(l)&,j_1,j_2;\mwshb(l))| \\
     = &| \partial_2 R(y(l), f(\vx(l);\mwshb(l)) ) \cdot \wshb_3(l,j_2) \cdot \sigma'_2(H_2(\vx(l),j_2;\mwshb(l))) \sigma_1((\vwshb_1(l,j_1))^T \vx(l))| \\
     \leq & K |\wshb_3(l,j_2)|.
\end{align*}
Thus, we have
\begin{align*}
    |\wshb_2(k,j_1,j_2)|& \leq |\wshb_2(0,j_1,j_2)| + \sum_{l = 0}^{k-1} c_l^{(k)}  |\wshb_3(l,j_2)| \\
    &\leq K + K \left(1+\frac{1}{\gamma}\right) T \frac{k\varepsilon}{\gamma} \\
    &\leq   K \left(1+\frac{1}{\gamma}\right) \left(1+\frac{T^2}{\gamma}\right),
\end{align*}
which gives the desired result.
\end{proof}

\begin{proof}[Proof of Proposition \ref{prop:HB-SHB,3L}]
Throughout this argument, we fix $\varepsilon$ and consider $k \in \lfloor \frac{T}{\varepsilon} \rfloor$. Recall that the HB and SHB dynamics can be written as in (\ref{eq:HBan}) and (\ref{eq:SHBan}), respectively. Furthermore,
\begin{equation}\label{eq:SHBan2}
    \begin{split}
    \vwshb_1(k  ,j_1) = \vwshb_1(0,j_1) -   \sum_{l = 0}^{k-1} c_l^{(k)} \dw_1(\vz(l),j_1;\mwshb(l)). 
    \end{split}
\end{equation}
Recall that, by construction, $\whb_3(0,j_2)=\wshb_3(0,j_2)$, $\whb_2(0,j_1, j_2)=\wshb_2(0,j_1, j_2)$ and $\vwhb_1(0,j_1)=\vwshb_1(0,j_1)$ for all $j_1, j_2$. Thus, by computing the difference between the expressions in (\ref{eq:HBan}) and (\ref{eq:SHBan})-(\ref{eq:SHBan2}), we have 
\begin{align}
    | \whb_3(k,j_2) - \wshb_3(k,j_2)| 
    =& \left \lvert \sum_{l = 0}^{k-1} c_l^{(k)} \left( \E_{\vz}[\dw_3(\vz,j_2;\mwhb(l))] - \dw_3(\vz(l),j_2;\mwshb(l)) \right)\right \rvert\notag \\
    \leq& \left \lvert \sum_{l = 0}^{k-1} c_l^{(k)} \left( \E_{\vz}[\dw_3(\vz,j_2;\mwhb(l))] - \E_{\vz}[\dw_3(\vz,j_2;\mwshb(l))] \right)\right \rvert  \label{equ:HB-SHB,3L,w3,1}\\
    & + \left \lvert \sum_{l = 0}^{k-1} c_l^{(k)} \left( \E_{\vz}[\dw_3(\vz,j_2;\mwshb(l))] - \dw_3(\vz(l),j_2;\mwshb(l)) \right)\right \rvert, \label{equ:HB-SHB,3L,w3,2}
    \end{align}
    \begin{align}
    |\whb_2(k ,j_1, j_2) - \wshb_2(k ,j_1, j_2)| 
    =&  \left \lvert \sum_{l = 0}^{k-1} c_l^{(k)} \left(\E_{\vz}[\dw_2(\vz,j_1,j_2;\mwhb(l))]  - \dw_2(\vz(l),j_1,j_2;\mwshb(l)) \right)\right \rvert\notag \\
    \leq & \left \lvert \sum_{l = 0}^{k-1} c_l^{(k)} \left(\E_{\vz}[\dw_2(\vz,j_1,j_2;\mwhb(l))]  - \E_{\vz} [\dw_2(\vz,j_1,j_2;\mwshb(l))] \right)\right \rvert  \label{equ:HB-SHB,3L,w2,1}\\
    & + \left \lvert \sum_{l = 0}^{k-1} c_l^{(k)} \left(\E_{\vz}[\dw_2(\vz,j_1,j_2;\mwshb(l))]  - \dw_2(\vz(l),j_1,j_2;\mwshb(l)) \right)\right \rvert, \label{equ:HB-SHB,3L,w2,2}
        \end{align}
\begin{align}
     \|\vwhb_1(k  ,j_1) -  \vwshb_1(k  ,j_1)\|_2 
    = & \left \| \sum_{l = 0}^{k-1} c_l^{(k)} \left( \E_{\vz}[\dw_1(\vz,j_1;\mwhb(l))]   - \dw_1(\vz(l),j_1;\mwshb(l)) \right) \right \|_2 \notag\\
    \leq & \left \| \sum_{l = 0}^{k-1} c_l^{(k)} \left( \E_{\vz}[\dw_1(\vz,j_1;\mwhb(l))]   - \E_{\vz} [\dw_1(\vz,j_1;\mwshb(l))] \right) \right \|_2 \label{equ:HB-SHB,3L,w1,1}\\
    & + \left \| \sum_{l = 0}^{k-1} c_l^{(k)} \left( \E_{\vz}[\dw_1(\vz,j_1;\mwshb(l))]   - \dw_1(\vz(l),j_1;\mwshb(l)) \right) \right \|_2 .\label{equ:HB-SHB,3L,w1,2}
\end{align}

To bound (\ref{equ:HB-SHB,3L,w3,1}), (\ref{equ:HB-SHB,3L,w2,1}) and (\ref{equ:HB-SHB,3L,w1,1}), we use the Lipschitz continuity of $\dw_3$, $\dw_2$ and $\dw_1$, together with the fact that $c^{(k)}_l \leq \varepsilon/\gamma$. In particular, 
    \begin{equation}
        \begin{split}
        &\left \lvert \sum_{l = 0}^{k-1} c_l^{(k)} \left( \E_{\vz}\dw_3(\vz,j_2;\mwhb(l)) - \E_{\vz} \dw_3(\vz,j_2;\mwshb(l)) \right)\right \rvert  \\
        &\hspace{4em}\leq  \frac{\varepsilon}{\gamma} \sum_{l = 0}^{k-1}  \left \lvert  \left( \E_{\vz}[\dw_3(\vz,j_2;\mwhb(l))] - \E_{\vz} [\dw_3(\vz,j_2;\mwshb(l))] \right)\right \rvert \\
         &\hspace{4em}\leq  K(\gamma,T) \frac{\varepsilon}{\gamma} \sum_{l = 0}^{k-1}  \dist_{l \varepsilon,\varepsilon} (\mwhb,\mwshb).
            \end{split}
    \end{equation}
Similarly, we have
    \begin{equation}
\begin{split}
        & \left \lvert \sum_{l = 0}^{k-1} c_l^{(k)} \left(\E_{\vz}[\dw_2(\vz,j_1,j_2;\mwhb(l))]  - \E_{\vz}[\dw_2(\vz,j_1,j_2;\mwshb(l))] \right)\right \rvert 
        \leq  K(\gamma,T) \frac{\varepsilon}{\gamma} \sum_{l = 0}^{k-1}  \dist_{l \varepsilon,\varepsilon} (\mwhb,\mwshb), \\
        & \left \| \sum_{l = 0}^{k-1} c_l^{(k)} \left( \E_{\vz}\dw_1(\vz,j_1;\mwhb(l))   - \E_{\vz} \dw_1(\vz,j_1;\mwshb(l)) \right) \right \|_2 
        \leq  K(\gamma,T) \frac{\varepsilon}{\gamma} \sum_{l = 0}^{k-1}  \dist_{l \varepsilon,\varepsilon} (\mwhb,\mwshb).
    \end{split}        
    \end{equation}

To bound (\ref{equ:HB-SHB,3L,w1,2}),(\ref{equ:HB-SHB,3L,w2,2}) and (\ref{equ:HB-SHB,3L,w3,2}), we first define the filtration $\mathcal{F}_{3}(k)$ as the sigma-algebra generated by $( \{w_3(0,j_2)\}_{j_2 \in [n_2]}, \vz(0), ...,\vz(k))$. We define the filtration $\mathcal{F}_{2}(k), \mathcal{F}_{1}(k)$ in the same way. Let us recall that, in a one-pass algorithm, we take i.i.d. samples at each step and, hence, we can write, for all $l \in \left\{ 1,...,\lfloor \frac{T}{\varepsilon} \rfloor \right\}$,
    \begin{align*}
         &\E_{\vz(l)} \left[\dw_3(\vz(l),j_2;\mwshb(l)) \big\rvert  \mathcal{F}_{3}(l-1) \right] = \E_{\vz}\dw_3(\vz,j_2;\mwshb(l)), \\
        &\E_{\vz(l)}\left[\dw_2(\vz(l),j_1,j_2;\mwshb(l)) \big\rvert  \mathcal{F}_{2}(l-1) \right] = \E_{\vz}\dw_2(\vz,j_1,j_2;\mwshb(l)), \\  
        & \E_{\vz(l)} \left[ \dw_1(\vz(l),j_1;\mwshb(l))\big\rvert  \mathcal{F}_{1}(l-1) \right] = \E_{\vz}\dw_1(\vz,j_1;\mwshb(l)).
    \end{align*}
    Clearly, we have that $\left\{ \dw_3(\vz(l),j_2;\mwshb(l)),  l \in \left\{ 1,...,\lfloor \frac{T}{\varepsilon} \rfloor \right\}\right\}$ are mutually independent, which implies that
    \begin{align*}
        \dw_3(\vz(l),j_2;\mwshb(l)) -  \E_{\vz}\dw_3(\vz,j_2;\mwshb(l))
    \end{align*}
    is a martingale difference with respect to the filtration $\mathcal{F}_{3}(l)$. Thus, $\{ \sum_{l = 0}^{k-1} c_l^{(k)} \dw_3(\vz(l),j_2;\mwshb(l)) -  \E_{\vz}\dw_3(\vz,j_2;\mwshb(l)) | k \in \lfloor \frac{T}{\varepsilon} \rfloor  \} $ is a martingale  (same for $\dw_2$ and $\dw_1$). 
    Next, we show that the martingale differences are bounded, so that we can use martingale convergence results to bound these terms. 
    
   Combining Lemma \ref{lemma:bounded,SHB,3L} with the same strategy of the a-priori estimations of Lemma \ref{lemma:priori,3L}, we have the following upper bounds: 
    \begin{align*}
        &|\E_{\vz}\dw_3(\vz,j_2;\mwshb(k)) - \dw_3(\vz(k),j_2;\mwshb(k))| \leq K_1,\\ &|\E_{\vz}\dw_2(\vz,j_1,j_2;\mwshb(k) - \dw_2(\vz(k),j_1,j_2;\mwshb(k)|\leq K_1(\gamma,T),\\
        &|\E_{\vz}\dw_1(\vz,j_1;\mwshb(k)) - \dw_1(\vz(k),j_1;\mwshb(k)) | \leq K_1(\gamma,T).
    \end{align*}
    Thus, an application of Lemma \ref{lemma: azuma-hoeffding} gives
    \begin{align*}
        &\Pr \left[ \max_{k \in \lfloor T/\varepsilon \rfloor}\left \lvert \sum_{l = 0}^{k-1} c_l^{(k)} \left( \E_{\vz}\dw_3(\vz,j_2;\mwshb(l)) - \dw_3(\vz(l),j_2;\mwshb(l)) \right)\right \rvert \geq K \sqrt{T \epsilon}(1 + \delta_3) \right]  
        \leq \exp(-\delta_3^2), \\
         &\Pr \left[ \max_{k \in \lfloor T/\varepsilon \rfloor}\left \lvert \sum_{l = 0}^{k-1} c_l^{(k)} \left(\E_{\vz}\dw_2(\vz,j_1,j_2;\mwshb(l))  - \dw_2(\vz(l),j_1,j_2;\mwshb(l)) \right)\right \rvert \geq K(\gamma,T) \sqrt{T \epsilon}(1 + \delta_2) \right] \notag \\
         &\hspace{40em}\leq \exp(-\delta_2^2), \\
         &\Pr \left[ \max_{k \in \lfloor T/\varepsilon \rfloor}\left \| \sum_{l = 0}^{k-1} c_l^{(k)} \left( \E_{\vz}\dw_1(\vz,j_1;\mwshb(l))   - \dw_1(\vz(l),j_1;\mwshb(l)) \right) \right \|_2  \geq K(\gamma,T) \sqrt{T \epsilon}(\sqrt{D} + \delta_1) \right] \notag \\
         &\hspace{40em} \leq \exp(-\delta_1^2).
    \end{align*}
    By taking a union bound over $j_1,j_2$, we have that, with probability at least $1 - \exp( - \delta^2)$,
    \begin{align*}
        &\max_{j_1,j_2} \max_{k \in \lfloor T/\varepsilon \rfloor} \left\{  
       \quad \left \lvert \sum_{l = 0}^{k-1} c_l^{(k)} \left( \E_{\vz}\dw_3(\vz,j_2;\mwshb(l)) - \dw_3(\vz(l),j_2;\mwshb(l)) \right)\right \rvert, \right.\\
        & \quad \left \lvert \sum_{l = 0}^{k-1} c_l^{(k)} \left(\E_{\vz}\dw_2(\vz,j_1,j_2;\mwshb(l))  - \dw_2(\vz(l),j_1,j_2;\mwshb(l)) \right)\right \rvert , \\
        & \left. \quad \left \| \sum_{l = 0}^{k-1} c_l^{(k)} \left( \E_{\vz}\dw_1(\vz,j_1;\mwshb(l))   - \dw_1(\vz(l),j_1;\mwshb(l)) \right) \right \|_2  \right \} 
        \leq K(\gamma,T) \sqrt{T \epsilon}(\sqrt{D \log(n_1 n_2)} + \delta).
    \end{align*}
Combining the results, we conclude that, with probability at least $1 - \exp( - \delta^2)$,
\begin{align*}
    \dist_{T,\varepsilon}(\mwhb,\mwshb) \leq K(\gamma,T) \sqrt{T \epsilon}(\sqrt{D \log(n_1 n_2)} + \delta) +  K(\gamma,T) \frac{\varepsilon}{\gamma} \sum_{l = 0}^{k-1}  \dist_{l \varepsilon,\varepsilon} (\mwhb,\mwshb).
\end{align*}
An application of the discrete Gronwall's lemma gives the desired result (\ref{eq:p5}) and concludes the proof. 
\end{proof}

\subsection{Proof of Theorem \ref{thm:chaos3L}} 

\begin{proof}[Proof of Theorem \ref{thm:chaos3L}]
The proof follows from combining Proposition \ref{prop:MFODE-PD,3L}, \ref{prop:PD-HB,3L}, \ref{prop:HB-SHB,3L} and the fact that:
\begin{align*}
     \dist_{T}(\mW,\mwshb) \leq  \dist_{T}(\mW,\mwpd) + \dist_{T}(\mwpd,\mwhb) + \dist_{T,\varepsilon}(\mwhb,\mwshb).
\end{align*}

\end{proof}

\section{Global convergence of the mean-field ODE}
\label{section:app,global convergence}
In this section, we aim to prove the global convergence result through the recipe below:
\begin{enumerate}
    \item We show the following degeneracy property for the mean-field ODE: there exist deterministic functions $\vw_1^{*}(\cdot, \cdot) : \R^{\geq 0} \times \R^D \xrightarrow[]{} \R^D, w_2^{*}(\cdot, \cdot,\cdot,\cdot) : \R^{\geq 0} \times \R^D \times \R \times \R \xrightarrow[]{} \R, w_3^{*}(\cdot, \cdot) : \R^{\geq 0} \times \R \xrightarrow[]{} \R$ such that
    \begin{equation}
        \begin{split}
        \vw_1(t,C_1) &= \vw_1^{*} (t, \vw_1(0,C_1)), \\
        w_2(t,C_1,C_2) &= w_2^{*} (t,\vw_1(0,C_1), w_2(0,C_1,C_2), w_3(0,C_2)), \\
        w_3(t,C_2) &= w_3^{*} (t, w_3(0,C_2)) .
        \end{split}
    \end{equation}
    \item We show that \emph{(i)} $\vw_1^{*}(\cdot, \cdot)$ is continuous in both arguments for any finite $t$, and that \emph{(ii)} if $\vw_1(0,C_1)$ is full support, then $\vw_1(t,C_1)$ is full support for any finite $t$.
    \item Combining the argument that $\vw_1(t,C_1)$ is full support for all finite $t$ and the mode of convergence assumption, we show that the mean-field ODE must converge to the global minimum.
\end{enumerate}

We first show the degeneracy property of the mean-field ODE in the following lemma:
\begin{lemma}
\label{lemma:degen MFODE,3L}
Under Assumptions \ref{Assump 3:1bound,cont} - \ref{Assump 3:3init momen},  there exist deterministic functions $\vw_1^{*}(\cdot, \cdot) : \R^{\geq 0} \times \R^D \xrightarrow[]{} \R^D, w_2^{*}(\cdot, \cdot,\cdot,\cdot) : \R^{\geq 0} \times \R^D \times \R \times \R \xrightarrow[]{} \R, w_3^{*}(\cdot, \cdot) : \R^{\geq 0} \times \R \xrightarrow[]{} \R$ such that
    \begin{align*}
        \vw_1(t,C_1) &= \vw_1^{*} (t, \vw_1(0,C_1)), \\
        w_2(t,C_1,C_2) &= w_2^{*} (t,\vw_1(0,C_1), w_2(0,C_1,C_2), w_3(0,C_2)), \\
        w_3(t,C_2) &= w_3^{*} (t, w_3(0,C_2)) .
    \end{align*}

\end{lemma}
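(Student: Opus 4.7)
My plan is to piggyback on the Picard-type iteration used to prove existence (Proposition \ref{prop:contraction,3L}) and show that the degeneracy property is preserved by every iterate, so it is inherited by the unique fixed point. Initialize with the constant trajectory $\mW^{(0)}(t) \equiv \mW(0)$, which trivially has the claimed form. For the induction step, assume the $k$-th iterate satisfies
\begin{align*}
\vw_1^{(k)}(t, c_1) &= \vw_1^{*,(k)}(t, \vw_1(0, c_1)), \\
w_2^{(k)}(t, c_1, c_2) &= w_2^{*,(k)}(t, \vw_1(0, c_1), w_2(0, c_1, c_2), w_3(0, c_2)), \\
w_3^{(k)}(t, c_2) &= w_3^{*,(k)}(t, w_3(0, c_2)),
\end{align*}
and verify that one application of the Picard operator $H_T$ from Appendix \ref{section:app,chaos3l} produces an iterate with the same structural form.

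The crucial ingredient is the cross-layer independence built into the i.i.d. initialization (Definition \ref{def:iidinit}): for each fixed $c_2 \in \Omega_2$, the joint law of $(\vw_1(0,C_1), w_2(0,C_1,c_2))$ under $C_1 \sim \mathbb{P}_1$ equals the product $\rho_0^1 \times \rho_0^2$ and is independent of $c_2$; symmetrically, for each fixed $c_1 \in \Omega_1$, the joint law of $(w_2(0,c_1,C_2), w_3(0,C_2))$ under $C_2 \sim \mathbb{P}_2$ equals $\rho_0^2 \times \rho_0^3$ and is independent of $c_1$. Using these marginalization identities, one propagates the degeneracy through the forward and backward passes (\ref{equ:forward,MF,3L})--(\ref{equ:backward,MF,3L}): $H_1(\vx,c_1;\mW^{(k)}(u))$ depends on $c_1$ only through $\vw_1(0,c_1)$; the expectation $\mathbb{E}_{C_1}[w_2^{(k)}(u,C_1,c_2)\,\sigma_1(H_1(\vx,C_1;\mW^{(k)}(u)))]$ defining $H_2$ depends on $c_2$ only through $w_3(0,c_2)$, because the factor $w_2(0,C_1,c_2)$ is integrated against the $c_2$-independent joint law above; and consequently $\dw_3, \dH_2, \dw_2, \dH_1, \dw_1$ each retain the appropriate degenerate dependence (with $\dw_1$ relying on the symmetric marginalization over $C_2$ to eliminate the $c_1$-dependence of $w_2(0,c_1,C_2)$). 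Plugging these into the integral form of the Picard update defines $\vw_1^{*,(k+1)}, w_2^{*,(k+1)}, w_3^{*,(k+1)}$, completing the induction.

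Finally, Banach's fixed point theorem (invoked exactly as in Proposition \ref{prop:contraction,3L}) gives $\mW^{(k)} \to \mW$ in the metric $\dist_{T_0}$ on some interval $[0,T_0]$, and since this metric controls the iterates uniformly in the dummy variables $c_1, c_2$, the degenerate form passes to the limit and yields $\vw_1^*, w_2^*, w_3^*$ on $[0,T_0]$. The concatenation argument of Theorem \ref{thm:existence and uniquess} (split $[0,T]$ into sub-intervals of length $T_0$ and restart from the previous endpoint, which itself has the degenerate form) then extends the result to arbitrary $T$. The main difficulty I anticipate is the careful bookkeeping of which arguments of the $*$-functions each quantity depends on: this is precisely what forces $w_3(0,c_2)$ to appear as an argument of $w_2^*$ (via the presence of $w_3(t,c_2)$ in $\dH_2$, hence in the update for $w_2$), and it is what makes the cross-layer independence genuinely essential, in line with the remark following Assumption \ref{Assum:3L}.
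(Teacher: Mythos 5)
Your proposal is correct, but it proves the lemma by a genuinely different mechanism than the paper. The paper introduces a \emph{reduced dynamics} $\mwrd$ in which each gradient is replaced by its conditional expectation given the appropriate sigma-algebra ($S_1$, $S_{123}$, $S_3$); this auxiliary dynamics is measurable by construction, and a Gronwall argument (Corollary \ref{corollary: simplified Pachpatte}) shows $\dist_T(\mW,\mwrd)=0$, the key step being that $\E_{\vz}\dw_3(\vz,C_2;\mwrd(t))$ is already $S_3$-measurable (and analogously for $\dw_2,\dw_1$), which is delegated to the structural argument of \cite[Appendix D.2]{pham2021global}. You instead show that the Picard operator of Appendix \ref{section:app,chaos3l} preserves the degenerate form and pass it to the fixed point. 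The core ingredient is identical in both proofs --- the propagation of degeneracy through the forward and backward passes via the marginalization identities of the i.i.d.\ neuronal embedding --- so neither route avoids that structural lemma; what yours buys is a self-contained argument that does not require introducing conditional expectations or a non-self-contained auxiliary dynamics, at the cost of two extra bookkeeping steps that you should make explicit. First, the marginalization identities hold for $\mathbb{P}_2$-a.e.\ $c_2$ (resp.\ $\mathbb{P}_1$-a.e.\ $c_1$) rather than for every fixed $c_2$; this is exactly the content of the Pham--Nguyen embedding lemma and suffices, since the conclusion is an a.s.\ identity, and since measurability with respect to $S_1$, $S_{123}$, $S_3$ is preserved under the a.e.\ limits furnished by convergence in $\dist_{T_0}$. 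Second, in the concatenation step the restarted integral equation on $[T_0,2T_0]$ involves not only $\mW(T_0)$ but also the momentum $\vr(T_0)$, so you must observe that $\vr_1(T_0,c_1)$, $r_2(T_0,c_1,c_2)$, $r_3(T_0,c_2)$ are themselves integral functionals of the degenerate trajectory on $[0,T_0]$ and hence inherit the same measurability; with that noted, the composition of degenerate maps across sub-intervals closes the argument.
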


\begin{proof}[Proof of Lemma \ref{lemma:degen MFODE,3L}]
We follow the proof in \cite[Appendix D.2]{pham2021global}.
To shorten the notations, we make the following definition:
we define the sigma-algebras generated by $\vw_1(0,C_1)), \left(\vw_1(0,C_1), w_2(0,C_1,C_2), w_3(0,C_2)\right), w_3(0,C_2)$ as $S_{1}, S_{123}, S_{3}$ respectively. The lemma is equivalent to prove that $\vw_1(t,C_1), w_2(t,C_1,C_2),  w_3(t,C_2)$ are $S_{1}, S_{123}, S_{3}$-measurable, respectively.

In order to prove the measurability result, we define a reduced dynamics as follows:
\begin{align*}
    \wrd_3(t,c_2) &= \wrd_3(0,c_2) - \gamma \int_{0}^{t} (\wrd_3(s,c_2)- \wrd_3(0,c_2)) \, ds  -   \int_{0}^{t} \int_{0}^{s}  \E \left[\dw_3(\vz,C_2;\mW(u)) | S_3 \right] \, du \, ds, \\
    \wrd_2(t,c_1,c_2) &= \wrd_2(0,c_1,c_2) - \gamma \int_{0}^{t} (\wrd_2(s,c_1,c_2) - \wrd_2(0,c_1,c_2)) \, ds \\
    &\hspace{18em}- \int_{0}^{t} \int_{0}^{s}  \E\left[\dw_2(\vz,C_1,C_2;\mW(u)) | S_{123}\right] \, du \, ds, \\
    \vwrd_1(t,c_1) &= \vwrd_1(0,c_1) - \gamma \int_{0}^{t} (\vwrd_1(s,c_1)- \vwrd_1(0,c_1) )\, ds  -   \int_{0}^{t} \int_{0}^{s} \E \left[ \dw_1(\vz,C_1; \mW(u))|S_1 \right] \, du \, ds .
\end{align*}

Note the reduced dynamics $\vwrd_1,\wrd_2,\wrd_3$ is clearly $S_{1}, S_{123}, S_{3}$-measurable. Furthermore, the reduced dynamics is not self-contained, in the sense that the gradient terms $\E \left[\dw_3(\vz,C_2;\mW(t)) | S_3 \right]$, $\E\left[\dw_2(\vz,C_1,C_2;\mW(t)) | S_{123}\right]$ and $ \E \left[ \dw_1(\vz,C_1; \mW(t))|S_1 \right]$ are induced by the mean-field ODE $\mW(t)$.
    
In order to state the next result, we define the following metric:
\begin{align*}
    \dist_{T}(\mW,\mW') =    \max \big\{ &\sup_{t \in [0,T]} \esssup_{c_1} \|\vw_1(t,c_1) - \vw'_1(t,c_1)\|_2 ,\\
    &\sup_{t \in [0,T]} \esssup_{c_1,c_2}|w_2(t,c_1,c_2) - w'_2(t,c_1,c_2)|, \\ 
    &\sup_{t \in [0,T]} \esssup_{c_2} |w_3(t,c_2) - w'_3(t,c_2)| \big\} .
\end{align*}
Next, we aim to show that the reduced dynamics is equivalent to the mean-field ODE, i.e., for any $T>0$,
\begin{align*}
     \dist_{T}(\mW,\mwrd) = 0.
\end{align*}
The key step is to prove that
\begin{align}
   \esssup_{} \sup_{t \in [0,T]}| \E \left[\dw_3(\vz,C_2;\mW(t)) | S_3 \right] -\E_{\vz} \dw_3(\vz,C_2;\mW(t))| \leq K(\gamma,T) \dist_{T}(\mW,\mwrd), \label{eq:ess1}\\
   \esssup_{} \sup_{t \in [0,T]} |\E\left[\dw_2(\vz,C_1,C_2;\mW(t)) | S_{123}\right]  - \E_{\vz}\dw_2(\vz, C_1, C_2;\mW(t))| \leq K(\gamma,T) \dist_{T}(\mW,\mwrd),\label{eq:ess2} \\
   \esssup_{} \sup_{t \in [0,T]} \| \E \left[ \dw_1(\vz,C_1; \mW(t))|S_1 \right]  -\E_{\vz} \dw_1(\vz,C_1; \mW(t))\|_2 \leq K(\gamma,T) \dist_{T}(\mW,\mwrd),\label{eq:ess3}
\end{align}
where $K(\gamma,T)$ is a universal constant depending only on $T, \gamma$. Here, $| \E \left[\dw_3(\vz,C_2;\mW(t)) | S_3 \right] -\E_{\vz} \dw_3(\vz,C_2;\mW(t))|$ is a random variable, and  the $\esssup$ in (\ref{eq:ess1}) is taken with respect to it. The same remark applies to the $\esssup$ in (\ref{eq:ess2}) and in (\ref{eq:ess3}), which are intended to be taken with respect to the corresponding random variables. 

We now prove that (\ref{eq:ess1}) holds.
Note that
\begin{equation}\label{eq:RD1}
    \begin{split}
    | \E \left[\dw_3(\vz,C_2;\mW(t)) | S_3 \right] - \E_{\vz} \dw_3(\vz,C_2;\mW(t))| 
    &\leq  | \E \left[\dw_3(\vz,C_2;\mW(t)) | S_3 \right] - \E \left[\dw_3(\vz,C_2;\mwrd(t)) | S_3 \right] |\\
    & + |\E \left[\dw_3(\vz,C_2;\mwrd(t)) | S_3 \right] - \E_{\vz} \dw_3(\vz,C_2;\mwrd(t))| \\
    & +  |\E_{\vz} \dw_3(\vz,C_2;\mwrd(t)) - \E_{\vz} \dw_3(\vz,C_2;\mW(t))|.
    \end{split}
\end{equation}
Using the Lipschitz continuous property of $\dw_3$, we have that:
\begin{equation}\label{eq:RD2}
    \begin{split}
    | \E \left[\dw_3(\vz,C_2;\mW(t)) | S_3 \right] - \E \left[\dw_3(\vz,C_2;\mwrd(t)) | S_3 \right] | \leq K(\gamma,T) \dist_{T}(\mW,\mwrd), \\
     |\dw_3(\vz,C_2;\mwrd(t)) - \dw_3(\vz,C_2;\mW(t))| \leq K(\gamma,T) \dist_{T}(\mW,\mwrd).
    \end{split}
\end{equation}
By following the argument in \cite[Appendix D.2]{pham2021global} (which does not depend on the dynamics, but only on the structure of the gradient), we have that $\E_{\vz}\dw_3(\vz,C_2;\mwrd(t))$ is $S_3$-measurable, i.e.,
\begin{align}\label{eq:RD3}
    |\E \left[\dw_3(\vz,C_2;\mwrd(t)) | S_3 \right] - \E_{\vz}\dw_3(\vz,C_2;\mwrd(t))| = 0.
\end{align}
By combining (\ref{eq:RD1}), (\ref{eq:RD2}) and (\ref{eq:RD3}), we obtain that (\ref{eq:ess1}) holds. The arguments giving (\ref{eq:ess2}) and (\ref{eq:ess3}) are analogous.

From this, we can compute the difference between the reduced dynamics and the mean-field ODE as
\begin{align*}
    \dist_{T}(\mW,\mwrd) &\leq \gamma \int_{0}^T \dist_{s}(\mW,\mwrd) \, ds + K(\gamma,T)  \int_{0}^T \int_{0}^s \dist_{u}(\mW,\mwrd) \, dv \, ds,
\end{align*}
which, after applying Corollary \ref{corollary: simplified Pachpatte}, gives that $\dist_{T}(\mW,\mwrd) = 0$. This implies that $\mW = \mwrd$ and, hence, $\vw_1(t,C_1), w_2(t,C_1,C_2),  w_3(t,C_2)$ are $S_{1}, S_{123}, S_{3}$-measurable, respectively.
\end{proof}

Next, 
we show the continuity of the function $\vw_1^*(\cdot, \cdot) : \R^{\geq 0} \times \R^D \xrightarrow[]{} \R^D$ in both arguments. 

\begin{lemma}\label{lemma:cnt}
Under Assumptions \ref{Assump 3:1bound,cont} - \ref{Assump 3:3init momen},  we have that, for all $t \in [0,T]$ and for all $\vu_1,\vu'_1 \in \R^D$,
\begin{align}
    & \| \vw_1^*(t, \vu_1) - \vw_1^*(t', \vu_1)\|_2 \leq K(\gamma, T) |t-t'|, \label{eq:Lip1}\\
    & \| \vw_1^*(t, \vu_1) -  \vw_1^*(t, \vu'_1) \|_2 \leq K(\gamma, T) \|\vu_1 - \vu'_1\|_2.\label{eq:Lip2}
\end{align}
\end{lemma}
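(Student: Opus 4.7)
The plan is to prove the two Lipschitz estimates separately.

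Estimate (\ref{eq:Lip1}) is essentially a direct consequence of earlier results. For $\vu_1$ in the support of $\rho_0^1$, pick $c_1 \in \Omega_1$ with $\vw_1(0, c_1) = \vu_1$; then Lemma \ref{lemma:degen MFODE,3L} gives $\vw_1^*(t, \vu_1) = \vw_1(t, c_1)$, and Lemma \ref{lemma:lip MFODE,3L} yields $\|\vw_1(t, c_1) - \vw_1(t', c_1)\|_2 \leq K(\gamma, T) |t - t'|$ with a constant uniform in $c_1$. Assumption \ref{ass:C2} guarantees that $\rho_0^1$ has full support, so via the $\vu_1$-continuity established in (\ref{eq:Lip2}) the bound extends to all $\vu_1 \in \R^D$.

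For (\ref{eq:Lip2}), the plan is to work in the degenerate representation given by Lemma \ref{lemma:degen MFODE,3L}. The crucial structural observation is that $w_3^*(t, v_3)$ does not depend on $\vu_1$ at all, but $w_2^*(t, \vu_1, v_2, v_3)$ does, and the dynamics of $\vw_1^*(t, \vu_1)$ in turn couples to $w_2^*$ through the backward pass $\dw_1$. I would therefore track
\begin{align*}
A(t) := \|\vw_1^*(t, \vu_1) - \vw_1^*(t, \vu_1')\|_2, \qquad B(t) := \esssup_{v_2, v_3} |w_2^*(t, \vu_1, v_2, v_3) - w_2^*(t, \vu_1', v_2, v_3)|
\end{align*}
simultaneously. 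Subtracting the integral forms of the ODEs for $\vw_1^*$ and $w_2^*$ along the two trajectories, using the Lipschitz and boundedness estimates on $\dw_1, \dw_2$ from Lemma \ref{lemma:priori,3L}, and exploiting the fact that the global forward-pass quantities $f, H_2$ as well as $\dH_2$ and $w_3^*$ do not depend on $\vu_1$, one obtains an inequality of the form
\begin{align*}
A(t) + B(t) \leq (1 + \gamma t) \|\vu_1 - \vu_1'\|_2 + \gamma \int_0^t (A(s) + B(s)) \, ds + K(\gamma, T) \int_0^t \int_0^s (A(u) + B(u)) \, du \, ds.
\end{align*}
Pachpatte's inequality (Corollary \ref{corollary: simplified Pachpatte}) then yields $A(t) + B(t) \leq K(\gamma, T) \|\vu_1 - \vu_1'\|_2$, giving (\ref{eq:Lip2}).

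The main technical hurdle will be carefully assembling the integral inequality. The difference $\dw_1(\vz, c_1; \mW(t)) - \dw_1(\vz, c_1'; \mW(t))$ between the trajectories initialized at $\vu_1$ and $\vu_1'$, written via $\dH_1 \vx = \E_{C_2}[\dH_2 \, w_2^* \, \sigma_1'(\vx^T \vw_1^*)] \, \vx$, splits naturally into a contribution controlled by $A(t)$ (coming from differences in $\sigma_1'(\vx^T \vw_1^*)$) and a contribution controlled by $B(t)$ (coming from differences in $w_2^*$), both multiplied by uniformly bounded factors thanks to Lemma \ref{lemma:priori,3L}. A similar but simpler decomposition applies to $\dw_2$, yielding only an $A(t)$ contribution. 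Once this decomposition is in hand, the second-order Gronwall step closes the argument routinely.
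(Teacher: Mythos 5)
Your proof of (\ref{eq:Lip2}) is essentially the paper's argument: write the integral equations for the starred dynamics, track the $\vw_1^*$- and $w_2^*$-differences jointly, exploit that $w_3^*$, $f$, $H_2$ and $\dH_2$ do not depend on $\vu_1$ so that only the two tracked quantities appear, and close with the second-order Gronwall bound of Corollary \ref{corollary: simplified Pachpatte}. The only deviation is in (\ref{eq:Lip1}), where you route through the support of $\rho_0^1$ and invoke Assumption \ref{ass:C2}, which is not among the lemma's stated hypotheses; the paper instead notes that the argument of Lemma \ref{lemma:lip MFODE,3L} applies verbatim to the starred dynamics for arbitrary $\vu_1$, which gives the time-Lipschitz bound directly without any full-support or density step.
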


\begin{proof}
In order to prove the lemma, we first need to derive the dynamics that characterize the evolution of the functions $\vw_1^*(t, \vu_1), w_2^*(t,\vu_1,u_2,u_3), w_3^*(t,u_3)$. This dynamics is induced by the mean-field ODE, whose form we recall below:
\begin{align}
    w_3(t,c_2) &= w_3(0,c_2) - \gamma \int_{0}^{t} (w_3(s,c_2)- w_3(0,c_2)) \, ds  -   \int_{0}^{t} \int_{0}^{s} \E_{\vz} \dw_3(\vz,c_2;\mW(v)) \, dv \, ds, \label{eq:ODE1}\\
    w_2(t,c_1,c_2) &= w_2(0,c_1,c_2) - \gamma \int_{0}^{t} (w_2(s,c_1,c_2) - w_2(0,c_1,c_2)) \, ds - \int_{0}^{t} \int_{0}^{s} \E_{\vz} \dw_2(\vz,c_1,c_2; \mW(v)) \, dv \, ds ,\label{eq:ODE2}\\
    \vw_1(t,c_1) &= \vw_1(0,c_1) - \gamma \int_{0}^{t} (\vw_1(s,c_1)- \vw_1(0,c_1)) \, ds  -   \int_{0}^{t} \int_{0}^{s} \E_{\vz} \dw_1(\vz,c_1;\mW(v)) \, dv \, ds. \label{eq:ODE3}
\end{align}
Recall also that $w_3(t,c_2) = w_3^*(t, w_3(0,c_2))$. Thus, in order to get the dynamics of $w_3^*(t, u_3)$, we replace $w_3(0,c_2)$ by $u_3$, $w_2(0,c_1,c_2)$ by $u_2$, and $\vw_1(0,c_1)$ by $\vu_1$ into (\ref{eq:ODE1}). By doing the same replacements into (\ref{eq:ODE2}) and (\ref{eq:ODE3}) for $w_2(t,c_1,c_2)$ and $\vw_1(t,c_1)$, respectively, we obtain
\begin{align*}
    w^*_3(t,u_3) &= u_3 - \gamma \int_{0}^{t} (w^*_3(s,u_2)- u_3) \, ds  -   \int_{0}^{t} \int_{0}^{s} \E_{\vz} \dw_3(v,\vz,u_3) \, dv \, ds, \\
    w^*_2(t,\vu_1,u_2,u_3) &= u_2 - \gamma \int_{0}^{t} (w^*_2(s,\vu_1,u_2,u_3) - u_2) \, ds - \int_{0}^{t} \int_{0}^{s} \E_{\vz} \dw_2(v,\vz,\vu_1,u_2,u_3) \, dv \, ds, \\
    \vw^*_1(t,\vu_1) &= \vu_1 - \gamma \int_{0}^{t} (\vw^*_1(s,\vu_1)-\vu_1) \, ds  -  \int_{0}^{t} \int_{0}^{s} \E_{\vz} \dw_1(v,\vz,\vu_1) \, dv \, ds ,
\end{align*}
where we have the following modified forward and backward paths: 
\begin{align*}
    &H_1(t,\vx,\vu_1) = (\vw_1^*(t,\vu_1))^T \vx ,\\
    &H_2(t,\vx,u_3) = \E_{\vu_1 \sim \rho_0^1, u_2 \sim \rho_0^2} w_2^*(t,\vu_1,u_2,u_3) \sigma_1(H_1(t,\vx,\vu_1)), \\
    &f(\vx;\mW(t)) = \E_{u_3 \sim \rho_0^3} w_3(t,u_3) H_2(t,\vx,u_3),
\end{align*}
\begin{align*}
   & \dw_3(t,\vz,u_3) = \partial_2 R(y,f(\vx;\mW(t)) ) \sigma_2(H_2(t,\vx,u_3)), \\
   & \dw_2(t,\vz,\vu_1,u_2, u_3) = \partial_2 R(y,f(\vx;\mW(t)) ) w_3(t,u_3) \sigma'_2(H_2(t,\vx,u_3)) \sigma_1(H_1(t,\vx,\vu_1)), \\
   & \dw_1(t,\vz,\vu_1) = \E_{u_2,u_3} \partial_2 R(y,f(\vx;\mW(t)) ) w_3(t,u_3) \sigma'_2(H_2(t,\vx,u_3)) w_2(t,\vu_1,u_2,u_3) \sigma'_1(H_1(t,\vx,\vu_1)) \vx.
\end{align*}
Thus, we have that
\begin{align}\label{eq:bdw01}
    \|\vw^*_1(t,\vu_1) - \vw^*_1(t,\vu'_1)\|_2 & \leq (1+\gamma t)\|\vu_1 - \vu_1'\|_2 + \gamma \int_{0}^t \|\vw^*_1(s,\vu_1) - \vw^*_1(s,\vu'_1)\|_2 \, ds \notag \\
    & \quad + \int_{0}^{t} \int_{0}^{s} \|\E_{\vz} \dw_1(v,\vz,\vu_1) -  \E_{\vz} \dw_1(v,\vz,\vu'_1)\|_2 \, dv \, ds.
\end{align}
An application of Lemma \ref{lemma:priori,3L} gives that
\begin{equation}\label{eq:bdw02}
\begin{split}
\|\E_{\vz} \dw_1(v,\vz,\vu_1) -  \E_{\vz} \dw_1(v,\vz,\vu'_1)\|_2 \leq K_1(\gamma,T) ( |w_2^*(v,\vu_1,u_2,u_3)& - w_2^*(v,\vu_1',u_2,u_3)|  \\
&+ \|\vw_1^*(v,\vu_1) - \vw_1^*(v,\vu_1')\|_2).    
\end{split}
\end{equation}
Similarly for $w^*_2$, we have that
\begin{align}\label{eq:bdw03}
    |w^*_2(t,\vu_1,u_2,u_3) - w^*_2(t,\vu'_1,u_2,u_3) | & \leq   \gamma \int_{0}^t  |w^*_2(s,\vu_1,u_2,u_3) - w^*_2(s,\vu'_1,u_2,u_3) | \, ds \notag \\
    & \quad + \int_{0}^{t} \int_{0}^{s} \|\E_{\vz} \dw_2(v,\vz,\vu_1,u_2,u_3) -  \E_{\vz} \dw_2(v,\vz,\vu'_1,u_2,u_3)\|_2 \, dv \, ds,
\end{align}
and another application of Lemma \ref{lemma:priori,3L} gives that
\begin{equation}\label{eq:bdw04}
\begin{split}
\|\E_{\vz} \dw_2(v,\vz,\vu_1,u_2,u_3) -  \E_{\vz} \dw_2(v,\vz,\vu'_1,u_2,u_3)\|_2 \leq K_2(\gamma,T) ( |w^{*}_2(v,\vu_1,u_2,u_3) &- w^{*}_2(v,\vu_1',u_2,u_3)|  \\
+ \|\vw^{*}_1(v,\vu_1) - \vw^{*}_1(v,\vu_1')\|_2).
\end{split}
\end{equation}
By combining (\ref{eq:bdw01}), (\ref{eq:bdw02}), (\ref{eq:bdw03}) and (\ref{eq:bdw04}), we obtain
\begin{align*}
     |w^{*}_2(t,&\vu_1,u_2,u_3) - w^{*}_2(t,\vu_1',u_2,u_3)|  + \|\vw^{*}_1(t,\vu_1) - \vw^{*}_1(t,\vu_1')\|_2 \\
    \leq & (1+ \gamma t) \|\vu_1 - \vu_1'\|_2 + \gamma \int_{0}^t (|w^{*}_2(s,\vu_1,u_2,u_3) - w^{*}_2(s,\vu_1',u_2,u_3)|  + \|\vw^{*}_1(s,\vu_1) - \vw^{*}_1(s,\vu_1')\|_2)  \, ds \\
    & + K_3(\gamma,T) \int_0^t \int_0^s (|w^{*}_2(v,\vu_1,u_2,u_3) - w^{*}_2(v,\vu_1',u_2,u_3)|  + \|\vw^{*}_1(v,\vu_1) - \vw^{*}_1(v,\vu_1')\|_2)  \, dv \, ds.
\end{align*}
Thus, by Corollary \ref{corollary: simplified Pachpatte}, we have that:
\begin{align*}
    |w^{*}_2(t,\vu_1,u_2,u_3) - w^{*}_2(t,\vu_1',u_2,u_3)|  + \|\vw^{*}_1(t,\vu_1) - \vw^{*}_1(t,\vu_1')\|_2 \leq  K_4(\gamma,T) \|\vu_1 - \vu_1'\|_2,
\end{align*}
which implies that $\|\vw^{*}_1(t,\vu_1) - \vw^{*}_1(t,\vu_1')\|_2  \leq K_4(\gamma,T) |\vu_1 - \vu_1'|$, and concludes the proof of (\ref{eq:Lip2}). The Lipschitz continuity (\ref{eq:Lip1}) of $\vw^{*}_1(t,\vu_1)$ is already proved in Lemma \ref{lemma:lip MFODE,3L}.
\end{proof}

At this point, we show that, if $ w_1(0,c_1) : \Omega \xrightarrow{} \R^D$ has full support, then $w_1^*(t,\vu_1)$ has full support. 

\begin{lemma}
\label{lemma:full support,3L}
Under Assumptions \ref{Assump 3:1bound,cont}-\ref{Assump 3:3init momen} and Assumption \ref{Assum:global convg,3L}, we have that $\vw_1^*(t, \vu_1)$ has full support for any $t < \infty$.
\end{lemma}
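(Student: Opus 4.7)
My strategy is to show that $\vu_1 \mapsto \vw_1^*(t,\vu_1)$ is a continuous, \emph{surjective} self-map of $\R^D$, after which pushing forward $\rho_0^1$ (which has full support by \ref{ass:C2}) yields a measure that places positive mass on every nonempty open set. The surjectivity will be obtained by writing $\vw_1^*(t,\vu_1)$ as the identity plus a uniformly bounded perturbation, and then applying Brouwer's fixed point theorem.

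First, I will solve the second-order ODE
\[
\partial_t \vw_1^*(t,\vu_1) = \vr_1^*(t,\vu_1), \qquad
\partial_t \vr_1^*(t,\vu_1) = -\gamma\, \vr_1^*(t,\vu_1) - \E_{\vz} \dw_1(t,\vz,\vu_1),
\]
with initial conditions $\vw_1^*(0,\vu_1) = \vu_1$ and $\vr_1^*(0,\vu_1) = \mathbf{0}$ (the latter from \ref{Assump 3:3init momen}). Using the integrating factor $e^{\gamma t}$ and then integrating once more in time, one obtains the explicit representation
\[
\vw_1^*(t,\vu_1) = \vu_1 + h_t(\vu_1), \qquad h_t(\vu_1) := -\int_0^t\!\!\int_0^s e^{-\gamma(s-u)}\, \E_{\vz}\dw_1(u,\vz,\vu_1)\, du\, ds.
\]
By Lemma \ref{lemma:priori,3L}, $\|\E_{\vz}\dw_1(u,\vz,\vu_1)\|_2 \le K\, K_{3,2}(\gamma,T)\, K_{3,3}(\gamma,T)$ uniformly in $u \in [0,T]$ and $\vu_1 \in \R^D$, so $\sup_{\vu_1 \in \R^D} \|h_t(\vu_1)\|_2 \le M(\gamma,T)$ for some constant $M(\gamma,T)$ (using $\int_0^s e^{-\gamma(s-u)}du \le 1/\gamma$).

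Second, I will show surjectivity of $\vw_1^*(t,\cdot)$. Fix an arbitrary target $\vw \in \R^D$ and consider the map $G_{\vw}(\vu_1) := \vw - h_t(\vu_1)$. Lemma \ref{lemma:cnt} implies that $\vw_1^*(t,\cdot)$, and hence $h_t$, is Lipschitz continuous in $\vu_1$, so $G_{\vw}$ is continuous. Moreover, for any $\vu_1 \in \bar B(\vw, M(\gamma,T))$, we have $\|G_{\vw}(\vu_1) - \vw\|_2 = \|h_t(\vu_1)\|_2 \le M(\gamma,T)$, so $G_{\vw}$ maps the closed ball $\bar B(\vw, M(\gamma,T))$ into itself. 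Brouwer's fixed point theorem then yields a fixed point $\vu_1^\star$, which satisfies $\vw_1^*(t,\vu_1^\star) = \vu_1^\star + h_t(\vu_1^\star) = \vw$. Since $\vw$ was arbitrary, $\vw_1^*(t,\cdot)$ is surjective on $\R^D$.

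Third, the pushforward conclusion is immediate. For any nonempty open $U \subset \R^D$, continuity of $\vw_1^*(t,\cdot)$ (Lemma \ref{lemma:cnt}) makes $(\vw_1^*(t,\cdot))^{-1}(U)$ open, and surjectivity makes it nonempty; assumption \ref{ass:C2} then yields $\rho_0^1\bigl((\vw_1^*(t,\cdot))^{-1}(U)\bigr) > 0$, so the law of $\vw_1(t,C_1) = \vw_1^*(t,\vw_1(0,C_1))$ has full support. The main conceptual step---and the only nontrivial one---is recognizing that the second-order damped dynamics together with the uniform bound on $\dw_1$ from Lemma \ref{lemma:priori,3L} produces a globally bounded perturbation $h_t$, which is precisely the structural ingredient needed to invoke Brouwer. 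The rest is straightforward bookkeeping using results already established in the paper.
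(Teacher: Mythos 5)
Your proposal is correct and follows essentially the same route as the paper: both write $\vw_1^*(t,\cdot)$ as the identity plus a uniformly bounded perturbation (you via the explicit integrating-factor representation, the paper via the Lipschitz-in-$t$ bound $\|\vw_1^*(t,\vu_1)-\vu_1\|_2\le K(\gamma,T)t$ from Lemma \ref{lemma:cnt}), and both obtain surjectivity by applying Brouwer's fixed point theorem to the map $\vu_1\mapsto \vw-(\vw_1^*(t,\vu_1)-\vu_1)$ restricted to a closed ball around the target. Your final pushforward step, which the paper leaves implicit, is a correct completion of the argument.
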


\begin{proof} 
By the continuity argument in Lemma \ref{lemma:cnt}, we have that  
\begin{align}
    &\| \vw_1^*(t, \vu_1) - \vu_1 \|_2 = \| \vw_1^*(t, \vu_1) - \vw_1^*(0, \vu_1) \|_2 \leq K(\gamma,T) t  ,\label{eq:LL1}\\
    &\| \vw_1^*(t, \vu_1) - \vw_1^*(t,\vu'_1 ) \|_2 \leq K(\gamma,T) \|\vu_1 - \vu'_1\|_2.\label{eq:LL2}
\end{align}
We want to show that, for any $\vx \in \R^D$, there exist a $\vv$ such that $\vw_1^*(t, \vv) = \vx$. For any $\vx \in \R^D$, define a map $g_\vx(t,\vv) = \vx - (\vw_1^*(t, \vv) - \vv)$. It is easy to see that if $\vv$ a fixed point of $g_\vx(t,\cdot)$, then $\vw_1^*(t, \vv) = \vx$ as
\begin{align*}
    g_\vx(t,\vv) = \vv \Longleftrightarrow \vx - (\vw_1^*(t, \vv) - \vv) = \vv \Longleftrightarrow \vw_1^*(t, \vv) = \vx.
\end{align*}
By (\ref{eq:LL1}), we have that $g_\vx(t,\cdot) : \R^D \xrightarrow[]{} \mathcal{B}(\vx, K(\gamma,T)t)$, where $\mathcal{B}(\vx, K(\gamma,T)t)$ is the closed ball centered at $\vx$ with radius $K(\gamma,T)t$. Now, if we restrict $g_\vx(t,\vv)$ on $\mathcal{B}(\vx, K(\gamma,T)t)$, we have that it is a map from $\mathcal{B}(\vx, K(\gamma,T)t)$, which is a compact set, to itself. Furthermore, $g_\vx(t,\vv)$ is continuous in $\vv$, since $\vw_1^*(t, \vv)$ is continuous in $\vv$ by (\ref{eq:LL2}). Thus, by the Brouwer fixed point theorem, we have that there exist a fixed point $\vv \in \mathcal{B}(\vx, K(\gamma,T)t)$, which finishes the argument.
\end{proof}

Finally, we are ready to prove the main theorem. Our proof follows similar steps as that of \cite[Proof of Theorem 8]{pham2021global}.

 \begin{proof}[Proof of Theorem \ref{thm:global convergence,3L}]
 By Assumption \ref{Assum:global convg,3L}, we have that \begin{align*}
     \lim_{t \xrightarrow{} \infty} \esssup_{C_1} \E_{C_2} [|\E_{\vz} \dw_2(\vz,C_1,C_2;\mW(t))|] = 0.
 \end{align*}
 
 By the definition of $\dw_2(t,\vz,C_1,C_2)$, we have 
 \begin{align*}
     \lim_{t \xrightarrow{} \infty} \esssup_{C_1} \E_{C_2} [|\E_{\vz} \dH_2(\vz,C_2;\mW(t)) \sigma_1(\vw_1(t, C_1)^T \vx) |] = 0.
 \end{align*}
 
 Recall from Lemma \ref{lemma:full support,3L} that, for all finite $t$, $\vw_1(t, C_1)$ has full support. Hence, we have that, for $\vu_1$ in a dense subset of $\R^D$, 
 \begin{align*}
       \lim_{t \xrightarrow{} \infty} \E_{C_2} [ | \E_{\vz} \dH_2(\vz,C_2;\mW(t)) \sigma_1(\vu_1^T \vx) | ] = 0.
 \end{align*}
 
 Our aim is to conclude that, for almost all $\vx$, we have that  $\E_{\vz} \left[ \partial_2 R(y,f(\vx;\mW(\infty))) | \vx \right] = 0$. By  definition of the backward path, we have that
 \begin{equation}\label{eq:assconvpf}
     \begin{split}
    \E_{C_2} [ & \big| \E_{\vz}\dH_2(\vz,C_2;\mW(t))\sigma_1(\vu_1^T \vx) \big| - \big| \E_{\vz} \dH_2(\vz,C_2;\mW(\infty)) \sigma_1(\vu_1^T \vx) \big| ] \\
    \leq & \E_{C_2} [ \big|\big( \E_{\vz}\dH_2(\vz,C_2;\mW(t))- \E_{\vz} \dH_2(\vz,C_2;\mW(\infty)) \big) \sigma_1(\vu_1^T \vx) \big| ] \\
    \leq & K \E_{C_2} [  \E_{\vz} \left[ \big|\dH_2(\vz,C_2;\mW(t)) - \dH_2(\vz,C_2;\mW(\infty))  \big| \right] ]  \\
    \leq & K \E_{C_1,C_2} \big[ (1+ |w_3(\infty,C_2)|) \cdot  \big( |w_3(\infty,C_2) - w_3(t,C_2)| + |w_3(\infty,C_2)| \cdot |w_2(\infty,C_1,C_2) - w_2(t,C_1,C_2)|  \notag \\
    & +  |w_3(\infty,C_2)| \cdot|w_2(\infty,C_1,C_2)| \cdot \|\vw_1(\infty,C_1) - \vw_1(t,C_1)\|_2 \big) \big].
      \end{split}
 \end{equation}
 By Assumption \ref{Assum:global convg,3L}, the RHS of (\ref{eq:assconvpf}) converges to $0$ as $t\to\infty$. Hence, by taking the limit on both sides, we have that, for $\vu_1$ in a dense subset of $\R^D$,
 \begin{align*}
     \E_{C_2} [ | \E_{\vz}  \dH_2(\vz,C_2;\mW(\infty)) \sigma_1(\vu_1^T \vx) | ] =  \lim_{t \xrightarrow{} \infty} \E_{C_2} [ | \E_{\vz} \dH_2(\vz,C_2;\mW(t)) \sigma_1(\vu_1^T \vx) | ] = 0,
 \end{align*}
 which implies that, for almost all $c_2$,
 \begin{align*}
       \big| \E_{\vz}  \dH_2(\vz,C_2;\mW(\infty)) \sigma_1(\vu_1^T \vx) \big| =0.
 \end{align*}

By definition of $\dH_2(\vz,C_2;\mW(\infty))$  we have that, for almost all $c_2$,
 \begin{align}\label{eq:cndEvz}
     \E_{\vz}  \big[ \partial_2 R(y,f(\vx;\mW(\infty)))  w_3(\infty,c_2) \sigma'_2 (H_2(\vx,c_2;\mW(\infty))) \sigma_1(\vu_1^T \vx) \big] = 0.
 \end{align}
 Note that Assumption \ref{Assump 3:1bound,cont} gives that $\sigma'_2 \neq 0$, and Assumption \ref{Assum:global convg,3L} that $w_3(\infty,c_2) \neq 0$  with probability $>0$ (where the probability is intended over $c_2$). Hence, we have that, with probability $>0$ (over $c_2$),
 \begin{align}\label{eq:cndEvz2}
     \E_{\vz}  \big[ \partial_2 R(y,f(\vx;\mW(\infty)))\sigma'_2 (H_2(\vx,c_2;\mW(\infty))) \sigma_1(\vu_1^T \vx) \big] = 0.
 \end{align}

 Recall that  $\sigma_1(\vu_1^T \vx)$ is a function of $\vx$, but $\partial_2 R(y,f(\vx;\mW(\infty)))$ depends on both $y$ and $\vx$.
 Thus, we can re-write (\ref{eq:cndEvz2}) as
 \begin{align}
 \label{equ:App,glocov,expt}
     \E_{\vx}  \big[ \E_{y}\big[\partial_2 R(y,f(\vx;\mW(\infty))) |\vx \big] \sigma'_2 (H_2(\vx,c_2;\mW(\infty))) \sigma_1(\vu_1^T \vx) \big] = 0.
 \end{align}
 Now, we want to use the universal approximation property of $\sigma_1$ to conclude that, for almost every $\vx$,
 \begin{equation}\label{eq:uniapp}
  \E_{y}\big[\partial_2 R(y,f(\vx;\mW(\infty))) |\vx \big] \sigma'_2 (H_2(\vx,c_2;\mW(\infty)))=0.   
 \end{equation}
  The idea is that linear combinations of  $\sigma_1(\vu_1^T \vx)$ can approximate any function in $\Ls_2(\mathcal{D}_{\vx})$. Thus, if $\E_{y}\big[\partial_2 R(y,f(\vx;\mW(\infty))) |\vx \big] \sigma'_2 (H_2(\vx,c_2;\mW(\infty)))$ is in $\Ls_2(\mathcal{D}_{\vx})$, we have that there exist a sequence of index sets $\{I_k\}_{k\in\mathbb N}$, such that:
  \begin{align*}
      \lim_{k \xrightarrow{} \infty } \E_{\vx} \left[ \left| \E_{y}\big[\partial_2 R(y,f(\vx;\mW(\infty))) |\vx \big] \sigma'_2 (H_2(\vx,c_2;\mW(\infty))) - \sum_{i_k \in I_{k}} a_{i_k} \sigma_1(\vu_{i_k}^T \vx)  \right|^2 \right] = 0.
  \end{align*}
  
  To simplify the notation, we define:
  \begin{align*}
      &g(\vx) = \E_{y}\big[\partial_2 R(y,f(\vx;\mW(\infty))) |\vx \big] \sigma'_2 (H_2(\vx,c_2;\mW(\infty))) , \\
      &h_k(\vx) = \sum_{i_k \in I_{k}} a_{i_k} \sigma_1(\vu_{i_k}^T \vx).  
  \end{align*}
  
  From (\ref{equ:App,glocov,expt}) and by linearity of expectation, we have that, for all $k$,
  \begin{align*}
      \E_{\vx} \left[ g(\vx) h_k(\vx) \right] = 0.
  \end{align*}
  
  Thus we have
  \begin{align*}
      0 & = \lim_{k \xrightarrow{} \infty } \E_{\vx} \left[ \left| g(\vx) - h_k(\vx) \right|^2 \right] \\
      & = \lim_{k \xrightarrow{} \infty } \E_{\vx} \left[ \left| g(\vx) \right|^2+ \left|h_k(\vx) \right|^2 - 2 g(\vx) h_k(\vx) \right] \\
      & =\lim_{k \xrightarrow{} \infty } \E_{\vx} \left[ \left| g(\vx) \right|^2+ \left|h_k(\vx) \right|^2\right],
    \end{align*}
    which implies that
    \begin{align*}
        \E_{\vx} \left[ \left| g(\vx) \right|^2 \right] = 0.
    \end{align*}
    Hence, we have that
 \begin{align*}
      \E_{\vx}  \big[ \big| \E_{y}\big[\partial_2 R(y,f(\vx;\mW(\infty))) |\vx \big] \sigma'_2 (H_2(\vx,c_2;\mW(\infty))) \big|^2 \big] = 0,
 \end{align*}
 which implies that (\ref{eq:uniapp}) holds. Furthermore, to see that $\E_{y}\big[\partial_2 R(y,f(\vx;\mW(\infty))) |\vx \big] \sigma'_2 (H_2(\vx,c_2;\mW(\infty)))$ is indeed in $\Ls_2(\mathcal{D}_{\vx})$, it suffices to note that, by Assumption \ref{Assum:3L},
 \begin{align*}
     \E_{y}\big[\partial_2 R(y,f(\vx;\mW(\infty))) |\vx \big] \sigma'_2 (H_2(\vx,c_2;\mW(\infty))) \leq K^2.
 \end{align*}
 By Assumption \ref{Assum:3L}, we also have that $\sigma'_2(x) \neq 0$ for all $x$. Hence, (\ref{eq:uniapp}) implies that, for almost every $\vx$,
 \begin{align}\label{eq:uniapp2}
     \E_{y}\big[\partial_2 R(y,f(\vx;\mW(\infty))) |\vx \big] = 0.
 \end{align}
 
 
 
 
 Since the loss is convex in $f(\vx;\mW(\infty))$, we have
 \begin{align*}
     \E_{\vz} \big[ R(y, \widetilde{f}(\vx) ) -  R(y,f(\vx;\mW(\infty))) \big]  \geq  \E_{\vx} \big[ \E_y \big[ \partial_2 R(y,f(\vx;\mW(\infty))) \big| \vx \big]  (\widetilde{f}(\vx) - f(\vx;\mW(\infty))) \big]= 0,
 \end{align*}
 where the last passage follows from (\ref{eq:uniapp2}).
Thus, we conclude that
\begin{align}\label{eq:c0}
    \E_{\vz} R(y,f(\vx;\mW(\infty))) = \inf_{\widetilde{f}} \E_{\vz} \big[ R(y, \widetilde{f}(\vx) )\big].
\end{align}

Finally, we want to show that 
\begin{equation}\label{eq:c1}
\lim_{t \xrightarrow{} \infty} \E_{\vz} R(y,f(\vx;\mW(t))) = \E_{\vz} R(y,f(\vx;\mW(\infty))).
\end{equation}
To see this, we write
\begin{align*}
   | \E_{\vz} &R(y,f(\vx;\mW(t))) - \E_{\vz} R(y,f(\vx;\mW(\infty))) | \notag \\
  \leq &  K \E_{\vz} |f(\vx;\mW(t)) - f(\vx;\mW(\infty))| \label{equ:glb conv,1}\\
  \leq & K \E_{C_1,C_2} \big[ |w_3(\infty,C_2) - w_3(t,C_2)| + |w_3(\infty,C_2)| \cdot |w_2(\infty,C_1,C_2) - w_2(t,C_1,C_2)|  \notag \\
    & +  |w_3(\infty,C_2)| \cdot|w_2(\infty,C_1,C_2)| \cdot \|\vw_1(\infty,C_1) - \vw_1(t,C_1)\|_2 \big], \notag
\end{align*}
and use again Assumption \ref{Assum:global convg,3L}. 
By combining (\ref{eq:c0}) and (\ref{eq:c1}), we obtain the desired result. 

 \end{proof}


\section{Technical lemmas}

\begin{lemma}[Corollary of McDiarmid inequality]
\cite[Lemma 30]{mei2019mean}
\label{lemma:Corollary of McDiarmid inequality}

Let $\{X_i\}_{i\in [n]} \in \R^d$ be a sequence of i.i.d random variables, with $\|X_i\|_2 \leq K$ and $\E[X_i]=0$, then we have:
\begin{align*}
    \Pr\left( \left\|\frac{1}{n} \sum_{i=1}^{n} X_i\right\|_2 \geq K(\sqrt{1/n}+z) \right)\leq \exp{(-nz^2)}.
\end{align*}
\end{lemma}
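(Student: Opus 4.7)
The plan is to combine a direct second-moment bound on $\mathbb{E}\|\bar X\|_2$ with the standard bounded-differences (McDiarmid) inequality applied to the scalar function $f(X_1,\ldots,X_n) := \|\frac{1}{n}\sum_{i=1}^n X_i\|_2$. Writing $\bar X := \frac{1}{n}\sum_i X_i$, the key observation is that the bound $K(\sqrt{1/n} + z)$ naturally splits into a ``mean'' piece of size $K/\sqrt{n}$ and a ``deviation'' piece of size $Kz$, which is exactly the shape McDiarmid produces.

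First I would handle the mean: since the $X_i$ are i.i.d., centered, and $\|X_i\|_2\leq K$, independence gives
\[
\mathbb{E}\|\bar X\|_2^2 \;=\; \frac{1}{n^2}\sum_{i=1}^n \mathbb{E}\|X_i\|_2^2 \;\leq\; \frac{K^2}{n},
\]
so by Jensen's inequality $\mathbb{E}\|\bar X\|_2 \leq K/\sqrt{n}$. Next I would verify the bounded-differences property of $f$: if $X_i$ is replaced by an independent copy $X_i'$ with $\|X_i'\|_2\leq K$, then by the reverse triangle inequality
\[
\bigl| f(X_1,\ldots,X_i,\ldots,X_n) - f(X_1,\ldots,X_i',\ldots,X_n)\bigr| \;\leq\; \tfrac{1}{n}\|X_i - X_i'\|_2 \;\leq\; \tfrac{2K}{n},
\]
so $f$ has bounded differences with constants $c_i = 2K/n$, giving $\sum_i c_i^2 = 4K^2/n$.

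Then McDiarmid's inequality yields
\[
\Pr\bigl(f - \mathbb{E}f \geq t\bigr) \;\leq\; \exp\!\left(-\frac{2t^2}{\sum_i c_i^2}\right) \;=\; \exp\!\left(-\frac{n t^2}{2K^2}\right).
\]
Substituting $t = Kz$ and combining with $\mathbb{E}f \leq K/\sqrt{n}$ gives
\[
\Pr\bigl(\|\bar X\|_2 \geq K(\sqrt{1/n} + z)\bigr) \;\leq\; \Pr\bigl(f - \mathbb{E}f \geq Kz\bigr) \;\leq\; \exp(-nz^2/2),
\]
which is the stated bound (up to the absolute constant in the exponent, which one can absorb by a harmless rescaling of $z$, or tighten by replacing McDiarmid with a vector Hoeffding/Azuma bound that saves the factor of $2$).

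There is really no serious obstacle here; the only subtle point is the translation between the ``mean + deviation'' form of the tail and the ``deviation from expectation'' form produced by concentration, which the expectation bound in the first step handles cleanly. If one wants the exponent exactly $-nz^2$, one can instead apply a vector Azuma inequality to the martingale $M_k = \mathbb{E}[\bar X \mid X_1,\ldots,X_k]$ (whose differences have norm at most $2K/n$ by the same triangle-inequality argument), which saves the constant factor at no conceptual cost.
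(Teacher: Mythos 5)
Your argument is the canonical proof of this inequality, and the paper itself offers no proof to compare against: the lemma is imported verbatim from \cite[Lemma 30]{mei2019mean}. Both of your steps check out. Independence and centering kill the cross terms, so $\E\|\bar X\|_2^2=\frac{1}{n^2}\sum_{i=1}^n\E\|X_i\|_2^2\le K^2/n$ and Jensen gives $\E\|\bar X\|_2\le K/\sqrt n$; the reverse triangle inequality gives the bounded-difference constants $c_i=2K/n$; and McDiarmid then controls the deviation of $\|\bar X\|_2$ above its mean.

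The one point that does not close is the constant in the exponent, and you are right to flag it, but neither of your proposed repairs actually recovers the statement as printed. McDiarmid with $\sum_i c_i^2=4K^2/n$ and $t=Kz$ yields $\exp(-nz^2/2)$, not $\exp(-nz^2)$. Rescaling $z\mapsto\sqrt{2}\,z$ changes the threshold to $K(\sqrt{1/n}+\sqrt{2}\,z)$, which is a strictly weaker claim than the one stated, and the vector Azuma--Hoeffding route (Lemma~\ref{lemma: azuma-hoeffding} in this paper) carries both a prefactor of $2$ and a $\sqrt{D}$ term, so it does not save the factor either; the standard form of this corollary is indeed usually stated with exponent $-nz^2/2$, so the discrepancy most likely lies in the transcription of the lemma here rather than in your argument. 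For the way the lemma is actually used (e.g.\ in the proof of Proposition~\ref{prop:MFODE-PD,2L}, where one sets $\delta'=\sqrt{(\delta^2+\log(n^{3/2}T))/n}$ and absorbs all constants into $K(\gamma,T)$), the $\exp(-nz^2/2)$ version you prove is fully sufficient; just do not present the exponent $-nz^2$ as a consequence of plain McDiarmid without an additional argument.
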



\begin{lemma}[Azuma-Hoeffding bound]
\label{lemma: azuma-hoeffding}
\cite[Lemma 31]{mei2019mean}
Let $(X_k)_{k\geq 0}$ be a martingale taking values in $\mathbb R^D$ with respect to the filtration $(\mathcal{F}_k)$, with $X_0 = 0$. Assume that the martingale difference at time $k$ is $L_k$-subgaussian, which means the following holds almost surely for all $\lambda \in \mathbb R^D$:
\begin{align*}
    \mathbb E\left[ \exp\{ \langle \lambda, X_k - X_{k-1} \rangle\} | \mathcal{F}_{k-1}\right] \leq \exp\left\{ \frac{L_k^2 \|\lambda\| ^2}{2} \right\}.
\end{align*}
Then, we have
\begin{align*}
    \Pr\left[ \max_{k \in [n]} \|X_k\|_2 \geq 2\sqrt{\sum_{k=1}^{n} L_k^2 } \left(\sqrt{D}+\delta\right) \right] \leq \exp\{-\delta^2\}.
\end{align*}

Note that, if $L_k \leq L$ for all $k$, then
\begin{align*}
    \Pr\left[ \max_{k \in [n]} \|X_k\|_2 \geq 2\sqrt{n L^2 } \left(\sqrt{D}+\delta\right) \right] \leq \exp\{-\delta^2\}.
\end{align*}

\end{lemma}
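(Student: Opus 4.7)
The plan is to reduce the vector-valued, time-maximal bound to a collection of scalar one-sided tail bounds via a standard exponential martingale argument, and then to aggregate them with an $\varepsilon$-net on the unit sphere. Fix $\lambda\in\mathbb{R}^D$ and set
\[
  M_k(\lambda) \;=\; \exp\!\Big\{ \langle \lambda, X_k\rangle \;-\; \tfrac{\|\lambda\|^2}{2}\sum_{i=1}^{k} L_i^2 \Big\}, \qquad M_0(\lambda)=1.
\]
The subgaussianity hypothesis on the increments gives $\mathbb{E}[\,e^{\langle\lambda,X_k-X_{k-1}\rangle}\mid \mathcal{F}_{k-1}]\le e^{L_k^2\|\lambda\|^2/2}$, so that $\mathbb{E}[M_k(\lambda)\mid\mathcal{F}_{k-1}]\le M_{k-1}(\lambda)$, i.e.\ $M_k(\lambda)$ is a nonnegative supermartingale. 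First I would apply Doob's (Ville's) maximal inequality: for any $t>0$,
\[
 \Pr\!\Big[\max_{k\in [n]} M_k(\lambda) \ge e^{t}\Big] \le e^{-t}.
\]

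Next, writing $S_n:=\sum_{i=1}^n L_i^2$ and using $S_k\le S_n$ for $k\le n$, the event $\{\max_k\langle\lambda,X_k\rangle \ge \tfrac{\|\lambda\|^2}{2}S_n + t\}$ is contained in $\{\max_k M_k(\lambda)\ge e^{t}\}$. Specializing to $\lambda = c\,u$ with a fixed unit vector $u\in S^{D-1}$ and optimizing $c>0$ by taking $c=\sqrt{2t/S_n}$ yields the scalar Gaussian-type tail
\[
 \Pr\!\Big[\max_{k\in[n]} \langle u, X_k\rangle \;\ge\; z\sqrt{S_n}\Big] \;\le\; e^{-z^{2}/2}, \qquad \forall\, z>0,\; \forall\, u\in S^{D-1}.
\]

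Then I would pass from a scalar tail to a norm tail using a standard $\tfrac12$-net argument. Let $\mathcal{N}\subset S^{D-1}$ be a $\tfrac{1}{2}$-net of the unit sphere, chosen so that $|\mathcal{N}|\le 5^{D}$; a volumetric estimate then gives $\|x\|_2\le 2\,\max_{u\in\mathcal{N}}\langle u,x\rangle$ for every $x\in\mathbb{R}^D$. Applied to $X_k$ and maximized over $k\le n$,
\[
 \max_{k\in[n]}\|X_k\|_2 \;\le\; 2 \,\max_{u\in\mathcal{N}}\max_{k\in[n]} \langle u, X_k\rangle .
\]
A union bound over $\mathcal{N}$ combined with the scalar tail yields
\[
 \Pr\!\Big[\max_{k\in[n]}\|X_k\|_2 \;\ge\; 2 z\sqrt{S_n}\Big] \;\le\; 5^{D}\, e^{-z^{2}/2}.
\]
Finally, choosing $z = c_0(\sqrt{D}+\delta)$ with $c_0$ large enough that $5^{D} e^{-z^2/2}\le e^{-\delta^2}$ (e.g.\ any $c_0\ge \sqrt{2\log 5 + 2}$) produces a bound of the advertised form $2\sqrt{S_n}(\sqrt{D}+\delta)$, possibly absorbing an absolute numerical constant in the comparison with the statement.

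The only real obstacle is matching the precise constant $2$ in front of $\sqrt{S_n}(\sqrt{D}+\delta)$ as written. This is purely a bookkeeping issue: the factor $2$ from the $\tfrac12$-net and the factor $\sqrt{2\log 5 + 2}$ from the union bound over $5^{D}$ points combine into some explicit universal constant $C$, so the cleanest statement one obtains from the above plan is with $C$ in place of $2$. The form quoted in the lemma (as used in the rest of the paper only for concentration, not for optimality of constants) can then be recovered either by using a finer net and more careful optimization, or by absorbing the constant into the definition of $\delta$; since the result is invoked later only up to constants depending on $\gamma$ and $T$, this looseness is immaterial.
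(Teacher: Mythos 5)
Your argument is sound, but note that the paper does not actually prove this lemma: it is imported verbatim as \cite[Lemma 31]{mei2019mean}, so there is no in-paper proof to match against. What you have written is a legitimate self-contained derivation: the exponential process $M_k(\lambda)$ is indeed a nonnegative supermartingale with $M_0=1$, Ville's inequality applies, the event inclusion using $S_k\le S_n$ is correct, the optimization over $c$ gives the one-sided scalar tail $e^{-z^2/2}$, and the $\tfrac12$-net bound $\|x\|_2\le 2\max_{u\in\mathcal N}\langle u,x\rangle$ with $|\mathcal N|\le 5^D$ is standard. The one place where your proof does not literally deliver the stated inequality is the absolute constant: the net plus union bound forces a prefactor of roughly $2\sqrt{2\log 5+2}\approx 4$ rather than $2$. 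You identify this honestly, and it is indeed immaterial here, since every invocation of the lemma in the paper (Propositions \ref{prop:HB-SHB,2l} and \ref{prop:HB-SHB,3L}) absorbs absolute constants into $K(\gamma,T)$. If you wanted the clean constant $2$ one would replace the $\varepsilon$-net step by the method of mixtures: integrate $M_k(\lambda)$ against a Gaussian measure in $\lambda$ to obtain a single scalar supermartingale whose maximal inequality controls $\|X_k\|_2$ directly, avoiding the $5^D$ union bound; this is the route that yields the dimension dependence $\sqrt D+\delta$ with a small explicit constant, and is presumably how the cited source proceeds.
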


\begin{lemma}[Pachpatte's inequality]
\cite[Chapter 1, Theorem 1.7.1]{ames1997inequalities}
\label{lemma:Pachatte}

Let $u$, $f$ and $g$ be non-negative continuous functions defined on $[0,T]$, for which the inequality 
\begin{align*}
    u(t) \leq u_0 + \int_0^t f(s) u(s) \, ds +  \int_0^t f(s) \left( \int_0^s g(r) u(r) \,dr  \right) \,ds
\end{align*}
holds, where $u_0$ is a non-negative constant. Then we have:
\begin{align*}
    u(t) \leq u_0 \left[ 1 +  \int_0^t f(s) \exp \left( \int_0^s  (g(r)+ f(r)) \,dr  \right) \,ds \right].
\end{align*}

\end{lemma}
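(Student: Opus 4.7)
The plan is to reduce this second-order Gronwall inequality to a standard first-order one by bootstrapping through an auxiliary function. First I would define the right-hand side of the hypothesis as a majorant
\begin{equation*}
v(t) := u_0 + \int_0^t f(s) u(s)\, ds + \int_0^t f(s)\left(\int_0^s g(r) u(r)\, dr\right) ds,
\end{equation*}
so that $u(t) \leq v(t)$ by assumption, $v(0) = u_0$, and $v$ is differentiable with
\begin{equation*}
v'(t) = f(t) u(t) + f(t)\int_0^t g(r) u(r)\, dr \leq f(t)\,v(t) + f(t)\int_0^t g(r) v(r)\, dr,
\end{equation*}
where I used $u \leq v$ together with nonnegativity of $f$ and $g$ to replace $u$ by $v$ inside the integrals.

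Next I would introduce the ``doubly majorized'' auxiliary function
\begin{equation*}
w(t) := v(t) + \int_0^t g(r) v(r)\, dr,
\end{equation*}
which satisfies $v \leq w$ and $w(0) = u_0$. The inequality for $v'$ rewrites as $v'(t) \leq f(t)\,w(t)$, and differentiating $w$ gives
\begin{equation*}
w'(t) = v'(t) + g(t) v(t) \leq f(t) w(t) + g(t) v(t) \leq \bigl(f(t) + g(t)\bigr) w(t).
\end{equation*}
This is a classical linear differential inequality, so the standard Gronwall lemma yields
\begin{equation*}
w(t) \leq u_0 \exp\!\left(\int_0^t \bigl(f(r) + g(r)\bigr) dr\right).
\end{equation*}

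Finally, I would feed this estimate back into $v'(t) \leq f(t) w(t)$, giving
\begin{equation*}
v'(t) \leq u_0\, f(t) \exp\!\left(\int_0^t \bigl(f(r) + g(r)\bigr) dr\right),
\end{equation*}
and integrate from $0$ to $t$ using $v(0) = u_0$ to obtain the desired bound on $v$, which in turn dominates $u$. No step is genuinely hard here: the only subtle point is choosing the right auxiliary $w$ so that the inner integral $\int_0^s g(r) u(r)\, dr$ gets absorbed into a quantity whose derivative is linear in itself; once that bookkeeping is set up correctly, everything reduces to the standard Gronwall argument plus one clean integration. All manipulations rely only on nonnegativity and continuity of $u, f, g$, which are given.
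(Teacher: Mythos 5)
Your argument is correct, and every step checks out: $v$ is $C^1$ because $u,f,g$ are continuous, the replacement of $u$ by $v$ inside the integrals is justified by nonnegativity of $f$ and $g$, the auxiliary function $w$ satisfies $w'\le (f+g)w$ with $w(0)=u_0$, and integrating $v'\le f\,w$ after applying the differential form of Gronwall gives exactly the stated bound. The paper does not prove this lemma at all — it cites it as Theorem 1.7.1 of Ames--Pachpatte — and your derivation is precisely the standard textbook proof of that result, so there is nothing to compare beyond noting that you have supplied the proof the paper outsources to its reference.
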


\begin{corollary}[Pachpatte's inequality for constants]
\label{corollary: simplified Pachpatte}
Let $u$ be a non-negative continuous function defined on $[0,T]$, and $\gamma,K$ be positive real numbers. Assume the following inequality holds:
\begin{align*}
    u(t) \leq u_0 + \gamma \int_0^t   u(s) \, ds +  K \int_0^t \int_0^s u(r) \,dr  \,ds.
\end{align*}
Then, we have
    \begin{align*}
        u(t) &\leq u_0 \left( 1 + \frac{\gamma^2 }{\gamma^2+K} \exp\left(\frac{\gamma^2+K}{\gamma} t\right) \right) \leq u_0 \left( 1 + \exp\left(\frac{\gamma^2+K}{\gamma} t\right) \right).
    \end{align*}
\end{corollary}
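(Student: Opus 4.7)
The plan is to obtain this corollary as a direct specialization of Pachpatte's inequality (Lemma \ref{lemma:Pachatte}), which is stated immediately before the corollary. The task reduces to choosing the functions $f$ and $g$ correctly and then doing a short explicit computation of the exponential integral.

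Concretely, I would set $f(s) = \gamma$ and $g(r) = K/\gamma$, both constant on $[0,T]$, so that $f(s) u(s) = \gamma u(s)$ and $f(s) g(r) u(r) = K u(r)$. This matches the two integral terms in the hypothesis exactly, so the assumed inequality falls within the scope of Lemma \ref{lemma:Pachatte}. Plugging these choices into the conclusion of Pachpatte's inequality yields
\begin{equation*}
    u(t) \leq u_0 \left[1 + \int_0^t \gamma \exp\!\left(\int_0^s (K/\gamma + \gamma)\, dr\right) ds \right] = u_0 \left[1 + \gamma \int_0^t \exp\!\left( \tfrac{\gamma^2+K}{\gamma} s \right) ds \right].
\end{equation*}

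Evaluating the remaining elementary integral gives
\begin{equation*}
    u(t) \leq u_0 \left[1 + \frac{\gamma^2}{\gamma^2+K} \Big(\exp\!\left(\tfrac{\gamma^2+K}{\gamma} t\right) - 1\Big)\right] = u_0 \left[\frac{K}{\gamma^2+K} + \frac{\gamma^2}{\gamma^2+K} \exp\!\left(\tfrac{\gamma^2+K}{\gamma} t\right)\right].
\end{equation*}
Since $K/(\gamma^2+K) \leq 1$, this is bounded above by $u_0\bigl(1 + \tfrac{\gamma^2}{\gamma^2+K}\exp(\tfrac{\gamma^2+K}{\gamma} t)\bigr)$, which is the first stated inequality. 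The second stated inequality is then immediate from $\gamma^2/(\gamma^2+K) \leq 1$.

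There is essentially no obstacle here: the entire argument is a matter of identifying the correct constants in an already-quoted lemma and carrying out a one-line integration. The only point that requires a brief comment is the mild slack in the final step (bounding $K/(\gamma^2+K)$ by $1$), which is needed to absorb the constant term into the stated form; if one wanted the sharpest constant, the intermediate expression above would be kept instead.
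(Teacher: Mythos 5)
Your proposal is correct and matches the derivation the paper intends: the choice $f\equiv\gamma$, $g\equiv K/\gamma$ reproduces the hypothesis of Lemma \ref{lemma:Pachatte} exactly, and the elementary integration plus the bound $K/(\gamma^2+K)\le 1$ yields precisely the stated constants. The paper gives no separate proof of this corollary, so there is nothing further to compare.
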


\end{document}